\documentclass{article}

\PassOptionsToPackage{table}{xcolor}
\usepackage{report}

\usepackage[utf8]{inputenc} 
\usepackage[T1]{fontenc}    
\usepackage{hyperref}       
\usepackage{url}            
\usepackage{booktabs}       
\usepackage{amsfonts}       
\usepackage{nicefrac}       
\usepackage{microtype}      
\usepackage{xcolor}         

\usepackage{mathtools}

\usepackage{latexsym}
\usepackage{graphicx}
\usepackage{amsmath}
\usepackage{amssymb}
\usepackage{booktabs}
\usepackage{multirow}
\usepackage{array}
\usepackage{tikz}
\usepackage{pgfplots}
\pgfplotsset{compat=1.18}
\usepgfplotslibrary{fillbetween}
\usetikzlibrary{positioning,arrows.meta,calc,shapes.geometric,patterns,fit,backgrounds,decorations.pathreplacing,shapes}
\usepackage{float}
\usepackage[font=small]{caption}
\usepackage{capt-of}
\usepackage{hyperref}
\usepackage{geometry}
\usepackage{natbib}
\usepackage{enumitem}
\usepackage{xcolor}
\usepackage{dblfloatfix}
\usepackage{placeins}
\usepackage{algorithm}
\usepackage{algorithmic}
\usepackage{wrapfig} 
\usepackage{tcolorbox}
\tcbuselibrary{breakable}

\definecolor{cBlue}{HTML}{3B82F6}
\definecolor{cGreen}{HTML}{10B981}
\definecolor{cOrange}{HTML}{F59E0B}
\definecolor{cRed}{HTML}{EF4444}
\definecolor{cPurple}{HTML}{8B5CF6}
\definecolor{cGray}{HTML}{6B7280}
\definecolor{cDark}{HTML}{1F2937}
\definecolor{cLight}{HTML}{F3F4F6}
\definecolor{njuPurple}{RGB}{220,205,230}
\definecolor{njuPurpleLight}{RGB}{250,245,252}

\newtcolorbox{abstractbox}{
    colback=njuPurpleLight,
    colframe=njuPurple,
    boxrule=1pt,
    arc=4mm,
    left=8pt,
    right=8pt,
    top=8pt,
    bottom=8pt,
    opacityback=0.95,
    breakable
}

\usepackage{amsthm}
\newtheorem{theorem}{Theorem}[section]   
\newtheorem{lemma}[theorem]{Lemma}

\newtheorem{assumption}{Assumption}[section]
\newtheorem{proposition}{Proposition}[section]   

\usepackage{tikz}
\usepackage{pgfplots}
\usepgfplotslibrary{groupplots}
\pgfplotsset{compat=1.18}

\usepackage{etoolbox}

\let\oldmathbb\mathbb

\renewcommand{\mathbb}[1]{%
  \ifstrequal{#1}{1}{\mathbf{1}}{\oldmathbb{#1}}%
}
\title{From Solver Feedback to Faithful Plans: Multi-Role Reinforcement Learning for Symbolic Planning}

\author{%
\begin{tabular}{cccccc}
Chenghao Zhang$^{1}$ &
Yikai Mao$^{1}$ &
Shanqi Liu$^{2}$ &
Haoyu Gao$^{3}$ &
SaiSai Hu$^{4}$ &
Dan Roth$^{1}$ \\
\end{tabular}
}

\begin{document}

\maketitle

\let\oldthefootnote\thefootnote
\let\thefootnote\relax
\footnotetext{$^1$~University of Pennsylvania.}
\footnotetext{$^{2}$~Unaffiliated.}
\footnotetext{$^{3}$~Georgia Institute of Technology.}
\footnotetext{$^{4}$~Pace University.}
\let\thefootnote\oldthefootnote

\begin{abstract}
\begin{abstractbox}
Reliable planning requires converting natural-language instructions into executable symbolic specifications, yet large language models remain brittle without costly PDDL annotations and may exploit solver success in semantically unfaithful ways.
We study how to learn faithful natural-language-to-PDDL formalization using only solver feedback, without human-written demonstrations.
We propose a solver-grounded multi-role reinforcement learning framework where a single language model acts as an Actor, Judge, and Editor for generation, verification, and repair.
The Actor proposes PDDL specifications, the Judge provides a solver-calibrated quality signal, and the Editor performs bounded diagnostic-conditioned refinement.
On PlanBench, our method improves average success from 35.5\% for LLM+P to 70.8\%, achieves 66.3\% faithful success, and reduces semantic drift to 6.4\%.
These results show that organizing solver feedback into generation, verification, and repair roles enables more scalable and faithful annotation-free symbolic planning.
\end{abstractbox}
\end{abstract}

\section{Introduction}
\label{sec:introduction}

Structured planning is a central capability for intelligent agents that must transform high-level instructions into executable action models under constraints. 
It is critical for robotics, workflow automation, logistics, and interactive decision-making, where success depends not only on producing plausible language but also on satisfying explicit state-transition semantics. 
Large language models (LLMs) have shown impressive general-purpose reasoning and instruction-following abilities \citep{brown2020language,openai2023gpt4,touvron2023llama,lin2026cec}, yet reliable planning remains difficult because valid plans require logical consistency, long-horizon state tracking, and faithful grounding in task constraints.

Recent benchmarks have made this limitation increasingly clear. 
PlanBench shows that LLMs often fail on classical planning domains, especially when predicate names are obfuscated and surface lexical cues are removed \citep{valmeekam2024planbench}. 
This supports the broader view that LLMs often behave as approximate retrievers rather than systematic planners over state transitions \citep{kambhampati2024can}. 
Even reasoning-enhanced models and chain-of-thought prompting remain brittle under planning-specific perturbations, suggesting that fluent intermediate reasoning is not equivalent to executable planning \citep{stechly2024cot,valmeekam2024lrm}. 
NATURAL PLAN further shows that realistic natural-language planning becomes sharply harder as constraint complexity increases, and self-correction does not reliably recover valid solutions \citep{zheng2024naturalplan}.

A promising response is to use LLMs not as direct planners, but as formalizers that translate natural-language tasks into symbolic representations such as Planning Domain Definition Language (PDDL), after which an external solver performs the actual planning. 
LLM+P demonstrates the value of this neuro-symbolic decomposition by delegating search to a classical planner \citep{liu2023llmp}, and subsequent work explores PDDL goal translation, generalized planning programs, and environment-aided PDDL construction \citep{xie2023goals,silver2024generalized,mahdavi2024leveragingenvironmentinteractionautomated}. 
However, these approaches still face a fundamental supervision bottleneck: high-quality natural-language-to-PDDL formalization typically requires curated demonstrations, fixed domain assumptions, or supervised initialization. 
Instruction-tuning methods such as PDDL-Instruct improve symbolic planning performance, but their reliance on annotated examples limits scalability to new domains and language distributions \citep{verma2025pddlinstruct}.

\begin{figure}[t]
    \centering
    \includegraphics[width=0.52\linewidth]{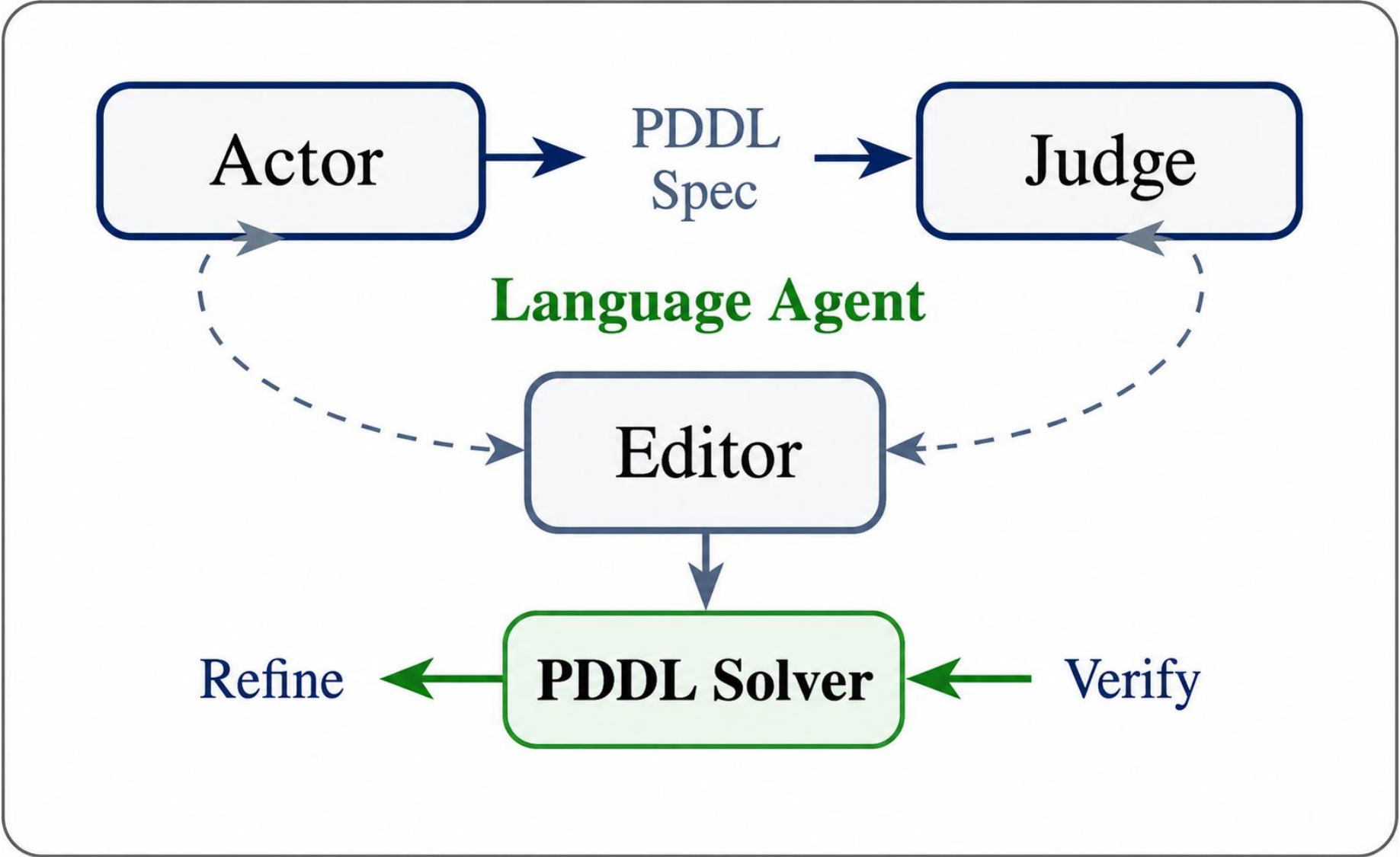}
    \caption{One Brain Three Roles framework. A single LLM plays Actor, Judge, and Editor roles with shared parameters, grounded by deterministic PDDL solver feedback.}
    \label{fig:framework}
\end{figure}

The deeper challenge is that solver feedback alone is not automatically a faithful learning signal. 
A PDDL solver can verify whether a generated specification is syntactically valid and executable, but solver success is defined with respect to the generated formal specification rather than the original natural-language intent. 
Thus, optimizing only for solver success can reward underspecified or semantically shifted formalizations that are easy to solve but no longer faithful to the task. 
This creates a high-level conflict between \emph{annotation-free learning} and \emph{semantic faithfulness}: the former demands replacing human labels with environment feedback, while the latter requires preventing the model from exploiting imperfections or ambiguities in that feedback. 
Similar concerns arise in broader reinforcement learning settings, where optimizing a proxy reward can induce reward gaming when the reward does not fully capture the intended objective \citep{skalse2022defining}.

This paper asks: \emph{Can an LLM learn reliable symbolic planning from solver feedback alone, without any human-annotated PDDL demonstrations?} 
We address this question by treating the symbolic solver as the only grounded oracle while organizing learning around generation, verification, and repair. 
At a high level, the model learns not only to produce a candidate formal specification, but also to evaluate solver-grounded correctness and refine failed specifications using executable diagnostics, thereby converting sparse solver outcomes into a more informative training process.

We propose \textbf{Solver-Grounded Multi-Role Reinforcement Learning}, a zero-annotation framework for natural-language-to-PDDL planning. 
A single LLM is conditioned to play three coordinated roles: an Actor that proposes formal specifications, a Judge that learns a solver-calibrated quality signal, and an Editor that performs bounded repair from solver diagnostics. 
This design preserves the scalability of solver-only supervision while reducing the risk that optimization collapses into purely solver-facing shortcuts.

Our contributions are as follows:
\begin{itemize}
    \item We identify a key obstacle in annotation-free symbolic planning: raw solver success is a necessary but insufficient learning signal because it can diverge from semantic faithfulness to the original natural-language task. This reframes NL-to-PDDL learning as a problem of solver-grounded optimization under potential specification drift.
    
    \item We introduce a multi-role reinforcement learning framework in which generation, verification, and repair are learned jointly from deterministic solver feedback, without human-annotated PDDL demonstrations. The shared-role design enables verifier co-evolution and diagnostic-conditioned refinement while keeping most model parameters coupled across roles.
    
    \item Across PlanBench domains, our method achieves strong in-domain planning success, including substantial gains on obfuscated and structurally challenging settings. It also transfers zero-shot to ProntoQA and NATURAL PLAN, indicating that solver-grounded multi-role learning improves not only benchmark-specific PDDL synthesis but also broader planning and constraint-satisfaction behavior.
\end{itemize}

\section{Related Work}
\label{sec:related}

\paragraph{LLMs as planners.}
A first line asks whether scaling and prompting alone are sufficient for planning. 
Chain-of-thought and search-based deliberation can improve intermediate reasoning \citep{wei2022chain,yao2023tree,hao2023rap}, but planning benchmarks show that fluent rationales do not guarantee executable state-transition reasoning. 
PlanBench and follow-up analyses reveal large gaps between textual plausibility and valid planning behavior, especially under predicate obfuscation, longer horizons, and reasoning-model variants \citep{valmeekam2024planbench,kambhampati2024can,stechly2024cot,valmeekam2024lrm}. 
NATURAL PLAN further shows that realistic language planning degrades sharply as constraints become more complex \citep{zheng2024naturalplan}. 
These works motivate execution-grounded evaluation, whereas we use solver feedback as the training signal, optimizing planning competence through repeated interaction with a symbolic verifier.

\paragraph{LLM-as-formalizer and neuro-symbolic planning.}
A second line shifts from direct plan generation to inducing formal representations that planners can solve. 
LLM+P demonstrates this paradigm by translating natural language into PDDL, invoking a classical planner, and verbalizing the resulting plan \citep{liu2023llmp}; subsequent work studies goal translation, generalized planning programs, domain generation, and text-to-PDDL evaluation \citep{xie2023goals,silver2024generalized,oswald2024planningdomain,zuo2025planetarium,zhang2024proc2pddl}. 
Recent studies show that LLM-as-formalizer can outperform direct planning, but still suffers from semantic omissions, naturalness shift, and scaling failures in large formal structures \citep{huang2025limitlanguagemodelsplanning,jiang2026plannersformalizers}. 
Supervised methods such as PDDL-Instruct improve performance with annotated instruction tuning \citep{verma2025pddlinstruct}, while environment-interaction methods refine PDDL through feedback \citep{mahdavi2024leveragingenvironmentinteractionautomated}. 
In contrast, we formulate NL-to-PDDL induction as annotation-free reinforcement learning, where an Actor generates specifications, a Judge learns solver-calibrated scoring, and an Editor performs diagnostic-conditioned repair.

\paragraph{Feedback-driven self-improvement and verifiable rewards.}
A third line explores feedback-driven LLM self-improvement without dense human supervision. 
Reflexion, Self-Refine, ISR-LLM, and AdaPlanner use critique--revise or environment-feedback loops \citep{shinn2023reflexion,madaan2023selfrefine,zhou2024isrllm,sun2023adaplanner}, while LLM-as-a-judge methods reduce annotation cost but risk evaluator bias and reward hacking without external grounding \citep{gu2025surveyllmasajudge,li2025generationjudgmentopportunitieschallenges}. 
Recent RLVR and self-play systems show that verifiable feedback can scale reasoning training \citep{shao2024deepseekmath,deepseek2025r1,zhao2025absolutezeroreinforcedselfplay,chen2025multiagentevolvellmselfimprove}, but they mainly target curated tasks or code/math verifiers and do not address the gap between solver success and semantic faithfulness. 
Our work brings verifiable-reward learning to symbolic planning by grounding Actor, Judge, and Editor roles in a deterministic PDDL solver, converting sparse executability feedback into calibrated verification and repair signals.

\section{Method}
\label{sec:method}

\begin{figure*}[t]
\centering
\includegraphics[width=\textwidth]{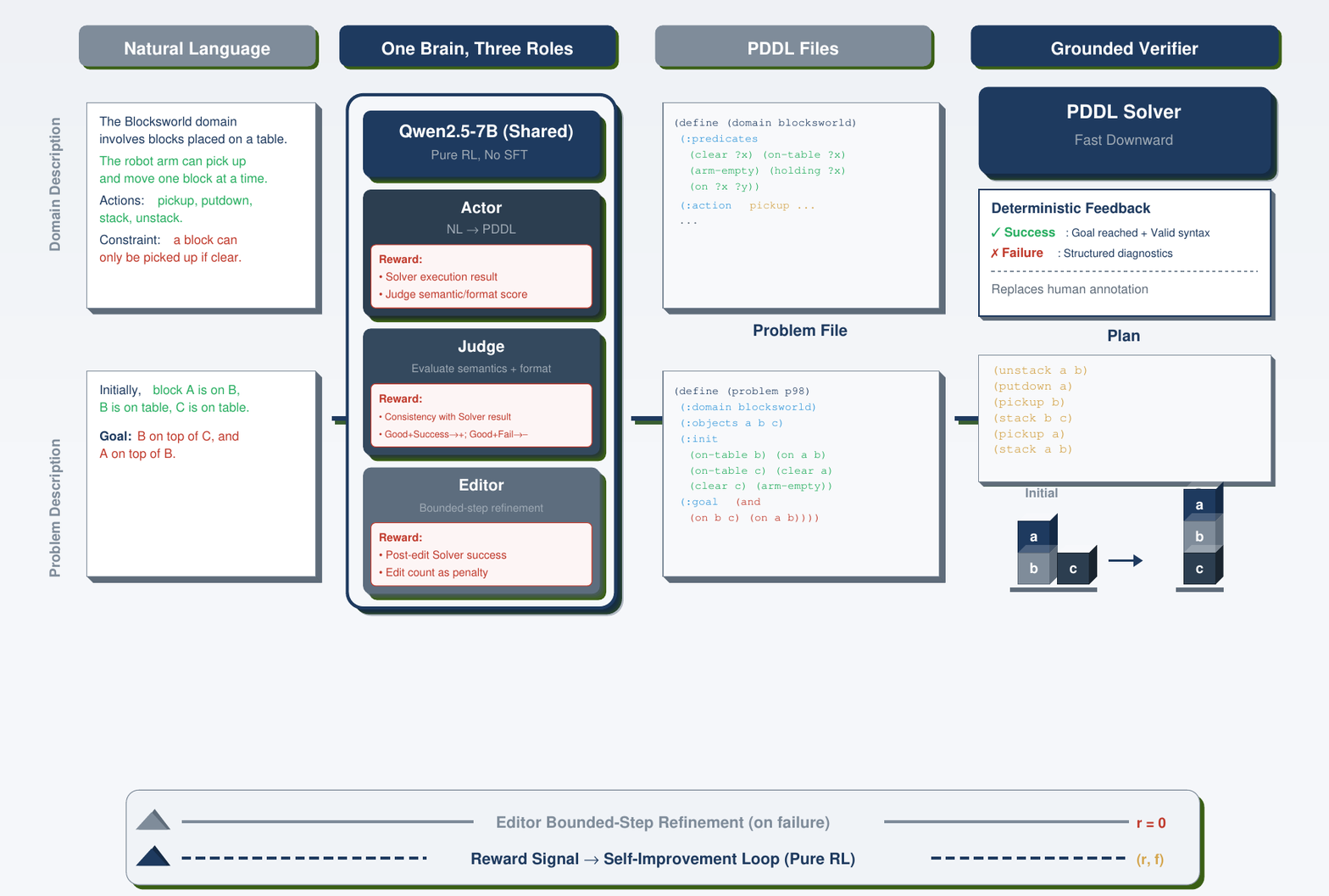}
\caption{\textbf{Solver-Grounded Multi-Role RL for NL-to-PDDL Planning.}
A single shared LLM backbone is conditioned into Actor, Judge, and Editor roles. The Actor generates an initial PDDL specification. A deterministic PDDL Solver verifies executability, producing a success label and diagnostics  on failure. The Judge predicts a calibrated solver score, and the Editor uses diagnostics to perform bounded local edits, feeding back to the solver.}
\label{fig:architecture}
\end{figure*}

\subsection{Task and Solver-Grounded Environment}
\label{sec:task-env}

We study natural-language-to-PDDL specification induction without human annotations. Each task is a natural-language planning description \(x\sim\mathcal{P}_{\mathrm{data}}\). The model outputs a PDDL specification
$
  y=(y^{\mathrm{dom}},y^{\mathrm{prob}}),
$
consisting of a domain file and a problem file. A deterministic PDDL environment \(\mathcal{E}\), implemented with Fast Downward, verifies the generated specification and returns
\begin{equation}
  (b,d)=\mathcal{E}(y),
  \qquad
  b\in\{0,1\},
  \label{eq:solver-feedback}
\end{equation}
where \(b=1\) indicates that the specification is syntactically valid and solver-executable within the time limit, and \(d\) contains structured diagnostics such as parse errors, type mismatches, unsatisfied preconditions, unreachable goals, and failed action traces.

The central challenge is that solver success is not identical to task-level semantic faithfulness. Because the model controls the generated specification, optimizing only \(b\) can reward degenerate specifications that weaken goals, remove constraints, or alter predicates to make the instance easier than the original task. Our method keeps the solver as the only external feedback source, but separates the learning problem into generation, calibration, and repair roles so that sparse solver outcomes become more useful for policy optimization.

\subsection{Role-Conditioned Multi-Role Policy}
\label{sec:role-policy}

We use a single language model with shared backbone parameters and lightweight role-specific parameters. Let
\(\rho\in\{\mathrm{A},\mathrm{J},\mathrm{E}\}\) denote the role identifier for Actor, Judge, and Editor. Each role has a learned embedding \(e_\rho\), and the role-conditioned model defines
\begin{equation}
  \pi_\theta(\cdot\mid c,\rho)
  =
  \mathrm{LM}_{\theta}\!\left(\cdot\mid [e_\rho;c]\right),
  \label{eq:role-policy}
\end{equation}
where \(c\) is the role-specific context. The Actor context is \(x\), the Judge context is \((x,y)\), and the Editor context is \((x,y_t,d_t)\), where \(y_t\) is the current specification and \(d_t\) is the latest solver diagnostic.

The parameters are partitioned as
\begin{equation}
  \theta=\theta_{\mathrm{sh}}\cup\theta_{\mathrm{A}}\cup\theta_{\mathrm{J}}\cup\theta_{\mathrm{E}} .
\end{equation}
The shared backbone \(\theta_{\mathrm{sh}}\) contains approximately \(95\%\) of the parameters, while the remaining parameters are lightweight role-specific heads. This design allows each role to specialize while keeping most representation learning coupled across generation, verification, and repair.

\subsection{Solver-Grounded Training Episode}
\label{sec:training-episode}

Algorithm~\ref{alg:training-episode} summarizes one training episode. The Actor first samples an initial PDDL specification. The solver verifies it and returns a binary success label with diagnostics. The Judge predicts a scalar score for the generated specification. If the initial specification fails, the Editor performs at most \(H_{\max}\) repair steps, each conditioned on the task, the current specification, and the latest diagnostic. The episode stops when the solver succeeds or the repair budget is exhausted.

\begin{algorithm}[t]
\caption{One solver-grounded multi-role training episode}
\label{alg:training-episode}
\begin{algorithmic}[1]
\REQUIRE Task \(x\), solver \(\mathcal{E}\), maximum repair horizon \(H_{\max}\)
\STATE Sample initial specification \(y_0\sim\pi_\theta(\cdot\mid x,\mathrm{A})\)
\STATE Query solver \((b_0,d_0)=\mathcal{E}(y_0)\)
\STATE Compute Judge score \(s_J(x,y_0)\)
\STATE Set \(T\gets 0\), \(y_T\gets y_0\), \(b_T\gets b_0\), \(d_T\gets d_0\)
\FOR{\(t=0,\ldots,H_{\max}-1\)}
  \IF{\(b_T=1\)}
    \STATE \textbf{break}
  \ENDIF
  \STATE Sample repaired specification \(y_{t+1}\sim\pi_\theta(\cdot\mid x,y_t,d_t,\mathrm{E})\)
  \STATE Query solver \((b_{t+1},d_{t+1})=\mathcal{E}(y_{t+1})\)
  \STATE Set \(T\gets t+1\), \(y_T\gets y_{t+1}\), \(b_T\gets b_{t+1}\), \(d_T\gets d_{t+1}\)
\ENDFOR
\STATE Update Actor, Judge, and Editor using Eq.~\eqref{eq:role-objectives} and Eq.~\eqref{eq:total-loss}
\end{algorithmic}
\end{algorithm}

At inference time, the same procedure is used without gradient updates. The system returns the first solver-executable specification found within the repair horizon; if no repair succeeds, it returns the final edited specification for evaluation. Decoding and optimization details are given in Appendix~\ref{app:training-details}.

\subsection{Role-Specific Learning Signals}
\label{sec:role-objectives}

The Actor learns global specification generation, the Judge learns solver-calibrated verification, and the Editor learns diagnostic-conditioned repair. The Judge score is produced by a scalar head on top of the shared representation:
\begin{equation}
  s_J(x,y)
  =
  \sigma\!\left(g_\theta([e_{\mathrm{J}};x;y])\right)
  \in(0,1),
  \label{eq:judge-score}
\end{equation}
where \(g_\theta\) is the Judge logit head and \(\sigma\) is the sigmoid function.

For an episode with initial output \(y_0\), initial solver label \(b_0\), final output \(y_T\), final solver label \(b_T\), and repair length \(T\), the role-specific learning signals are
\begin{equation}
\begin{aligned}
  R_{\mathrm{A}}(x,y_0)
  &=
  b_0+\lambda_{\mathrm{J}}\,s_J(x,y_0),\\
  \mathcal{L}_{\mathrm{J}}(x,y_0,b_0)
  &=
  \mathrm{BCE}\!\left(s_J(x,y_0),b_0\right)
  +\lambda_{\mathrm{band}}\ell_{\mathrm{band}}\!\left(s_J(x,y_0),b_0\right),\\
  R_{\mathrm{E}}(x,y_{0:T})
  &=
  b_T-\beta T .
\end{aligned}
\label{eq:role-objectives}
\end{equation}
The Actor reward combines the initial solver label with Judge-based shaping. The Judge loss trains \(s_J\) to predict solver success and penalizes over-confident false positives. The Editor reward favors successful repairs with fewer editing steps.

Actor and Editor token policies are optimized with PPO, while the Judge is optimized as a calibrated binary predictor. The total loss is
\begin{equation}
  \mathcal{L}_{\mathrm{total}}
  =
  \mathcal{L}_{\mathrm{PPO}}^{\mathrm{A}}
  +\lambda_{\mathrm{E}}\mathcal{L}_{\mathrm{PPO}}^{\mathrm{E}}
  +\lambda_{\mathrm{J}}\mathcal{L}_{\mathrm{J}} .
  \label{eq:total-loss}
\end{equation}
Appendix~\ref{app:training-details} specifies the PPO objective, banded Judge penalty, decoding policy, and hyperparameters.

\subsection{Judge Calibration and Correctness Connection}
\label{sec:judge-calibration}

The Judge is trained against solver outcomes rather than human semantic labels. Its direct target is therefore solver-verified executability. The connection to task-level correctness depends on how often solver success agrees with a reference correctness notion on the policy-induced distribution. Let \(B\in\{0,1\}\) denote solver success and \(C\in\{0,1\}\) denote task-level correctness for a generated pair \((X,Y)\). If the on-policy disagreement rate satisfies \(\Pr(B\neq C)\le\varepsilon\), and the learned Judge has calibration error \(\delta_J\) with respect to the Bayes-optimal solver-success predictor, then
\begin{equation}
  \mathbb{E}_{(X,Y)}
  \left[
    \left|s_J(X,Y)-C(X,Y)\right|
  \right]
  \le
  \varepsilon+\delta_J .
  \label{eq:judge-correctness-main}
\end{equation}
Thus, the Judge provides a reliable shaping signal when solver success and task-level correctness have limited disagreement on generated samples. Appendix~\ref{app:judge-analysis} proves Eq.~\eqref{eq:judge-correctness-main}, gives the corresponding ranking guarantee, and describes how to estimate the disagreement rate.

\subsection{Why Separate Actor, Judge, and Editor Roles}
\label{sec:why-roles}

The three roles address different sources of difficulty in annotation-free symbolic planning. The Actor performs a global mapping from natural language to a complete PDDL domain--problem pair. The Judge converts sparse solver outcomes into a calibrated score that can guide learning. The Editor uses solver diagnostics to make bounded local repairs after failure. A single final-outcome policy can leave the initial specification weakly constrained once local repair becomes strong, because final solver success may no longer distinguish good initial formalizations from poor but repairable ones. Appendix~\ref{app:actor-editor-analysis} formalizes this failure mode.

\subsection{Parameter Sharing and Reward-Hacking Directions}
\label{sec:sharing-theory}

We now analyze why the shared-backbone design is less permissive than three fully separate role models in a local reward-hacking sense. Let \(J_{\mathrm{A}}(\theta)\) denote the Actor objective and let \(C(\theta)\) denote expected task-level correctness. Around a reference point \(\theta_0\), define the local reward-hacking gradient
\begin{equation}
  g_{\mathrm{hack}}(\theta_0)
  =
  \nabla_{\theta}J_{\mathrm{A}}(\theta_0)
  -
  a\nabla_{\theta}C(\theta_0),
  \qquad a>0 .
  \label{eq:ghack-main}
\end{equation}
For a unit-norm perturbation \(\Delta\theta\), the first-order Actor gain not explained by correctness is
\[
  \Delta_{\mathrm{hack}}(\theta_0;\Delta\theta)
  =
  g_{\mathrm{hack}}(\theta_0)^\top \Delta\theta .
\]
The admissible perturbation set depends on the architecture. With three separate models, the Actor can move in its own \(P_{\mathrm{A}}\)-dimensional parameter space without directly changing Judge or Editor behavior. With a shared backbone, Judge- and Editor-neutral perturbations are restricted to the small role-private subspace, provided the shared backbone directions are observed by Judge and Editor objectives. Appendix~\ref{app:sharing-analysis} states the assumptions and proof.

\begin{theorem}[Reward hacking scaling with parameter count]
\label{thm:hacking-scaling-main}
Under the local isotropic-gradient model and the shared-backbone constraint in Appendix~\ref{app:sharing-analysis}, the expected worst-case first-order reward-hacking gain satisfies:
\begin{enumerate}
  \item \textbf{Three fully separate models.}
  If the Actor has \(P_{\mathrm{A}}\) parameters, then
  \begin{equation}
    \mathbb{E}
    \left[
      \sup_{\Delta\theta\in\mathcal{H}_{\mathrm{sep}}}
      \Delta_{\mathrm{hack}}(\theta_0;\Delta\theta)
    \right]
    \asymp
    \sigma\sqrt{P_{\mathrm{A}}}.
    \label{eq:sep-scaling-main}
  \end{equation}

  \item \textbf{Shared-backbone architecture.}
  If the Judge- and Editor-neutral perturbations are contained in an Actor head of dimension \(h\), then
  \begin{equation}
    \mathbb{E}
    \left[
      \sup_{\Delta\theta\in\mathcal{H}_{\mathrm{shr}}}
      \Delta_{\mathrm{hack}}(\theta_0;\Delta\theta)
    \right]
    \lesssim
    \sigma\sqrt{h}.
    \label{eq:shr-scaling-main}
  \end{equation}
\end{enumerate}
\end{theorem}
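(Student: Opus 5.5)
The argument reduces both bounds to the expected norm of a projection of an isotropic Gaussian vector, since the supremum of a linear functional over a unit ball in a subspace is the norm of the projected gradient.

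\textbf{Modelling assumption.} Following Appendix~\ref{app:sharing-analysis}, I would treat the local reward-hacking gradient \(g_{\mathrm{hack}}(\theta_0)\) of Eq.~\eqref{eq:ghack-main} as isotropic. That is, its restriction to any coordinate subspace of dimension \(k\) is distributed as \(\mathcal{N}(0,\sigma^2 I_k)\). For a linear subspace \(S\), I take the admissible set to be \(\mathcal{H}=\{\Delta\theta\in S:\|\Delta\theta\|_2\le 1\}\). By Cauchy--Schwarz, with equality at \(\Delta\theta=P_S g_{\mathrm{hack}}/\|P_S g_{\mathrm{hack}}\|\),
\[
\sup_{\Delta\theta\in\mathcal{H}}\Delta_{\mathrm{hack}}(\theta_0;\Delta\theta)=\|P_S g_{\mathrm{hack}}(\theta_0)\|_2 .
\]
So in both cases the task is to estimate \(\mathbb{E}\|P_S g\|_2\).

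\textbf{Part 1 (separate models).} Here \(S\) is the full Actor parameter space, with \(\dim S=P_{\mathrm{A}}\). Changes in the Actor parameters leave the Judge and Editor objectives unchanged, so no further constraint applies. The projected gradient \(P_S g\) is then \(\sigma\) times a standard Gaussian in \(\mathbb{R}^{P_{\mathrm{A}}}\). I would sandwich its expected norm as follows:
\begin{itemize}
\item Upper bound: Jensen's inequality gives \(\mathbb{E}\|P_S g\|\le\sqrt{\mathbb{E}\|P_S g\|^2}=\sigma\sqrt{P_{\mathrm{A}}}\).
\item Lower bound: the exact chi mean \(\sqrt2\,\Gamma((k+1)/2)/\Gamma(k/2)\) is at least \(k/\sqrt{k+1}\ge \sqrt{k/2}\).
\end{itemize}
Together these give \(\mathbb{E}\sup\asymp\sigma\sqrt{P_{\mathrm{A}}}\), which is Eq.~\eqref{eq:sep-scaling-main}.

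\textbf{Part 2 (shared backbone).} The key step is to identify the admissible set. A perturbation is Judge- and Editor-neutral to first order if it is orthogonal to \(\nabla_\theta\mathcal{L}_{\mathrm{J}}\) and to \(\nabla_\theta J_{\mathrm{E}}\). More strongly, I would use the appendix's "observed backbone" assumption: every backbone direction changes Judge or Editor behaviour at first order. This requires the Jacobians of the Judge and Editor outputs with respect to \(\theta_{\mathrm{sh}}\) to have trivial joint kernel. Under it, any neutral \(\Delta\theta\) has \(\Delta\theta_{\mathrm{sh}}=0\). Perturbations of \(\theta_{\mathrm{J}}\) and \(\theta_{\mathrm{E}}\) either alter those roles or do not affect the Actor at all. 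Hence \(\mathcal{H}_{\mathrm{shr}}\) lies in the unit ball of the Actor head subspace \(S_h\), with \(\dim S_h=h\).

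Because \(\mathcal{H}_{\mathrm{shr}}\) is contained in that ball, its supremum is bounded by the one over the whole ball. Applying the Jensen bound from Part 1 then gives \(\mathbb{E}\sup\le\sigma\sqrt{h}\), which is Eq.~\eqref{eq:shr-scaling-main}. Only "\(\lesssim\)" is claimed, consistent with containment rather than equality.

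\textbf{Main obstacle.} The difficulty lies in the containment step, not in the Gaussian estimates. Exact neutrality is a condition on the kernel of a Jacobian, and a Jacobian is generically of low rank relative to the dimension of \(\theta_{\mathrm{sh}}\). So the trivial-kernel condition is genuinely an assumption that has to be imported from Appendix~\ref{app:sharing-analysis}, not derived.

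If only approximate neutrality is available, I would try a relaxation. Define \(\mathcal{H}_{\mathrm{shr}}\) by the constraint \(\|J_{\mathrm{J,E}}\Delta\theta\|\le\eta\). Neutral directions with a backbone component must then have \(\|\Delta\theta_{\mathrm{sh}}\|\le\eta/s_{\min}\), where \(s_{\min}\) is the smallest singular value of the Jacobian restricted to \(\theta_{\mathrm{sh}}\). This adds a term of order \(\sigma\sqrt{P_{\mathrm{sh}}}\,\eta/s_{\min}\), which is negligible when \(\eta\) is small. I would state this as a remark rather than as part of the theorem.
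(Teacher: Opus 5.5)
Your proposal is correct and follows essentially the same route as the paper. Cauchy--Schwarz reduces each supremum to the norm of the projected hacking gradient, Jensen gives the $\sigma\sqrt{d}$ upper bound, and a chi-mean estimate gives the matching lower bound in the separate case; in the shared case, the imported no-neutral-backbone-direction assumption (the paper's $\mathrm{Null}(\Sigma_{\mathrm{JE}})=\{0\}$, with $\Delta\theta_{\mathrm{J}}=\Delta\theta_{\mathrm{E}}=0$ built into $\mathcal{H}_{\mathrm{shr}}$) confines perturbations to the Actor head. The differences are minor: you phrase neutrality via Jacobian kernels rather than the second-moment matrix of Judge/Editor gradients, and your bound $\mathbb{E}\chi_k\ge k/\sqrt{k+1}\ge\sqrt{k/2}$ holds uniformly for $k\ge 1$, whereas the paper's $\sqrt{d-1}$ degenerates at $d=1$.
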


The theorem shows that separate models allow reward-hacking gains to grow with the Actor parameter count. 
By contrast, a shared backbone exposes most directions to Judge and Editor objectives, leaving only the small Actor head as the main role-private subspace. 
Thus, parameter sharing does not eliminate reward hacking, but reduces the local degrees of freedom for improving Actor reward without corresponding verification and repair changes.

\begin{table}[t]
\centering
\small
\setlength{\tabcolsep}{3.2pt}
\caption{
Planning success rate (\%) on PlanBench and zero-shot transfer benchmarks.
BW denotes BlocksWorld and MBW denotes Mystery BlocksWorld.
Avg. is the mean over the four PlanBench domains.
}
\label{tab:main-and-transfer}
\resizebox{0.9\textwidth}{!}{
\begin{tabular}{@{}lccccc|ccc@{}}
\toprule
\textbf{Method}
& \textbf{BW}
& \textbf{MBW}
& \textbf{Logistics}
& \textbf{Gripper}
& \textbf{Avg.}
& \textbf{ProntoQA}
& \textbf{Trip}
& \textbf{Calendar} \\
\midrule
LLM$^{\text{CoT}}$
& \(24.0{\scriptstyle\pm4.3}\)
& \(0.0{\scriptstyle\pm0.0}\)
& \(3.0{\scriptstyle\pm1.7}\)
& \(23.0{\scriptstyle\pm4.2}\)
& \(12.5{\scriptstyle\pm1.6}\)
& \(66.0{\scriptstyle\pm4.7}\)
& \(8.0{\scriptstyle\pm2.7}\)
& \(34.0{\scriptstyle\pm4.7}\) \\
LLM$^{\text{ToT}}$
& \(3.0{\scriptstyle\pm1.7}\)
& \(3.0{\scriptstyle\pm1.7}\)
& \(7.0{\scriptstyle\pm2.6}\)
& \(13.0{\scriptstyle\pm3.4}\)
& \(6.5{\scriptstyle\pm1.2}\)
& -- & -- & -- \\
LLM+P
& \(87.0{\scriptstyle\pm3.4}\)
& \(37.0{\scriptstyle\pm4.8}\)
& \(18.0{\scriptstyle\pm3.8}\)
& \(0.0{\scriptstyle\pm0.0}\)
& \(35.5{\scriptstyle\pm1.8}\)
& \(4.0{\scriptstyle\pm2.0}\)
& \(16.0{\scriptstyle\pm3.7}\)
& \(12.0{\scriptstyle\pm3.2}\) \\
\midrule
\rowcolor{gray!20}
\textbf{Ours}
& \(\mathbf{98.0}{\scriptstyle\pm1.4}\)
& \(\mathbf{71.0}{\scriptstyle\pm4.5}\)
& \(\mathbf{58.0}{\scriptstyle\pm4.9}\)
& \(\mathbf{56.0}{\scriptstyle\pm5.0}\)
& \(\mathbf{70.8}{\scriptstyle\pm2.1}\)
& \(\mathbf{90.0}{\scriptstyle\pm3.0}\)
& \(\mathbf{48.0}{\scriptstyle\pm5.0}\)
& \(\mathbf{40.0}{\scriptstyle\pm4.9}\) \\
\bottomrule
\end{tabular}
}
\end{table}

\section{Experiments}

\subsection{Experimental Setup}
\label{sec:exp-setup}

\textbf{Benchmarks.}
We evaluate in-domain planning performance on PlanBench~\citep{valmeekam2024planbench}, using four PDDL-derived domains: BlocksWorld, Mystery BlocksWorld, Logistics, and Gripper. Mystery BlocksWorld obfuscates object and predicate names, making it a diagnostic test of planning beyond lexical memorization. We further evaluate zero-shot transfer on ProntoQA~\citep{saparov2023languagemodelsgreedyreasoners} and the Trip Planning and Calendar Scheduling subsets of NATURAL PLAN~\citep{zheng2024naturalplan}. Detailed benchmark descriptions are provided in Appendix~\ref{app:experimental-details}.

\textbf{Evaluation.}
For PlanBench, generated domain--problem pairs are evaluated with Fast Downward under a 60-second timeout. An instance is counted as solved only if the generated PDDL is syntactically valid and the returned plan satisfies the goal conditions. For ProntoQA and NATURAL PLAN, we follow the original evaluation protocol and report exact-match success.

\textbf{Model.}
Our method is built on Qwen2.5-7B~\citep{qwen2025qwen25technicalreport} initialized from pretrained weights. We use no human-annotated PDDL demonstrations and no supervised fine-tuning on planning data; learning is driven only by solver-grounded multi-role reinforcement signals described in Section~\ref{sec:role-objectives}.

\textbf{Baselines.}
We compare with three zero-annotation baselines: Chain-of-Thought prompting~\citep{wei2022chain}, Tree-of-Thought search~\citep{yao2023tree}, and LLM+P~\citep{liu2023llmp}. Unless otherwise stated, these baselines use GPT-4o as the underlying model. Appendix~\ref{app:experimental-details} gives the full baseline configurations.

\subsection{Main results}
\label{sec:results-analysis}

\textbf{Overall planning performance and zero-shot transfer.}
Table~\ref{tab:main-and-transfer} summarizes the main in-domain and out-of-domain results. 
Our method achieves the best performance on all PlanBench domains, with an average success rate of \(70.8\%\), compared with \(35.5\%\) for LLM+P and much lower averages for prompting-only baselines. 
The gains are largest on Mystery BlocksWorld, Logistics, and Gripper, where lexical shortcuts are less useful and small symbolic errors can invalidate the whole plan. 
The same model also transfers zero-shot to ProntoQA and NATURAL PLAN, suggesting that solver-grounded multi-role training improves general constraint satisfaction rather than only memorizing PlanBench-specific formats. 
Additional discussion is provided in Appendix~\ref{app:main-result-discussion}.

\textbf{Controlled comparison under matched backbone and solver budget.}
To rule out the possibility that the gains come from a stronger backbone or more solver access, Table~\ref{tab:controlled-comparison-main} compares all methods under a matched \(K=6\) solver-call budget. 
Our method still outperforms the strongest same-backbone baseline by \(21.0\) points in average PlanBench success, while using fewer solver calls on average. 
It also achieves the highest faithful success and the lowest semantic drift, indicating that trained role specialization is more effective than prompting-based diagnostic revision alone. 
The controlled protocol is detailed in Appendix~\ref{app:controlled-protocol}.

\begin{table}[t]
\centering
\small
\caption{
Controlled comparison under matched solver-call budget. 
All methods are allowed at most \(K=6\) solver calls per test instance. 
Faithful denotes solver-successful and semantically faithful outputs. 
Drift is the conditional fraction of solver-successful outputs that fail semantic checking.
}
\label{tab:controlled-comparison-main}
\resizebox{0.92\textwidth}{!}{
\begin{tabular}{@{}lcccccccccc@{}}
\toprule
\textbf{Method}
& \textbf{Backbone}
& \textbf{BW}
& \textbf{MBW}
& \textbf{Logistics}
& \textbf{Gripper}
& \textbf{Avg.}
& \textbf{Avg. Calls}
& \textbf{Faithful}
& \textbf{Drift} \(\downarrow\) \\
\midrule
Qwen-CoT 
& Qwen2.5-7B 
& 31 & 1 & 5 & 25 & 15.5 
& 5.8 & 11.6 & 25.2 \\
Qwen-ToT 
& Qwen2.5-7B 
& 16 & 5 & 9 & 18 & 12.0 
& 5.9 & 9.1 & 24.4 \\
Qwen-LLM+P 
& Qwen2.5-7B 
& 80 & 32 & 20 & 7 & 34.8 
& 5.2 & 27.1 & 22.1 \\
Qwen-Self-Refine+Solver 
& Qwen2.5-7B 
& 85 & 43 & 36 & 35 & 49.8 
& 4.7 & 37.5 & 24.7 \\
GPT-4o-Self-Refine+Solver 
& GPT-4o 
& 91 & 47 & 42 & 37 & 54.3 
& 4.5 & 43.0 & 20.8 \\
\midrule
\rowcolor{gray!20}
\textbf{Ours} 
& \textbf{Qwen2.5-7B} 
& \textbf{98} 
& \textbf{71} 
& \textbf{58} 
& \textbf{56} 
& \textbf{70.8}
& \textbf{3.2}
& \textbf{66.3}
& \textbf{6.4} \\
\bottomrule
\end{tabular}
}
\end{table}

\subsection{Analysis}
\textbf{Semantic faithfulness.}
Because a solver-executable PDDL specification can still deviate from the original task semantics, Table~\ref{tab:faithfulness-main} reports both solvability and faithful success. 
Our method achieves a small solvability--faithfulness gap, with \(70.8\%\) solvability and \(66.3\%\) faithful success, while reducing drift to \(6.4\%\). 
The per-domain faithful-success results show that the improvement persists across all PlanBench domains, especially on Mystery BlocksWorld and the structurally brittle Logistics and Gripper domains. 
The reference-checking protocol and metric definitions are given in Appendix~\ref{app:faithfulness-protocol}.

\begin{table}[t]
\centering
\small
\caption{
Semantic faithfulness evaluation on PlanBench. 
Left: aggregate faithfulness metrics. 
Right: per-domain faithful success rate. 
Solv. denotes solver success. 
Faithful requires both solver executability and semantic consistency with the reference task.
}
\label{tab:faithfulness-main}
\begin{minipage}{0.50\textwidth}
\centering
\textbf{(a) Aggregate faithfulness}\\[2pt]
\resizebox{\linewidth}{!}{
\begin{tabular}{@{}lccccc@{}}
\toprule
\textbf{Method} 
& \textbf{Solv.} 
& \textbf{Faithful} 
& \textbf{Drift} \(\downarrow\)
& \textbf{Goal} 
& \textbf{Schema-F1} \\
\midrule
LLM+P 
& 35.5 & 28.3 & 20.4 & 84.7 & 77.6 \\
Solver-only RL 
& 39.8 & 25.6 & 35.7 & 73.5 & 68.2 \\
Ours w/o Judge 
& 44.1 & 29.4 & 33.3 & 76.8 & 70.5 \\
\midrule
\rowcolor{gray!20}
\textbf{Ours} 
& \textbf{70.8} 
& \textbf{66.3} 
& \textbf{6.4} 
& \textbf{96.1} 
& \textbf{91.8} \\
\bottomrule
\end{tabular}
}
\end{minipage}
\hfill
\begin{minipage}{0.47\textwidth}
\centering
\textbf{(b) Per-domain faithful success}\\[2pt]
\resizebox{\linewidth}{!}{
\begin{tabular}{@{}lccccc@{}}
\toprule
\textbf{Method} 
& \textbf{BW} 
& \textbf{MBW} 
& \textbf{Logistics} 
& \textbf{Gripper} 
& \textbf{Avg.} \\
\midrule
LLM+P 
& 75 & 26 & 12 & 0 & 28.3 \\
Solver-only RL
& 51 & 9 & 22 & 20 & 25.6 \\
Ours w/o Judge
& 57 & 13 & 24 & 24 & 29.4 \\
\midrule
\rowcolor{gray!20}
\textbf{Ours} 
& \textbf{94} 
& \textbf{67} 
& \textbf{53} 
& \textbf{51} 
& \textbf{66.3} \\
\bottomrule
\end{tabular}
}
\end{minipage}
\end{table}

\begin{figure*}[ht]
    \centering
    \includegraphics[width=\textwidth]{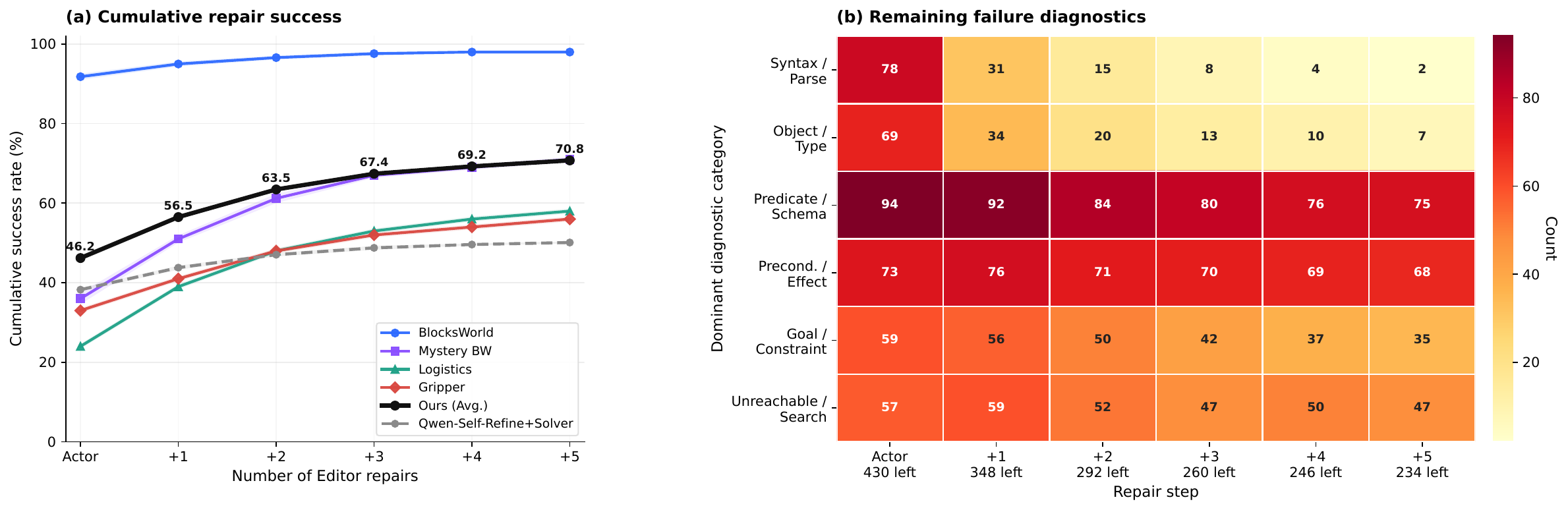}
    \vspace{-2mm}
    \caption{
    \textbf{Repair and diagnostic analysis.}
    Left: cumulative success rate over Editor repair steps. 
    Step \(0\) denotes the initial Actor output, and steps \(1\)--\(5\) denote successive repairs. 
    Right: diagnostic heatmap over remaining unsolved cases after each step.
    }
    \label{fig:repair-diagnostic-main}
\end{figure*}

\begin{table*}[ht]
\centering
\small
\caption{
Ablation and semantic drift analysis. 
Left: role ablation on Mystery BlocksWorld. 
Right: distribution of semantic drift types among solver-successful but semantically unfaithful outputs.
}
\label{tab:ablation-drift-main}
\begin{minipage}{0.43\textwidth}
\centering
\textbf{(a) Role ablation}\\[2pt]
\resizebox{\linewidth}{!}{
\begin{tabular}{@{}lcc@{}}
\toprule
\textbf{Configuration} 
& \textbf{Success} 
& \(\boldsymbol{\Delta}\) \\
\midrule
Full: Actor + Judge + Editor 
& \textbf{71.0} & -- \\
\midrule
Actor only 
& 0.0 & -71.0 \\
Editor + Judge 
& 31.5 & -39.5 \\
Actor + Judge 
& 20.5 & -50.5 \\
Actor + Editor 
& 0.0 & -71.0 \\
Separate models, \(3\times\)7B 
& 0.0 & -71.0 \\
\bottomrule
\end{tabular}
}
\end{minipage}
\hfill
\begin{minipage}{0.54\textwidth}
\centering
\textbf{(b) Drift categories}\\[2pt]
\resizebox{\linewidth}{!}{
\begin{tabular}{@{}lcccc@{}}
\toprule
\textbf{Method} 
& \textbf{Goal weak.} 
& \textbf{Type drift} 
& \textbf{Schema drift} 
& \textbf{Constraint omit.} \\
\midrule
LLM+P 
& 31.2 & 18.8 & 34.5 & 15.5 \\
Solver-only RL 
& 38.4 & 21.7 & 24.6 & 15.3 \\
Ours w/o Judge 
& 34.9 & 19.5 & 27.1 & 18.5 \\
\midrule
\rowcolor{gray!20}
\textbf{Ours} 
& 18.6 & 12.4 & 47.2 & 21.8 \\
\bottomrule
\end{tabular}
}
\end{minipage}
\end{table*}

\begin{figure*}[ht]
    \centering
    \includegraphics[width=1\textwidth]{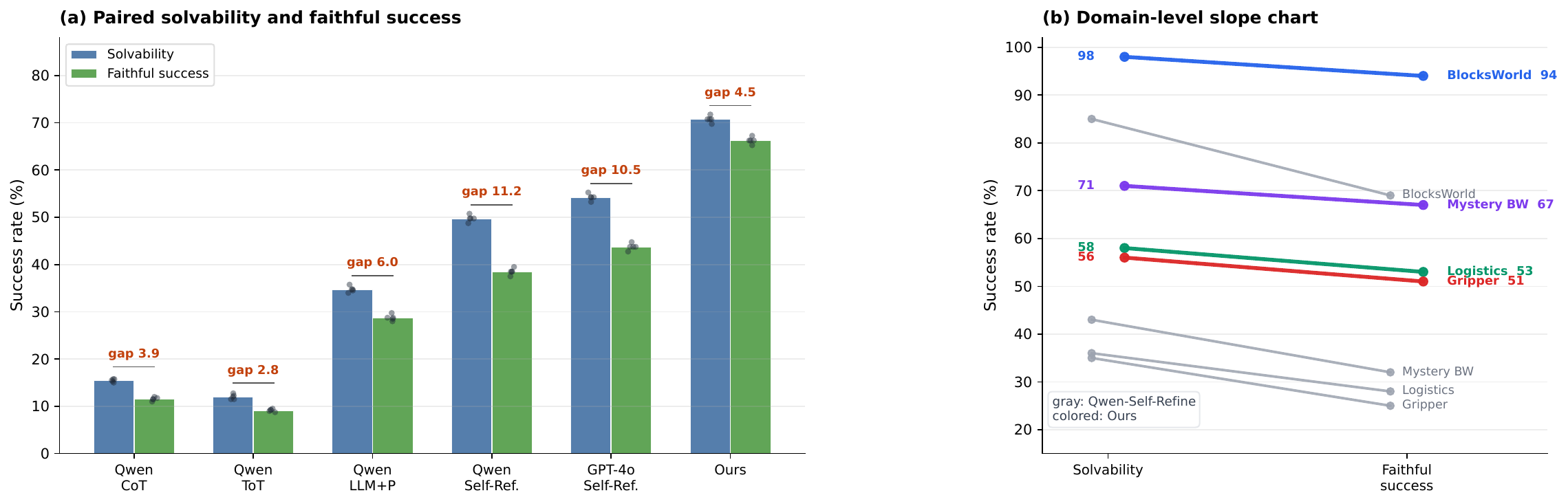}
    \caption{
    \textbf{Solvability--faithfulness gap analysis.}
    Left: paired bars compare solver success and faithful success under matched-budget settings. 
    Right: domain-level slope chart comparing solver success and faithful success.
    }
    \label{fig:solvability-faithfulness-gap-main}
\end{figure*}

\begin{figure*}[ht]
    \centering
    \includegraphics[width=1\textwidth]{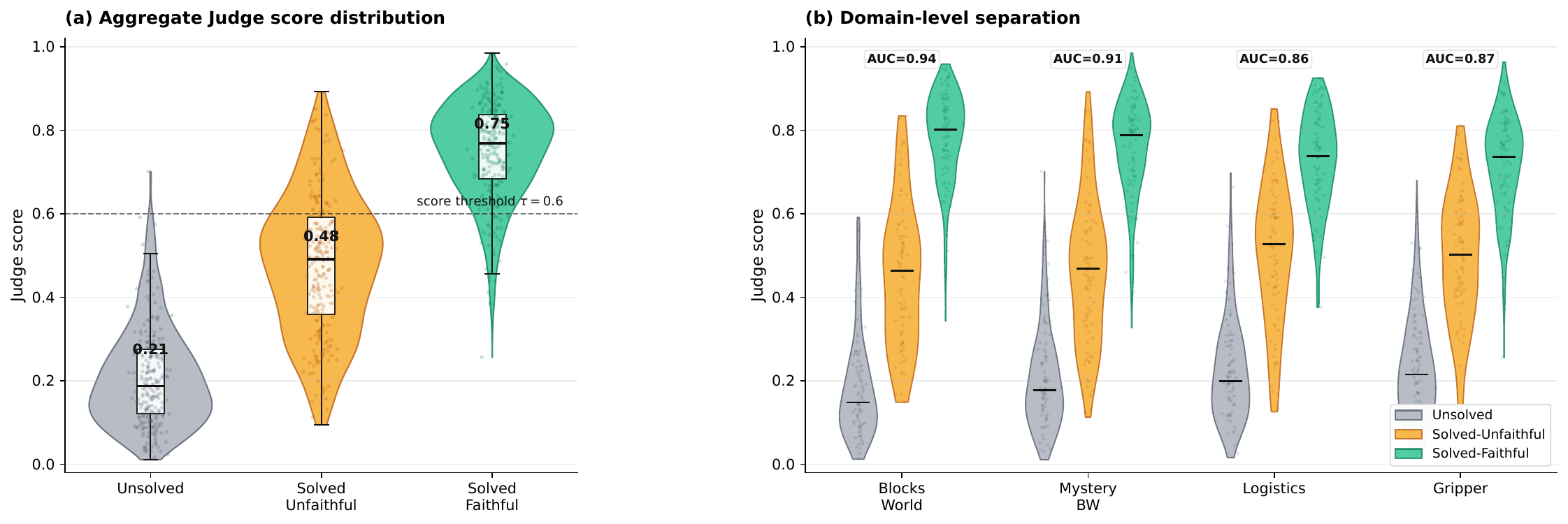}
    \caption{
    \textbf{Judge score distribution over post-hoc semantic categories.}
    Left: aggregate violin plot over all PlanBench domains. 
    Right: domain-level grouped violin plot.
    }
    \label{fig:judge-drift-violin-main}
\end{figure*}

\begin{table}[ht]
\centering
\small
\caption{
Post-hoc Judge separation analysis. 
Scores are reported as mean \(\pm\) standard deviation. 
AUC measures how well the Judge separates faithful outputs from solved-but-unfaithful outputs among solver-successful candidates.
}
\label{tab:judge-drift-main}
\resizebox{0.75\linewidth}{!}{
\begin{tabular}{@{}lccccc@{}}
\toprule
\textbf{Domain}
& \textbf{\#Cand.}
& \textbf{Unsolved}
& \textbf{Solved-Unfaithful}
& \textbf{Solved-Faithful}
& \textbf{AUC} \\
\midrule
BlocksWorld
& 700
& \(0.16\pm0.09\)
& \(0.42\pm0.13\)
& \(0.80\pm0.10\)
& 0.94 \\
Mystery BW
& 830
& \(0.18\pm0.10\)
& \(0.47\pm0.14\)
& \(0.77\pm0.11\)
& 0.90 \\
Logistics
& 800
& \(0.20\pm0.11\)
& \(0.50\pm0.15\)
& \(0.74\pm0.12\)
& 0.87 \\
Gripper
& 780
& \(0.21\pm0.11\)
& \(0.51\pm0.15\)
& \(0.73\pm0.13\)
& 0.85 \\
\midrule
\textbf{All}
& 3110
& \(0.19\pm0.11\)
& \(0.48\pm0.15\)
& \(0.76\pm0.12\)
& 0.89 \\
\bottomrule
\end{tabular}
}
\end{table}

\textbf{Ablation and semantic drift types.}
Table~\ref{tab:ablation-drift-main} shows two diagnostic analyses. 
The ablation results on Mystery BlocksWorld indicate that Actor, Judge, and Editor are all necessary: removing any role causes a large drop, and using three separate \(7\)B models collapses despite larger total parameter count. 
The drift-type analysis shows that our method reduces direct shortcut-like failures such as goal weakening and object/type drift; the remaining drift cases are dominated by harder action-schema errors. 
Detailed interpretation is deferred to Appendix~\ref{app:ablation-drift-discussion}.

\textbf{Repair dynamics and diagnostic evolution.}
Figure~\ref{fig:repair-diagnostic-main} shows how success accumulates across Editor repair steps. 
The initial Actor output solves \(46.2\%\) of instances on average, and bounded repair increases success to \(70.8\%\). 
Most improvement occurs early: the first two repairs contribute \(17.3\) out of the total \(24.6\) point gain, indicating that the Editor effectively corrects many localized defects rather than relying on long iterative search. 
The diagnostic heatmap further shows that syntax and typing errors are removed first, while remaining failures increasingly concentrate in harder schema and transition-model errors. 
This pattern suggests a natural repair hierarchy: shallow formalization errors are rapidly resolved, whereas residual failures require deeper corrections to action semantics and state transitions. 
The full repair protocol is described in Appendix~\ref{app:repair-protocol}.

\textbf{Solvability--faithfulness gap.}
Figure~\ref{fig:solvability-faithfulness-gap-main} visualizes the gap between solver success and faithful success. 
Our method has a \(4.5\)-point gap, compared with \(12.3\) points for Qwen-Self-Refine+Solver and \(11.3\) points for GPT-4o-Self-Refine+Solver. 
Thus, stronger prompting or stronger base models can improve solvability, but they do not fully prevent semantic drift. 
More details are provided in Appendix~\ref{app:gap-discussion}.

\textbf{Judge separation of faithful and unfaithful outputs.}
Finally, Figure~\ref{fig:judge-drift-violin-main} and Table~\ref{tab:judge-drift-main} test whether the Judge distinguishes solver-successful but semantically unfaithful outputs from truly faithful ones. 
The Judge assigns low scores to unsolved outputs, intermediate scores to solved-but-unfaithful outputs, and high scores to faithful outputs, with an overall AUC of \(0.89\) among solver-successful candidates. 
This suggests that the Judge captures structural quality signals beyond raw solver executability. 
Candidate construction and category definitions are given in Appendix~\ref{app:judge-protocol}.

\section{Conclusion}
We studied annotation-free natural-language-to-PDDL planning, where a model must learn executable and faithful symbolic specifications from solver feedback alone. 
We introduced a solver-grounded multi-role reinforcement learning framework that conditions a single language model as an Actor, Judge, and Editor, separating global generation, calibrated verification, and diagnostic-conditioned repair. 
Across PlanBench and zero-shot transfer benchmarks, the results show that structuring solver feedback into complementary roles improves both planning success and semantic faithfulness, rather than merely increasing solver executability. 
These findings suggest that verifiable environments can provide scalable supervision for neuro-symbolic planning when feedback is converted into generation, verification, and repair signals. 

A natural next step is to extend this paradigm from classical PDDL domains to richer interactive environments where symbolic constraints, tool feedback, and natural-language goals co-evolve.

\bibliographystyle{plainnat}
\bibliography{reference}

\clearpage

\appendix

\section{Reproducibility Details}
\label{app:repro}

\paragraph{Scope.}
This appendix consolidates all implementation- and experiment-related details required to reproduce the proposed
solver-grounded multi-role reinforcement learning (RL) framework for NL-to-PDDL planning.

\subsection{Task, Data, and Evaluation}
\paragraph{Task definition.}
Given a natural-language planning description $x$, the goal is to generate a PDDL specification
$y=(\mathcal{D},\mathcal{P})$ consisting of a domain file $\mathcal{D}$ and a problem file $\mathcal{P}$ that can be
verified by a PDDL solver (i.e., achieves the goal with valid syntax).

\paragraph{Metric.}
Primary metric is planning success rate (\%), measured by the solver's binary verification result.

\subsection{Grounded Verifier (PDDL Solver)}
\paragraph{Solver.}
The environment verifier $E$ is the Fast Downward planner with a 60-second timeout.
Given a candidate specification $y$, the solver returns $(r,f)=E(y)$, where
$r\in\{0,1\}$ indicates goal achievement, and $f$ contains structured diagnostics such as:
syntax errors, unsatisfied preconditions, unreachable goals, and execution traces.

\subsection{Model, Roles, and Parameter Sharing}
\paragraph{Base language model.}
\textbf{Qwen2.5-7B} is used as the pretrained backbone (initialized from pretrained weights only; no supervised fine-tuning).

\paragraph{One Brain, Three Roles.}
A single model plays three roles via learned role conditioning:
Actor ($r=A$), Judge ($r=J$), and Editor ($r=E$).
Role conditioning is implemented with a learnable role embedding $e_r$:
\[
p_\theta(y \mid x, r)=\mathrm{LM}_\theta\!\left(y \mid [e_r; x]\right).
\]

\paragraph{Parameter sharing.}
Parameters are partitioned as $\theta=\theta_{\mathrm{shared}}\cup\theta_A\cup\theta_J\cup\theta_E$,
where $\lvert\theta_{\mathrm{shared}}\rvert\approx 0.95\lvert\theta\rvert$.
Role-specific parameters (about $1.67\%$ each) are lightweight projection heads.

\subsection{RL Objective, Rewards, and Decoding}
\paragraph{MDP objective.}
The generation-refinement process is modeled as an MDP with objective:
\[
J(\theta)=\mathbb{E}_{\tau\sim \pi_\theta}\left[\sum_{t=0}^{T}\gamma^t R(s_t,a_t)\right],
\]
where $\gamma$ is the discount factor = 0.9.

\paragraph{Actor (generation).}
The Actor generates an initial PDDL specification $y_0$ from the natural-language input.
During training, nucleus sampling with temperature $T_{\mathrm{samp}}=0.7$ is used,
and switches to greedy decoding at inference.

\paragraph{Actor reward.}
The Actor reward combines solver success and the Judge's quality score:
\[
R_A = r_{\text{solver}} + \lambda_J\cdot s_{\text{Judge}},
\]
where $r_{\text{solver}}\in\{0,1\}$ and $s_{\text{Judge}}\in[0,1]$. $\lambda_J=0.3$.

\paragraph{Judge (quality prediction).}
The Judge predicts a quality score:
\[
s_{\text{Judge}}=\sigma(f_\theta(x,y_0,r=J)).
\]
Judge reward enforces consistency with the solver outcome using threshold $\tau$:
\[
R_J =
\begin{cases}
+1 & s_{\text{Judge}}>\tau \ \wedge\ r_{\text{solver}}=1,\\
-1 & s_{\text{Judge}}>\tau \ \wedge\ r_{\text{solver}}=0,\\
+0.5 & s_{\text{Judge}}\le\tau \ \wedge\ r_{\text{solver}}=0,\\
-0.5 & s_{\text{Judge}}\le\tau \ \wedge\ r_{\text{solver}}=1.
\end{cases}
\]
with $\tau=0.5$.

\paragraph{Editor (bounded refinement).}
When the solver fails ($r_0=0$), the Editor iteratively refines the specification using solver diagnostics $f_t$:
\[
y_{t+1} \sim \pi_E(\cdot \mid x,y_t,f_t),
\quad t=0,1,\dots,T.
\]
The maximum refinement steps is $T_{\max}=5$.

\paragraph{Editor reward.}
The Editor trades off final success and number of edits:
\[
R_E = r_{\text{solver}}(y_T) - \beta\cdot T,
\]
where $\beta=0.1$ and $T\le T_{\max}$.

\subsection{Training Protocol and Optimization}
\paragraph{Training loop.}
Training is conducted in cycles. Each cycle includes:
(1) Actor generates $y_0$ and queries the solver for $(r_0,f_0)$;
(2) Judge scores $s_{\text{Judge}}$ and receives $R_J$ based on solver outcome;
(3) if $r_0=0$, Editor refines up to $T_{\max}$ steps and receives $R_E$ based on final outcome.

\paragraph{Joint parameter update.}
All roles share the backbone and are updated with an aggregate gradient:
\[
\nabla_\theta J = \nabla_\theta J_A + \lambda_J \nabla_\theta J_J + \lambda_E \nabla_\theta J_E,
\]
where $\lambda_J=0.3$ and $\lambda_E=0.5$.

\paragraph{RL algorithm (PPO).}
Optimization uses Proximal Policy Optimization (PPO) with:
learning rate $3\times 10^{-5}$,
batch size $32$,
and clipping ratio $\epsilon=0.2$.



\subsection{Hyperparameter Summary}
Shown in Table \ref{tab:repro_hparams}
\begin{table}[t]
\centering
\small
\begin{tabular}{ll}
\hline
\textbf{Component} & \textbf{Setting} \\
\hline
Base model & Qwen2.5-7B \\
Environment & PlanBench \\
Solver & Fast Downward, 60s timeout \\
Learning rate & $3\times 10^{-5}$ \\
Batch size & 32 \\
PPO clip & $\epsilon=0.2$ \\
Actor decoding (train) & nucleus sampling, temperature $T_{\mathrm{samp}}=0.7$ \\
Actor decoding (test) & greedy \\
Actor reward weight & $\lambda_J=0.3$ \\
Judge threshold & $\tau=0.5$ \\
Editor max steps & $T_{\max}=5$ \\
Editor step penalty & $\beta=0.1$ \\
Role objective weight & $\lambda_E=0.5$ \\
Discount factor & $\gamma$ = 0.9 \\
\hline
\end{tabular}
\caption{Hyperparameters and settings.}
\label{tab:repro_hparams}
\end{table}

\subsection{Compute Resources}
\label{app:compute-resources}

All experiments were conducted on a single multi-GPU compute node with \(8\times\) NVIDIA A100 GPUs. 
Each GPU has 80GB memory, and the node has 64 CPU cores and 512GB system memory. 
The Qwen2.5-7B backbone was trained with mixed precision. 
Fast Downward was executed on CPU with a fixed 60-second timeout for each generated PDDL specification.

The main solver-grounded multi-role RL run uses approximately 12{,}000 solver interactions across 8 training cycles. 
A full training run takes approximately 12--16 hours on the above hardware. 

\section{Additional Method Details}
\label{app:method-details}

\subsection{Solver Environment and Diagnostics}
\label{app:solver-details}

The solver environment \(\mathcal{E}\) is implemented with Fast Downward using a 60-second timeout. Given a generated PDDL specification \(y=(y^{\mathrm{dom}},y^{\mathrm{prob}})\), the solver first checks whether the domain and problem files can be parsed and grounded. If parsing and grounding succeed, the planner searches for a plan that reaches the stated goal. The returned label \(b\in\{0,1\}\) is defined as \(b=1\) when the solver returns a valid plan within the timeout and \(b=0\) otherwise.

The diagnostic field \(d\) records the most informative failure mode available from the solver pipeline. We use diagnostics from four categories: syntax and parse errors, type and object mismatches, invalid operator definitions such as missing preconditions or effects, and search-level failures such as unreachable goals. The Editor receives the raw diagnostic text together with the current specification. During training, generated specifications and solver outputs are stored as on-policy interaction data for Actor, Judge, and Editor updates.

\subsection{Optimization and Hyperparameters}
\label{app:training-details}

\paragraph{Decoding.}
During training, the Actor and Editor use stochastic decoding to maintain exploration. We use nucleus sampling with temperature \(\tau_{\mathrm{temp}}=0.7\). During evaluation, both roles use greedy decoding for deterministic comparison. The repair horizon is fixed to \(H_{\max}=5\). If the initial Actor output succeeds, no Editor step is taken.

\paragraph{Actor and Editor PPO objectives.}
For role \(\rho\in\{\mathrm{A},\mathrm{E}\}\), let \(o_t^\rho\) denote the context given to the role policy, \(a_t^\rho\) the generated token sequence, and \(\hat{A}_t^\rho\) the advantage computed from the corresponding role reward. PPO minimizes
\begin{equation}
  \mathcal{L}_{\mathrm{PPO}}^{\rho}
  =
  -
  \mathbb{E}_{t}
  \left[
    \min
    \left(
      q_t^\rho(\theta)\hat{A}_t^\rho,
      \operatorname{clip}\!\left(q_t^\rho(\theta),1-\epsilon_{\mathrm{ppo}},1+\epsilon_{\mathrm{ppo}}\right)\hat{A}_t^\rho
    \right)
  \right],
  \label{eq:ppo-appendix}
\end{equation}
where
\[
  q_t^\rho(\theta)
  =
  \frac{
    \pi_\theta(a_t^\rho\mid o_t^\rho,\rho)
  }{
    \pi_{\theta_{\mathrm{old}}}(a_t^\rho\mid o_t^\rho,\rho)
  } .
\]
The Actor advantage is computed from \(R_{\mathrm{A}}\), and the Editor advantage is computed from \(R_{\mathrm{E}}\). We use \(\epsilon_{\mathrm{ppo}}=0.2\).

\paragraph{Judge loss.}
The Judge is trained as a binary predictor of solver success. The binary cross-entropy term is
\[
  \mathrm{BCE}(s,b)
  =
  -b\log s-(1-b)\log(1-s).
\]
The banded penalty is
\begin{equation}
  \ell_{\mathrm{band}}(s,b)
  =
  (1-b)\max(0,s-\tau_{\mathrm{J}})^2
  +\frac{1}{2}b\max(0,\tau_{\mathrm{J}}-s)^2 .
  \label{eq:band-loss}
\end{equation}
The first term penalizes false positives above the decision threshold, while the second term penalizes under-confident scores on solver-successful specifications. We use \(\tau_{\mathrm{J}}=0.5\).

\paragraph{Joint update.}
All trainable parameters are updated with
\[
  \mathcal{L}_{\mathrm{total}}
  =
  \mathcal{L}_{\mathrm{PPO}}^{\mathrm{A}}
  +\lambda_{\mathrm{E}}\mathcal{L}_{\mathrm{PPO}}^{\mathrm{E}}
  +\lambda_{\mathrm{J}}\mathcal{L}_{\mathrm{J}} .
\]
The shared backbone receives gradients from all three roles. The Actor head receives Actor gradients, the Judge head receives Judge gradients, and the Editor head receives Editor gradients.

\begin{table}[t]
\centering
\small
\caption{Hyperparameters used in the solver-grounded multi-role training procedure.}
\label{tab:method-hparams}
\begin{tabular}{ll}
\toprule
\textbf{Component} & \textbf{Setting} \\
\midrule
Base model & Qwen2.5-7B \\
Solver & Fast Downward \\
Solver timeout & 60 seconds \\
Actor decoding during training & nucleus sampling, \(\tau_{\mathrm{temp}}=0.7\) \\
Actor decoding during evaluation & greedy decoding \\
Editor decoding during training & nucleus sampling, \(\tau_{\mathrm{temp}}=0.7\) \\
Editor decoding during evaluation & greedy decoding \\
Maximum repair steps & \(H_{\max}=5\) \\
PPO clip coefficient & \(\epsilon_{\mathrm{ppo}}=0.2\) \\
Learning rate & \(3\times 10^{-5}\) \\
Batch size & 32 \\
Judge threshold & \(\tau_{\mathrm{J}}=0.5\) \\
Actor shaping weight & \(\lambda_{\mathrm{J}}=0.3\) \\
Editor loss weight & \(\lambda_{\mathrm{E}}=0.5\) \\
Editor step penalty & \(\beta=0.1\) \\
\bottomrule
\end{tabular}
\end{table}

\section{Judge Calibration Analysis}
\label{app:judge-analysis}

This section proves the calibration statement used in Section~\ref{sec:judge-calibration}. The result is distributional and on-policy: it concerns the generated specifications encountered under the current policy.

\paragraph{Setup.}
Let \(X\sim\mathcal{P}_{\mathrm{data}}\) be a task and let \(Y\sim\pi_\theta(\cdot\mid X)\) be a generated specification. The solver-success label is
\(B=b(X,Y)\in\{0,1\}\), and the task-level correctness label is
\(C=c(X,Y)\in\{0,1\}\). The solver and the reference correctness checker are deterministic for a fixed pair \((x,y)\); randomness comes from task sampling and stochastic generation.

Define the Bayes-optimal solver-success predictor
\[
  s_J^*(x,y)=\Pr(B=1\mid X=x,Y=y).
\]
The learned Judge has calibration error
\begin{equation}
  \delta_J
  =
  \sup_{x,y}
  \left|s_J(x,y)-s_J^*(x,y)\right|.
  \label{eq:judge-delta-app}
\end{equation}

\begin{assumption}[Bounded solver--correctness disagreement]
\label{ass:bounded-disagreement}
For the policy-induced distribution over \((X,Y)\), there exists \(\varepsilon<1/2\) such that
\begin{equation}
  \Pr(B\neq C)\le \varepsilon .
  \label{eq:bounded-disagreement}
\end{equation}
\end{assumption}

\begin{lemma}[Bayes-optimality of the Judge target]
\label{lem:judge-bayes-app}
The minimizer of the conditional cross-entropy loss for predicting \(B\) is
\[
  s_J^*(x,y)=\Pr(B=1\mid X=x,Y=y).
\]
\end{lemma}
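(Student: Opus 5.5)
The plan is to prove the statement pointwise: fix a pair \((x,y)\) and minimize the conditional cross-entropy separately for each such pair. Set \(p=p(x,y)=\Pr(B=1\mid X=x,Y=y)\). For any candidate score \(s\in(0,1)\), the conditional expected loss is
\[
  \ell_{x,y}(s)=\mathbb{E}\bigl[\mathrm{BCE}(s,B)\mid X=x,Y=y\bigr]=-p\log s-(1-p)\log(1-s).
\]
Here we use only that \(B\in\{0,1\}\), so that \(\mathbb{E}[B\mid X=x,Y=y]=p\), together with linearity of conditional expectation. The full expected loss is then \(\mathbb{E}_{(X,Y)}[\ell_{X,Y}(s_J(X,Y))]\). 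Since \(s_J\) ranges over arbitrary measurable functions of \((x,y)\), it suffices to minimize \(\ell_{x,y}\) for each \((x,y)\) separately; the pointwise minimizer, assembled into a function, then minimizes the expectation.

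For the pointwise problem I will treat three cases.

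\textbf{Interior case, \(p\in(0,1)\).} The function \(\ell_{x,y}\) is strictly convex on \((0,1)\), since
\[
  \ell''_{x,y}(s)=\frac{p}{s^2}+\frac{1-p}{(1-s)^2}>0 .
\]
Its derivative \(-p/s+(1-p)/(1-s)\) vanishes exactly at \(s=p\). Moreover \(\ell_{x,y}\to\infty\) at both endpoints, so \(s=p\) is the unique minimizer.

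\textbf{Degenerate cases, \(p\in\{0,1\}\).} Here the loss reduces to \(-\log(1-s)\) or \(-\log s\), respectively. This is strictly monotone, so the infimum is attained only in the limit \(s\to p\), or exactly if scores in \([0,1]\) are allowed with the convention \(0\log 0=0\). Since the Judge outputs a sigmoid in \((0,1)\), I will state the lemma with this convention, or as an infimum.

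An alternative, cleaner route is the Gibbs identity
\[
  \ell_{x,y}(s)-\ell_{x,y}(p)=\mathrm{KL}\bigl(\mathrm{Bern}(p)\,\|\,\mathrm{Bern}(s)\bigr)\ge 0,
\]
with equality iff \(s=p\). This handles all cases at once, and I would likely present it as the main argument.

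I expect the main obstacle to be subtleties rather than computation. The solver is deterministic given \((x,y)\), so \(s_J^*\in\{0,1\}\) in fact. This makes the boundary case the generic one, and the open-interval range of the sigmoid matters there. I would also note that the minimizer is determined only almost everywhere with respect to the law of \((X,Y)\). The banded penalty is not part of this lemma: the penalty vanishes at \(s=b\), but it does not vanish at \(s=p\) in general. I will remark that for deterministic \(B\) it does not shift the minimizer, since \(s=b\) also zeroes the band term.
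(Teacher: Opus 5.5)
Your proposal is correct and follows the paper's proof: fix \((x,y)\), reduce to the Bernoulli cross-entropy \(-\eta\log s-(1-\eta)\log(1-s)\), and conclude by the first-order condition plus strict convexity; your KL identity is an equivalent repackaging of the same argument. Your treatment of \(p\in\{0,1\}\) goes beyond the paper, whose derivative argument has no critical point in \((0,1)\) in that case --- and that is exactly the deterministic-solver case the paper relies on right afterward when it sets \(s_J^*=B\) --- so stating the result over \([0,1]\) with \(0\log 0=0\), or as an infimum, is the right fix.
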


\begin{proof}
For a fixed pair \((x,y)\), let \(\eta=\Pr(B=1\mid X=x,Y=y)\). The conditional cross-entropy is
\[
  \ell(s;\eta)=-\eta\log s-(1-\eta)\log(1-s), \qquad s\in(0,1).
\]
Its derivative is \(-\eta/s+(1-\eta)/(1-s)\), which vanishes only at \(s=\eta\). The second derivative is positive on \((0,1)\), so \(s=\eta\) is the unique minimizer.
\end{proof}

\begin{theorem}[Judge--correctness connection]
\label{thm:judge-correctness-app}
Under Assumption~\ref{ass:bounded-disagreement}, any Judge with calibration error \(\delta_J\) satisfies
\begin{equation}
  \mathbb{E}_{(X,Y)}
  \left[
    \left|s_J(X,Y)-C(X,Y)\right|
  \right]
  \le
  \varepsilon+\delta_J .
  \label{eq:judge-L1-app}
\end{equation}
Moreover,
\begin{equation}
  \Pr\!\left(
    \left|s_J(X,Y)-C(X,Y)\right|>\delta_J
  \right)
  \le
  \varepsilon .
  \label{eq:judge-hp-app}
\end{equation}
\end{theorem}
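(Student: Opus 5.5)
The plan is a pointwise triangle inequality through the Bayes-optimal target $s_J^*$ from Lemma~\ref{lem:judge-bayes-app}, followed by taking expectations for the first claim and a containment of events for the second. The structural fact I will rely on is stated in the setup: the solver is deterministic for a fixed pair $(x,y)$, so $B=b(X,Y)$ is a measurable function of $(X,Y)$. Hence
\[
s_J^*(x,y)=\Pr(B=1\mid X=x,Y=y)=b(x,y)\in\{0,1\},
\]
that is, $s_J^*(X,Y)=B$ almost surely. Without determinism, $s_J^*$ would be a genuine conditional probability, and the comparison with $B$ would pick up an extra noise term. So the first step is to establish this identity carefully, treating the conditioning as a regular conditional probability given $(X,Y)$.

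With $s_J^*=B$ in hand, for every realisation I will write
\[
|s_J(X,Y)-C(X,Y)|\le |s_J(X,Y)-s_J^*(X,Y)|+|B-C|\le \delta_J+\mathbf{1}\{B\neq C\}.
\]
The first term is bounded by $\delta_J$ through the supremum in \eqref{eq:judge-delta-app}. The second equals the indicator because $B,C\in\{0,1\}$.

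Taking expectations over the policy-induced distribution and applying Assumption~\ref{ass:bounded-disagreement} gives $\mathbb{E}|s_J-C|\le \delta_J+\Pr(B\neq C)\le\delta_J+\varepsilon$, which is \eqref{eq:judge-L1-app}.

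For \eqref{eq:judge-hp-app}, I will argue by event containment rather than Markov's inequality, since Markov would only give the weaker bound $(\varepsilon+\delta_J)/\delta_J$. On the event $\{B=C\}$, the same display gives $|s_J-C|=|s_J-B|=|s_J-s_J^*|\le\delta_J$. Therefore
\[
\{|s_J-C|>\delta_J\}\subseteq\{B\neq C\},
\]
and the probability is at most $\varepsilon$.

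The main obstacle is the identification $s_J^*=B$. The whole argument depends on the solver being deterministic given $(x,y)$, which removes any aleatoric gap between the Bayes-optimal predictor and the realised label. Lemma~\ref{lem:judge-bayes-app} itself is used only to justify why $s_J^*$ is the natural reference for measuring calibration. The condition $\varepsilon<1/2$ is not needed for these two bounds; it matters only for the later ranking guarantee.
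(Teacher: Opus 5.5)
Your proposal is correct and follows the paper's proof essentially step for step. You use determinism of the solver to get $s_J^*(X,Y)=B$ almost surely, apply the triangle inequality $|s_J-C|\le|s_J-s_J^*|+|B-C|$ and take expectations for the first bound, and then use the containment $\{|s_J-C|>\delta_J\}\subseteq\{B\neq C\}$ for the second (your side remarks that Markov would be too weak and that $\varepsilon<1/2$ plays no role here are also accurate).
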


\begin{proof}
By Lemma~\ref{lem:judge-bayes-app}, \(s_J^*(X,Y)=\Pr(B=1\mid X,Y)\). Since \(B\) is deterministic given \((X,Y)\), we have \(s_J^*(X,Y)=B(X,Y)\) almost surely. Therefore,
\[
  |s_J(X,Y)-C(X,Y)|
  \le
  |s_J(X,Y)-s_J^*(X,Y)|
  +
  |B(X,Y)-C(X,Y)|.
\]
Taking expectations yields Eq.~\eqref{eq:judge-L1-app}. For Eq.~\eqref{eq:judge-hp-app}, note that on the event \(B=C\), the first term is at most \(\delta_J\). Thus the event \(|s_J-C|>\delta_J\) can occur only when \(B\neq C\), whose probability is at most \(\varepsilon\).
\end{proof}

\begin{theorem}[Margin ranking bound]
\label{thm:judge-ranking-app}
Let \(Y_1,Y_2\overset{\mathrm{i.i.d.}}{\sim}\pi_\theta(\cdot\mid X)\). Under Assumption~\ref{ass:bounded-disagreement},
\begin{equation}
\begin{aligned}
  \Pr\Big(
    &s_J(X,Y_1)\ge s_J(X,Y_2)+2\delta_J \\
    &\wedge\ C(X,Y_1)<C(X,Y_2)
  \Big)
  \le
  2\varepsilon .
\end{aligned}
\label{eq:ranking-bound-app}
\end{equation}
\end{theorem}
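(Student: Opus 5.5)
Since $C$ takes values in $\{0,1\}$, the event $C(X,Y_1)<C(X,Y_2)$ is exactly $\{C(X,Y_1)=0,\ C(X,Y_2)=1\}$. The plan is to show that on this event the score margin $s_J(X,Y_1)\ge s_J(X,Y_2)+2\delta_J$ forces at least one of the two samples to be a solver--correctness disagreement. Then a union bound together with Assumption~\ref{ass:bounded-disagreement} finishes the argument.

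\textbf{Step 1: reduce to the good event.} Define $G=\{B(X,Y_1)=C(X,Y_1)\}\cap\{B(X,Y_2)=C(X,Y_2)\}$. As in the proof of Theorem~\ref{thm:judge-correctness-app}, Lemma~\ref{lem:judge-bayes-app} and the determinism of $B$ give $s_J^*(X,Y_i)=B(X,Y_i)$. The definition \eqref{eq:judge-delta-app} then gives $|s_J(X,Y_i)-B(X,Y_i)|\le\delta_J$ for $i=1,2$. On $G$ this becomes $|s_J(X,Y_i)-C(X,Y_i)|\le\delta_J$; this is the pointwise content of \eqref{eq:judge-hp-app}.

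\textbf{Step 2: contradiction on $G$.} Suppose $G$ holds, $C(X,Y_1)=0$ and $C(X,Y_2)=1$. Step 1 gives $s_J(X,Y_1)\le\delta_J$ and $s_J(X,Y_2)\ge 1-\delta_J$. The margin condition would then require
\[
\delta_J\ \ge\ s_J(X,Y_1)\ \ge\ s_J(X,Y_2)+2\delta_J\ \ge\ 1+\delta_J ,
\]
which is impossible. So the target event is contained in $G^c$. We only need $1>0$ here, so no restriction on $\delta_J$ beyond the assumption is required.

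\textbf{Step 3: union bound.} We have $G^c\subseteq\{B(X,Y_1)\neq C(X,Y_1)\}\cup\{B(X,Y_2)\neq C(X,Y_2)\}$. Because $Y_1,Y_2$ are i.i.d.\ from $\pi_\theta(\cdot\mid X)$ with $X\sim\mathcal{P}_{\mathrm{data}}$, each pair $(X,Y_i)$ has the policy-induced distribution. Assumption~\ref{ass:bounded-disagreement} therefore bounds each probability by $\varepsilon$, and the total is at most $2\varepsilon$.

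\textbf{Main obstacle.} The step needing most care is the marginal-distribution point in Step 3. The two samples share the same $X$, so the disagreement events are dependent. The union bound does not need independence, though, only that each marginal of $(X,Y_i)$ matches the distribution in Assumption~\ref{ass:bounded-disagreement}, which holds by construction.

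Two smaller points should be noted. First, Step 1 relies on $s_J^*=B$ almost surely, which comes from determinism of the solver. Second, $\delta_J$ is a supremum bound, so the pointwise inequalities in Step 1 hold everywhere and not merely in expectation.
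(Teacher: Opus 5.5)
Your proof is correct and follows the paper's route. Like the paper, you restrict to the event $G$ where solver success and correctness agree for both samples, note that on $G$ each score lies within $\delta_J$ of the binary correctness value so a $2\delta_J$ margin cannot rank an incorrect candidate above a correct one, and bound $\Pr(G^c)\le 2\varepsilon$ by a union bound; your explicit contradiction $\delta_J\ge 1+\delta_J$ and your remark that the union bound needs only the correct marginals, not independence, spell out steps the paper leaves implicit.
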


\begin{proof}
Let \(G\) be the event that \(B(X,Y_i)=C(X,Y_i)\) for both \(i=1,2\). By a union bound, \(\Pr(G^c)\le 2\varepsilon\). On \(G\), the learned score differs from the binary correctness value by at most \(\delta_J\) for each candidate. Therefore, a score margin of \(2\delta_J\) cannot rank an incorrect candidate above a correct candidate. The mis-ranking event in Eq.~\eqref{eq:ranking-bound-app} is contained in \(G^c\), so its probability is at most \(2\varepsilon\).
\end{proof}

\subsection{Estimating the Solver--Correctness Disagreement Rate}
\label{app:epsilon-estimation}

For a fixed checkpoint \(\theta\), the disagreement rate can be estimated by sampling tasks \(X_i\), generating specifications \(Y_i\), recording solver labels \(B_i\), and evaluating task-level correctness labels \(C_i\) using a reference validator or audit protocol:
\begin{equation}
  \widehat{\varepsilon}_{\theta}
  =
  \frac{1}{n}
  \sum_{i=1}^{n}
  \mathbf{1}\{B_i\neq C_i\}.
  \label{eq:epsilon-hat-app}
\end{equation}
For any \(\delta\in(0,1)\), Hoeffding's inequality gives
\begin{equation}
  \varepsilon_{\theta}
  \le
  \widehat{\varepsilon}_{\theta}
  +
  \sqrt{\frac{\log(1/\delta)}{2n}}
  \label{eq:epsilon-hoeffding-app}
\end{equation}
with probability at least \(1-\delta\). For a finite set of checkpoints \(\Theta_{\mathrm{ckpt}}\), a uniform bound is obtained by replacing \(\delta\) with \(\delta/|\Theta_{\mathrm{ckpt}}|\).

\section{Why Separate Actor and Editor Roles}
\label{app:actor-editor-analysis}

This section clarifies why the Actor and Editor are trained as separate roles rather than a single policy optimized only for final solver success.

\paragraph{Initial correctness.}
For a policy that produces an initial specification \(Y_0\), define
\[
  C_{\mathrm{init}}(\pi)
  =
  \mathbb{E}\left[c(X,Y_0)\right].
\]
Here \(c(X,Y_0)=1\) means that the initial specification is correct with respect to the task-level reference semantics.

\paragraph{Actor-only alignment.}
If the Actor is trained with the initial solver label \(b(X,Y_0)\), then under the bounded disagreement condition \(\Pr(b(X,Y_0)\neq c(X,Y_0))\le\varepsilon\),
\[
  \left|
  \mathbb{E}[b(X,Y_0)]-C_{\mathrm{init}}(\pi)
  \right|
  \le
  \varepsilon .
\]
Thus the initial solver label remains a distributional surrogate for initial correctness whenever solver success and task-level correctness agree on most on-policy samples.

\paragraph{Final-only Editor objective.}
Consider a monolithic repair policy that both initializes and edits, and receives only the final reward
\[
  R_{\mathrm{mono}}=b(X,Y_T)-\beta T .
\]
Suppose there exists a set of repairable initial specifications \(\mathcal{Y}_{\mathrm{rep}}(x)\) such that every \(y_0\in\mathcal{Y}_{\mathrm{rep}}(x)\) can be repaired to solver success in the same number of steps. Then all initializers supported on \(\mathcal{Y}_{\mathrm{rep}}(x)\) receive the same final return, even if their initial correctness differs. The final-only objective therefore does not identify the quality of \(Y_0\).

\begin{proposition}[Initial specification is unidentifiable under saturated repair]
\label{prop:editor-unidentifiable}
Assume that for every \(x\), all \(y_0\in\mathcal{Y}_{\mathrm{rep}}(x)\) can be repaired to \(b=1\) in exactly \(T^\dagger\) steps. If \(\mathcal{Y}_{\mathrm{rep}}(x)\) contains both correct and incorrect initial specifications, then for any \(\alpha\in[0,1]\) there exists an initializer with \(C_{\mathrm{init}}=\alpha\) and final return \(1-\beta T^\dagger\).
\end{proposition}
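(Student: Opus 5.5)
The argument is a mixture construction over initializers. For each task \(x\), pick one correct and one incorrect element of \(\mathcal{Y}_{\mathrm{rep}}(x)\); by hypothesis, call them \(y^{+}(x)\) with \(c(x,y^{+}(x))=1\) and \(y^{-}(x)\) with \(c(x,y^{-}(x))=0\). Given \(\alpha\in[0,1]\), define the initializer \(\pi_\alpha(\cdot\mid x)=\alpha\,\delta_{y^{+}(x)}+(1-\alpha)\,\delta_{y^{-}(x)}\). Here \(\delta_y\) is the point mass at \(y\), and the initializer is randomized independently of \(X\). If one prefers deterministic initializers, we can instead split the task space into a measurable set of probability \(\alpha\) and its complement; this needs a nonatomic task distribution, so the randomized version is the default.

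\textbf{Step 1: initial correctness.} By linearity of expectation,
\[
C_{\mathrm{init}}(\pi_\alpha)=\mathbb{E}_X\!\left[\alpha\, c(X,y^{+}(X))+(1-\alpha)\,c(X,y^{-}(X))\right]=\alpha .
\]

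\textbf{Step 2: final return.} Pair \(\pi_\alpha\) with the repair procedure guaranteed by the saturated-repair assumption. Every \(y_0\in\mathcal{Y}_{\mathrm{rep}}(x)\) reaches \(b=1\) in exactly \(T^\dagger\) steps. Both support points lie in \(\mathcal{Y}_{\mathrm{rep}}(x)\), so every trajectory has \(b(X,Y_T)=1\) and \(T=T^\dagger\). Hence \(R_{\mathrm{mono}}=1-\beta T^\dagger\) holds pointwise, and so its expectation is \(1-\beta T^\dagger\) as well, independent of \(\alpha\).

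\textbf{Main obstacle: interpreting the repair assumption.} If repair is stochastic, ``can be repaired in exactly \(T^\dagger\) steps'' must be read as: the repair policy used in the monolithic objective deterministically, or almost surely, succeeds at step \(T^\dagger\) and not earlier. I would state this reading explicitly. It also matters that the episode stops at the first success, as in Algorithm~\ref{alg:training-episode}; then \(T\) is exactly \(T^\dagger\), rather than merely at most \(T^\dagger\).

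A second point concerns measurability: we need the selections \(x\mapsto y^{\pm}(x)\) to be measurable. I would handle this by noting that the specification space of finite strings is countable, so we can choose, for example, the lexicographically first correct element and the first incorrect element of \(\mathcal{Y}_{\mathrm{rep}}(x)\).

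With Steps 1 and 2 in place, the family \(\{\pi_\alpha\}_{\alpha\in[0,1]}\) attains every initial correctness level \(\alpha\) while all members receive the identical final return \(1-\beta T^\dagger\). So \(R_{\mathrm{mono}}\) cannot identify \(C_{\mathrm{init}}\), which is the claim of Proposition~\ref{prop:editor-unidentifiable}.
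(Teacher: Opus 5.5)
Your proof is correct and matches the paper's argument: the same per-task $\alpha$-mixture of a correct $y^{+}(x)$ and an incorrect $y^{-}(x)$ from $\mathcal{Y}_{\mathrm{rep}}(x)$, with Step 1 giving $C_{\mathrm{init}}=\alpha$ and Step 2 giving the $\alpha$-independent return $1-\beta T^\dagger$. One small fix to your side remark on measurability: plain lexicographic order on finite strings is not a well-order (for example \texttt{b}, \texttt{ab}, \texttt{aab}, \dots\ is strictly decreasing and has no least element), so choose the first element of $\mathcal{Y}_{\mathrm{rep}}(x)$ in a fixed enumeration such as shortlex order instead.
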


\begin{proof}
For each \(x\), choose \(y^+(x),y^-(x)\in\mathcal{Y}_{\mathrm{rep}}(x)\) such that \(c(x,y^+(x))=1\) and \(c(x,y^-(x))=0\). Define the initializer to output \(y^+(x)\) with probability \(\alpha\) and \(y^-(x)\) with probability \(1-\alpha\). By the saturated repair assumption, both choices obtain final return \(1-\beta T^\dagger\). The expected initial correctness is \(\alpha\).
\end{proof}

This proposition motivates the Actor--Editor decomposition. The Actor receives an initial-stage reward, which keeps global formalization tied to the initial specification. The Editor receives a final-stage repair reward, which specializes it for local diagnostic-conditioned correction.

\section{Proof of Reward-Hacking Scaling}
\label{app:sharing-analysis}

This section proves Theorem~\ref{thm:hacking-scaling-main}.

\paragraph{Local hacking gain.}
Let \(J_{\mathrm{A}}(\theta)\) denote the Actor objective and \(C(\theta)\) denote expected task-level correctness. Around a reference point \(\theta_0\), define
\[
  g_{\mathrm{hack}}(\theta_0)
  =
  \nabla_{\theta}J_{\mathrm{A}}(\theta_0)
  -
  a\nabla_{\theta}C(\theta_0),
  \qquad a>0 .
\]
For a unit-norm perturbation \(\Delta\theta\), the first-order Actor gain not explained by correctness is
\[
  \Delta_{\mathrm{hack}}(\theta_0;\Delta\theta)
  =
  g_{\mathrm{hack}}(\theta_0)^\top \Delta\theta .
\]

\paragraph{Admissible perturbation sets.}
For three separate models, the Actor has its own parameter vector \(\theta_{\mathrm{A}}\in\mathbb{R}^{P_{\mathrm{A}}}\). The admissible perturbation set is
\begin{equation}
  \mathcal{H}_{\mathrm{sep}}
  =
  \left\{
    \Delta\theta_{\mathrm{A}}\in\mathbb{R}^{P_{\mathrm{A}}}
    :
    \|\Delta\theta_{\mathrm{A}}\|_2\le 1
  \right\}.
  \label{eq:Hsep-app}
\end{equation}

For the shared-backbone architecture, write
\[
  \theta=(\theta_{\mathrm{sh}},\theta_{\mathrm{A}},\theta_{\mathrm{J}},\theta_{\mathrm{E}}),
\]
where \(\theta_{\mathrm{sh}}\in\mathbb{R}^{p}\) is the shared backbone and \(\theta_{\mathrm{A}}\in\mathbb{R}^{h}\) is the Actor head. We consider perturbations that are neutral to Judge and Editor on the shared backbone and do not modify the Judge or Editor heads:
\begin{equation}
  \mathcal{H}_{\mathrm{shr}}
  =
  \left\{
    \Delta\theta:
    \|\Delta\theta\|_2\le 1,\,
    \Delta\theta_{\mathrm{J}}=0,\,
    \Delta\theta_{\mathrm{E}}=0,\,
    \Delta\theta_{\mathrm{sh}}\in\mathrm{Null}(\Sigma_{\mathrm{JE}})
  \right\}.
  \label{eq:Hshr-app}
\end{equation}
Here \(\Sigma_{\mathrm{JE}}\) is the Judge--Editor backbone sensitivity matrix defined below.

\begin{assumption}[Judge and Editor constrain the shared backbone]
\label{ass:shared-backbone-constraint}
Let \(g_{\mathrm{J}}\in\mathbb{R}^{p}\) and \(g_{\mathrm{E}}\in\mathbb{R}^{p}\) denote stochastic backbone gradients from the Judge and Editor objectives. Define
\begin{equation}
  \Sigma_{\mathrm{JE}}
  =
  \mathbb{E}[g_{\mathrm{J}}g_{\mathrm{J}}^\top]
  +
  \mathbb{E}[g_{\mathrm{E}}g_{\mathrm{E}}^\top].
  \label{eq:SigmaJE-app}
\end{equation}
Assume \(\mathrm{Null}(\Sigma_{\mathrm{JE}})=\{0\}\). Therefore, any Judge- and Editor-neutral perturbation in \(\mathcal{H}_{\mathrm{shr}}\) must lie in the Actor head subspace.
\end{assumption}

\begin{assumption}[Local isotropic hacking-gradient model]
\label{ass:isotropic-hacking}
There exists \(\sigma>0\) such that the restriction of \(g_{\mathrm{hack}}(\theta_0)\) to the relevant admissible subspace is isotropic Gaussian. In the separate setting,
\[
  g_{\mathrm{hack}}(\theta_0)
  \sim
  \mathcal{N}(0,\sigma^2 I_{P_{\mathrm{A}}}).
\]
In the shared setting,
\[
  \mathrm{Proj}_{\mathrm{A}}g_{\mathrm{hack}}(\theta_0)
  \sim
  \mathcal{N}(0,\sigma^2 I_h),
\]
where \(\mathrm{Proj}_{\mathrm{A}}\) denotes projection onto the Actor head subspace.
\end{assumption}

\begin{lemma}[Gaussian supremum over a unit ball]
\label{lem:gaussian-sup-app}
Let \(g\sim\mathcal{N}(0,\sigma^2 I_d)\). Then
\begin{equation}
  \mathbb{E}
  \left[
    \sup_{\|u\|_2\le 1} g^\top u
  \right]
  =
  \mathbb{E}\|g\|_2
  =
  \Theta(\sigma\sqrt{d}).
  \label{eq:gaussian-sup-app}
\end{equation}
\end{lemma}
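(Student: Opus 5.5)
The plan has two parts: the equality $\sup_{\|u\|_2\le 1} g^\top u = \|g\|_2$, and then two-sided bounds on $\mathbb{E}\|g\|_2$ of order $\sigma\sqrt d$.

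\textbf{The supremum.} For the equality we argue pointwise in $g$. By Cauchy--Schwarz, $g^\top u \le \|g\|_2\|u\|_2 \le \|g\|_2$ for every $u$ in the unit ball. Equality is attained at $u = g/\|g\|_2$ when $g\neq 0$, and trivially when $g=0$. Taking expectations then gives the first equality in Eq.~\eqref{eq:gaussian-sup-app}.

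\textbf{Reduction to a standard Gaussian.} For the scaling, write $g=\sigma z$ with $z\sim\mathcal{N}(0,I_d)$, so $\mathbb{E}\|g\|_2=\sigma\,\mathbb{E}\|z\|_2$. It then suffices to show $c\sqrt d\le \mathbb{E}\|z\|_2\le \sqrt d$ for an absolute constant $c>0$.

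\textbf{Upper bound.} Since $\mathbb{E}\|z\|_2^2 = d$, Jensen's inequality gives $\mathbb{E}\|z\|_2\le\sqrt{\mathbb{E}\|z\|_2^2}=\sqrt d$.

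\textbf{Lower bound.} This is the only step needing a small idea. I would first try the moment inequality $\mathbb{E}\|z\|_2 \ge (\mathbb{E}\|z\|^2)^{3/2}/(\mathbb{E}\|z\|^4)^{1/2}$, which follows from H\"older applied to $\|z\|^2=\|z\|^{1/2}\|z\|^{3/2}$. Using $\mathbb{E}\|z\|^4 = d^2+2d \le 3d^2$, this yields $\mathbb{E}\|z\|_2\ge d^{3/2}/(\sqrt3\, d)=\sqrt{d/3}$.

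There are two alternatives. One is the exact formula $\mathbb{E}\|z\|_2=\sqrt2\,\Gamma((d+1)/2)/\Gamma(d/2)$ together with Gautschi's inequality, which gives $\mathbb{E}\|z\|_2\ge \sqrt{d-1/2}\ge\sqrt{d/2}$ for $d\ge1$. The other is Gaussian concentration of the $1$-Lipschitz map $z\mapsto\|z\|_2$, which gives $\operatorname{Var}\|z\|_2\le 1$ and hence $(\mathbb{E}\|z\|)^2\ge d-1$; this route needs separate handling of $d=1$, where $\mathbb{E}|z|=\sqrt{2/\pi}$.

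Combining the two bounds gives $\sqrt{d/3}\,\sigma\le\mathbb{E}\|g\|_2\le\sigma\sqrt d$, i.e.\ $\Theta(\sigma\sqrt d)$ with explicit constants. This is exactly the form in which the lemma will be invoked with $d=P_{\mathrm{A}}$ and $d=h$ for Theorem~\ref{thm:hacking-scaling-main}.
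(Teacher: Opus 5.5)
Your proof is correct, and its supremum step and Jensen upper bound are the same as the paper's. The paper uses pointwise Cauchy--Schwarz with maximizer $u=g/\|g\|_2$, and its Jensen step appears in the restated version, Lemma~\ref{lem:gauss-sup}.

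The difference is the lower bound. The paper's proof of this lemma only asserts that $\|g\|_2/\sigma\sim\chi_d$ has mean $\Theta(\sqrt d)$. The restated version cites the classical bound $\mathbb{E}\chi_d\ge\sqrt{d-1}$, obtained from gamma-function estimates of $\sqrt2\,\Gamma(\tfrac{d+1}{2})/\Gamma(\tfrac d2)$. Your moment-comparison route uses only $\mathbb{E}\|z\|_2^2=d$ and $\mathbb{E}\|z\|_2^4=d^2+2d$. It is self-contained and gives the constant $1/\sqrt3$ uniformly for every $d\ge1$. By contrast, $\sqrt{d-1}$ vanishes at $d=1$, so the paper's route tacitly needs that case handled separately to conclude $\Theta(\sigma\sqrt d)$. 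What the gamma-function route buys is the sharp asymptotic constant, $\mathbb{E}\chi_d/\sqrt d\to1$, which does not matter for a $\Theta$ statement.

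Two small fixes to your write-up:
\begin{itemize}
\item Cauchy--Schwarz on the split $\|z\|^{1/2}\cdot\|z\|^{3/2}$ gives $\mathbb{E}\|z\|\ge(\mathbb{E}\|z\|^2)^2/\mathbb{E}\|z\|^3$, not the fourth-moment form you state. Either add the step $\mathbb{E}\|z\|^3\le(\mathbb{E}\|z\|^2)^{1/2}(\mathbb{E}\|z\|^4)^{1/2}$, or split as $\|z\|^{2/3}\cdot\|z\|^{4/3}$ with H\"older exponents $(3/2,3)$, which gives $(\mathbb{E}\|z\|^2)^3\le(\mathbb{E}\|z\|)^2\,\mathbb{E}\|z\|^4$ directly.
\item In your alternative, plain Gautschi ($x^{1-s}<\Gamma(x+1)/\Gamma(x+s)$ with $x=\tfrac{d-1}{2}$, $s=\tfrac12$) gives only $\sqrt{d-1}$, the paper's constant. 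Getting $\sqrt{d-1/2}$ requires Kershaw's refinement $(x+s/2)^{1-s}<\Gamma(x+1)/\Gamma(x+s)$, plus a direct check at $d=1$.
\end{itemize}
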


\begin{proof}
For any fixed \(g\), Cauchy--Schwarz gives
\[
  \sup_{\|u\|_2\le 1} g^\top u=\|g\|_2,
\]
with equality at \(u=g/\|g\|_2\) when \(g\neq 0\). Since \(g\sim\mathcal{N}(0,\sigma^2 I_d)\), \(\|g\|_2/\sigma\) follows a \(\chi_d\) distribution, whose expectation is \(\Theta(\sqrt{d})\).
\end{proof}

\begin{proof}[Proof of Theorem~\ref{thm:hacking-scaling-main}]
For the separate-model setting, \(\mathcal{H}_{\mathrm{sep}}\) is the unit ball in the Actor parameter space. Therefore,
\[
  \sup_{\Delta\theta\in\mathcal{H}_{\mathrm{sep}}}
  \Delta_{\mathrm{hack}}(\theta_0;\Delta\theta)
  =
  \sup_{\|\Delta\theta_{\mathrm{A}}\|_2\le 1}
  g_{\mathrm{hack}}(\theta_0)^\top\Delta\theta_{\mathrm{A}}
  =
  \|g_{\mathrm{hack}}(\theta_0)\|_2 .
\]
By Assumption~\ref{ass:isotropic-hacking} and Lemma~\ref{lem:gaussian-sup-app} with \(d=P_{\mathrm{A}}\),
\[
  \mathbb{E}
  \left[
    \sup_{\Delta\theta\in\mathcal{H}_{\mathrm{sep}}}
    \Delta_{\mathrm{hack}}(\theta_0;\Delta\theta)
  \right]
  =
  \Theta(\sigma\sqrt{P_{\mathrm{A}}}).
\]

For the shared-backbone setting, Assumption~\ref{ass:shared-backbone-constraint} gives
\(\Delta\theta_{\mathrm{sh}}=0\) for every \(\Delta\theta\in\mathcal{H}_{\mathrm{shr}}\), and by construction \(\Delta\theta_{\mathrm{J}}=\Delta\theta_{\mathrm{E}}=0\). Hence every admissible perturbation lies in the Actor head subspace:
\[
  \Delta\theta=(0,\Delta\theta_{\mathrm{A}},0,0),
  \qquad
  \|\Delta\theta_{\mathrm{A}}\|_2\le 1 .
\]
Thus,
\[
  \sup_{\Delta\theta\in\mathcal{H}_{\mathrm{shr}}}
  \Delta_{\mathrm{hack}}(\theta_0;\Delta\theta)
  =
  \sup_{\|\Delta\theta_{\mathrm{A}}\|_2\le 1}
  \left(\mathrm{Proj}_{\mathrm{A}}g_{\mathrm{hack}}(\theta_0)\right)^\top
  \Delta\theta_{\mathrm{A}} .
\]
By Assumption~\ref{ass:isotropic-hacking} and Lemma~\ref{lem:gaussian-sup-app} with \(d=h\),
\[
  \mathbb{E}
  \left[
    \sup_{\Delta\theta\in\mathcal{H}_{\mathrm{shr}}}
    \Delta_{\mathrm{hack}}(\theta_0;\Delta\theta)
  \right]
  =
  \mathbb{E}
  \left[
    \left\|\mathrm{Proj}_{\mathrm{A}}g_{\mathrm{hack}}(\theta_0)\right\|_2
  \right]
  \lesssim
  \sigma\sqrt{h}.
\]
This proves both claims.
\end{proof}

\paragraph{Effective-rank variant.}
If \(\mathrm{Null}(\Sigma_{\mathrm{JE}})\) has dimension \(k>0\), the same proof gives a shared-architecture scaling of order
\[
  \Theta(\sigma\sqrt{h+k}),
\]
where \(k\) is the number of Judge- and Editor-neutral backbone directions. Thus the relevant quantity is the dimension of the role-private or role-neutral subspace, rather than the total backbone parameter count.

\section{Proofs for One Brain Three Roles}
\label{app:theory}

Throughout, $X$ denotes a task sampled from $\mathcal{D}$, $Y$ a PDDL specification produced by the policy, and $R,C \in \{0,1\}$ are respectively the \emph{solver success indicator} (``solvability'') and the \emph{ground-truth plan correctness indicator} (``correctness'') of the final plan for $(X,Y)$: $R=1$ iff the PDDL solver parses $Y$ and returns a plan that reaches its goal under the semantics encoded by $(X,Y)$, and $C=1$ iff that final plan is correct under the true task semantics.
We write $C(\theta) = \mathbb{E}[C]$ for the expected correctness under policy $\pi_\theta$.

\subsection{Judge--Correctness Consistency}
\label{app:judge-consistency}

\paragraph{Deterministic oracles; randomness only from sampling.}
Although the planner and the validator are deterministic maps for any fixed
natural-language task $x$ and PDDL specification $y$, the pair $(X,Y)$ is random
because (i) tasks are sampled as $X\sim\mathcal{D}$ and (ii) specifications are
sampled as $Y\sim\pi_\theta(\cdot\mid X)$ (e.g., via stochastic decoding).
Hence, $R$ and $C$ are random variables through $(X,Y)$, even though
their dependence on $(x,y)$ is deterministic.

Formally, there exist measurable functions
$r,c:\mathcal{X}\times\mathcal{Y}\to\{0,1\}$ such that
\begin{equation}
  R \;=\; r(X,Y)\in\{0,1\},
  \qquad
  C \;=\; c(X,Y)\in\{0,1\}.
  \label{eq:det-RC}
\end{equation}
We define the conditional success and correctness probabilities
\begin{equation}
\eta(x,y) \coloneqq \mathbb{E}[R\mid X=x,Y=y] = \Pr(R=1\mid X=x,Y=y),
\end{equation}
\begin{equation}
\kappa(x,y) \coloneqq \mathbb{E}[C\mid X=x,Y=y] = \Pr(C=1\mid X=x,Y=y).
\label{eq:eta-kappa}
\end{equation}

Under determinism, $\eta(x,y)=r(x,y)$ and $\kappa(x,y)=c(x,y)$ are $\{0,1\}$-valued.
We keep the probabilistic notation because all bounds below are taken with respect
to the \emph{distribution of} $(X,Y)$ induced by $(\mathcal{D},\pi_\theta)$.

\paragraph{Bounded solver--correctness drift (on-policy).}
We allow ``specification drift'' in which the planner declares success for a
generated specification while the resulting behavior is not correct under the
ground-truth validator; we assume such drift is rare on the distribution induced
by policies of interest.

\begin{assumption}[Bounded solver--correctness drift rate.]
\label{ass:bounded-noise}
There exists $\varepsilon\in[0,1/2)$ such that for every policy $\pi_\theta$
considered in the analysis,
\begin{equation}
  \varepsilon_\theta
  \;\coloneqq\;
  \Pr_{X\sim\mathcal{D},\,Y\sim\pi_\theta(\cdot\mid X)}\!\bigl(R\neq C\bigr)
  \;\le\; \varepsilon.
  \label{eq:drift-rate}
\end{equation}
\end{assumption}
Justification and empirical results in \ref{app:epsilon-justification}

\paragraph{Judge training and calibration.}
The Judge outputs $s_J:\mathcal{X}\times\mathcal{Y}\to(0,1)$ and is trained by
conditional cross-entropy for predicting the planner outcome $R$:
\begin{equation}
  \mathcal{L}(s)
  \;=\;
  \mathbb{E}\!\left[
    - R \log s(X,Y) - (1-R)\log\bigl(1-s(X,Y)\bigr)
  \right],
  \label{eq:judge-ce}
\end{equation}
where the expectation is taken over the joint distribution of $(X,Y)$ used to
train the Judge (typically the on-policy distribution induced by the Actor).
Let $s_J^{*}(x,y)$ denote the Bayes pointwise minimizer of~\eqref{eq:judge-ce}.
We quantify approximation/calibration by
\begin{equation}
  \delta_J
  \;\coloneqq\;
  \sup_{x,y}\bigl|s_J(x,y)-s_J^*(x,y)\bigr|.
  \label{eq:judge-cal-err}
\end{equation}

\begin{lemma}[Bayes-optimal Judge under cross-entropy]
\label{lem:judge-bayes}
For every $(x,y)$, the pointwise minimizer of~\eqref{eq:judge-ce} satisfies
\begin{equation}
  s_J^*(x,y)
  \;=\;
  \eta(x,y)
  \;=\;
  \Pr(R=1\mid X=x,Y=y).
  \label{eq:bayes}
\end{equation}
\end{lemma}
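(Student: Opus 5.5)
The plan is to reduce the population cross-entropy in \eqref{eq:judge-ce} to a family of one-dimensional problems, one for each pair $(x,y)$, by conditioning. This is the same pointwise argument as in Lemma~\ref{lem:judge-bayes-app}.

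\emph{Step 1: decompose the loss.} By the tower property,
\[
\mathcal{L}(s)=\mathbb{E}_{(X,Y)}\!\left[\ell\bigl(s(X,Y);\eta(X,Y)\bigr)\right],
\qquad
\ell(u;\eta)\coloneqq-\eta\log u-(1-\eta)\log(1-u).
\]
This holds because, given $(X,Y)=(x,y)$, the loss is affine in $R$, and $\mathbb{E}[R\mid X=x,Y=y]=\eta(x,y)$ by the definition in \eqref{eq:eta-kappa}. Since the integrand depends on $s$ only through its value at $(X,Y)$, the Bayes pointwise minimizer $s_J^*$ is obtained by minimizing $\ell(\cdot;\eta(x,y))$ separately for each $(x,y)$.

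\emph{Step 2: solve the scalar problem.} Fix $\eta\in[0,1]$. For $\eta\in(0,1)$ we have
\[
\ell'(u)=-\frac{\eta}{u}+\frac{1-\eta}{1-u},
\]
which vanishes only at $u=\eta$. Moreover,
\[
\ell''(u)=\frac{\eta}{u^2}+\frac{1-\eta}{(1-u)^2}>0,
\]
so $u=\eta$ is the unique minimizer on $(0,1)$.

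\emph{Step 3: the boundary cases.} This is the only real subtlety, and it matters here because, under the determinism in \eqref{eq:det-RC}, $\eta(x,y)=r(x,y)\in\{0,1\}$. When $\eta=1$, the loss $\ell(u)=-\log u$ is strictly decreasing on $(0,1)$, so the infimum $0$ is approached only as $u\to1$. When $\eta=0$, the infimum is approached only as $u\to0$. In both cases the minimizer over $(0,1)$ is not attained. I will handle this by extending $\ell$ to $[0,1]$ with the conventions $0\log0=0$ and $-\log 0=+\infty$, which makes $\ell(\cdot;\eta)$ a lower-semicontinuous convex function with unique minimizer $u=\eta$. Equivalently, one can read ``pointwise minimizer'' as the limit of any minimizing sequence.

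Combining the three steps gives $s_J^*(x,y)=\eta(x,y)=\Pr(R=1\mid X=x,Y=y)$ for every $(x,y)$. This identifies $s_J^*(x,y)$ with $r(x,y)$ almost surely, which is the identification needed for the subsequent consistency bound.
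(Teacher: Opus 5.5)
Your proposal is correct and follows essentially the paper's argument: fix $(x,y)$, reduce to the scalar loss $\ell(\cdot;\eta)$, and identify $u=\eta$ as the unique minimizer from the vanishing first derivative and strict convexity on $(0,1)$. Your Step~3 is a worthwhile addition that the paper omits: determinism forces $\eta(x,y)\in\{0,1\}$, so the paper's interior critical-point argument never applies in this model, and your extension of $\ell$ to $[0,1]$ (or equivalently, reading $s_J^*$ as the limit of minimizing sequences) is what makes $s_J^*=\eta$ well defined.
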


\begin{proof}
Fix $(x,y)$ and write $\eta=\eta(x,y)$.
The conditional cross-entropy for predicting $R$ from a score $s\in(0,1)$ is
$\ell(s;\eta)=-\eta\log s-(1-\eta)\log(1-s)$.
Differentiating gives
$\frac{\partial\ell}{\partial s}(s;\eta)=-\eta/s+(1-\eta)/(1-s)$, which vanishes
only at $s=\eta$.
Moreover,
$\frac{\partial^2\ell}{\partial s^2}(s;\eta)=\eta/s^2+(1-\eta)/(1-s)^2>0$ for all
$s\in(0,1)$, hence $\ell(\cdot;\eta)$ is strictly convex and $s=\eta$ is the
unique minimizer.
\end{proof}

\paragraph{Bounding solvability vs.\ correctness (distributionally).}
The key deterministic fact is that, since $R,C\in\{0,1\}$ and both are deterministic
functions of $(X,Y)$, the pointwise gap $|\eta(X,Y)-\kappa(X,Y)|$ is exactly the
drift indicator $\mathbb{I}\{R\neq C\}$.

\begin{lemma}[Distributional solvability--correctness gap]
\label{lem:eta-kappa-gap}
Under Assumption~\ref{ass:bounded-noise},
\begin{equation}
  \mathbb{E}\bigl[\,|\eta(X,Y)-\kappa(X,Y)|\,\bigr]
  \;=\;
  \Pr(R\neq C)
  \;\le\;
  \varepsilon.
  \label{eq:eta-kappa-L1}
\end{equation}
Consequently,
\begin{equation}
  \bigl|\mathbb{E}[R]-\mathbb{E}[C]\bigr|
  \;\le\;
  \varepsilon.
  \label{eq:RC-gap}
\end{equation}
\end{lemma}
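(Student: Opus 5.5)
The plan is to exploit the determinism recorded in Eq.~\eqref{eq:det-RC}: both $\eta$ and $\kappa$ are $\{0,1\}$-valued, so the pointwise gap is itself an indicator of disagreement. First, since $R=r(X,Y)$ and $C=c(X,Y)$ are deterministic functions of $(X,Y)$, I will note that $\eta(X,Y)=\mathbb{E}[R\mid X,Y]=r(X,Y)=R$ and $\kappa(X,Y)=c(X,Y)=C$ almost surely. The only care needed is to phrase this as equality of conditional expectations of measurable functions of the conditioning variables, rather than pointwise claims off the support.

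Second, I will check the identity $|\eta(X,Y)-\kappa(X,Y)|=|R-C|=\mathbb{I}\{R\neq C\}$ pointwise. This follows by exhausting the four cases $(R,C)\in\{0,1\}^2$: the gap is $0$ on the diagonal and $1$ off it. Taking expectations then gives $\mathbb{E}[|\eta-\kappa|]=\Pr(R\neq C)=\varepsilon_\theta$. Assumption~\ref{ass:bounded-noise} bounds this by $\varepsilon$, which establishes Eq.~\eqref{eq:eta-kappa-L1}.

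Third, for Eq.~\eqref{eq:RC-gap}, I will use the tower property to write $\mathbb{E}[R]=\mathbb{E}[\eta(X,Y)]$ and $\mathbb{E}[C]=\mathbb{E}[\kappa(X,Y)]$. Jensen's inequality (equivalently, $|\mathbb{E}Z|\le\mathbb{E}|Z|$) then gives
\[
  \bigl|\mathbb{E}[R]-\mathbb{E}[C]\bigr|
  \le
  \mathbb{E}\bigl[|\eta(X,Y)-\kappa(X,Y)|\bigr]
  \le
  \varepsilon .
\]

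I do not expect a genuine obstacle. The only delicate point is the first step: justifying that the conditional probabilities coincide with the deterministic labels almost surely under the policy-induced law. I would handle it by noting that conditioning on $(X,Y)$ leaves no randomness in $r(X,Y)$ or $c(X,Y)$.
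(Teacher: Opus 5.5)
Your proposal is correct and matches the paper's proof. Both use determinism to identify $\eta(X,Y)=R$ and $\kappa(X,Y)=C$ almost surely, note that $|\eta-\kappa|=|R-C|=\mathbb{I}\{R\neq C\}$, take expectations, and conclude with $|\mathbb{E}Z|\le\mathbb{E}|Z|$; the only cosmetic difference is that you apply this to $\eta-\kappa$ via the tower property while the paper applies it directly to $R-C$, which is the same step once $\eta=R$ and $\kappa=C$ almost surely.
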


\begin{proof}
Because $R=r(X,Y)$ and $C=c(X,Y)$ are $\{0,1\}$-valued, almost surely
$|R-C|=\mathbb{I}\{R\neq C\}$.
Under determinism, $\eta(X,Y)=\mathbb{E}[R\mid X,Y]=R$ and
$\kappa(X,Y)=\mathbb{E}[C\mid X,Y]=C$ almost surely.
Thus $|\eta(X,Y)-\kappa(X,Y)|=|R-C|=\mathbb{I}\{R\neq C\}$.
Taking expectations yields~\eqref{eq:eta-kappa-L1}, and
\eqref{eq:RC-gap} follows from Jensen:
$|\mathbb{E}[R]-\mathbb{E}[C]|=|\mathbb{E}[R-C]|
\le \mathbb{E}[|R-C|]\le \varepsilon$.
\end{proof}

\begin{theorem}[Judge--correctness consistency]
\label{thm:judge-correctness}
Let $s_J^*$ be the Bayes-optimal Judge from Lemma~\ref{lem:judge-bayes}, and
let $s_J$ be any learned Judge with calibration error $\delta_J$ defined in
\eqref{eq:judge-cal-err}. Under Assumption~\ref{ass:bounded-noise},
\begin{align}
  \mathbb{E}\bigl[\,|s_J(X,Y)-\kappa(X,Y)|\,\bigr]
  &\;\le\;
  \varepsilon+\delta_J,
  \label{eq:judge-L1}\\
  \Pr\bigl(|s_J(X,Y)-\kappa(X,Y)|>\delta_J\bigr)
  &\;\le\;
  \varepsilon.
  \label{eq:judge-hp}
\end{align}
Moreover, if $Y_1,Y_2\overset{\mathrm{i.i.d.}}{\sim}\pi_\theta(\cdot\mid X)$, then
the Judge's \emph{margin} controls correctness ranking up to drift:
\begin{equation}
  \Pr\!\Bigl(
    C(X,Y_1) < C(X,Y_2)
    \ \wedge\
    s_J(X,Y_1) \ge s_J(X,Y_2) + 2\delta_J
  \Bigr)
  \;\le\;
  2\varepsilon.
  \label{eq:judge-ranking}
\end{equation}
\end{theorem}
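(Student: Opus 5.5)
The plan is a pointwise triangle inequality through the Bayes-optimal Judge, with Lemma~\ref{lem:judge-bayes} and Lemma~\ref{lem:eta-kappa-gap} supplying the two pieces. By Lemma~\ref{lem:judge-bayes}, $s_J^*(x,y)=\eta(x,y)$. Under the determinism in \eqref{eq:det-RC}, $\eta(X,Y)=R$ and $\kappa(X,Y)=C$ almost surely. For every realization we therefore have
\[
|s_J(X,Y)-\kappa(X,Y)|\le |s_J(X,Y)-s_J^*(X,Y)|+|\eta(X,Y)-\kappa(X,Y)|\le \delta_J+\mathbb{I}\{R\neq C\},
\]
where the first term is bounded using the sup definition \eqref{eq:judge-cal-err}. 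Taking expectations and applying \eqref{eq:eta-kappa-L1} from Lemma~\ref{lem:eta-kappa-gap} gives \eqref{eq:judge-L1}.

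For the high-probability statement \eqref{eq:judge-hp}, the same pointwise inequality is used on events rather than in expectation. On $\{R=C\}$ the indicator vanishes, so $|s_J-\kappa|\le\delta_J$ there. Hence $\{|s_J-\kappa|>\delta_J\}\subseteq\{R\neq C\}$, and Assumption~\ref{ass:bounded-noise} bounds the probability of the latter by $\varepsilon$.

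For the ranking bound \eqref{eq:judge-ranking}, let $G=\{R(X,Y_i)=C(X,Y_i),\ i=1,2\}$. Since $Y_1$ and $Y_2$ are each marginally distributed as the on-policy pair $(X,Y)$, a union bound gives $\Pr(G^c)\le 2\varepsilon$. It remains to show the bad event is empty on $G$. There, $C(X,Y_1)<C(X,Y_2)$ forces $C(X,Y_1)=0$ and $C(X,Y_2)=1$. The pointwise bound then gives $s_J(X,Y_1)\le\delta_J$ and $s_J(X,Y_2)\ge 1-\delta_J$, so
\[
s_J(X,Y_1)-s_J(X,Y_2)\le 2\delta_J-1.
\]
For this to contradict $s_J(X,Y_1)\ge s_J(X,Y_2)+2\delta_J$ we need $2\delta_J-1<2\delta_J$, which always holds. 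So the mis-ranking event lies in $G^c$.

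The step I expect to need the most care is justifying that the marginal of each $(X,Y_i)$ coincides with the on-policy distribution in Assumption~\ref{ass:bounded-noise}; this holds because the $Y_i$ are i.i.d.\ from $\pi_\theta(\cdot\mid X)$. Everything else is pointwise bookkeeping, and no independence between the two draws is needed because only the union bound is used.
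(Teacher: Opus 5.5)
Your proposal is correct and follows essentially the same route as the paper: a triangle inequality through $s_J^*=\eta$ together with Lemma~\ref{lem:eta-kappa-gap} for the $L^1$ bound, the inclusion $\{|s_J-\kappa|>\delta_J\}\subseteq\{R\neq C\}$ for the tail bound, and a union bound over the agreement event $G$ for the ranking claim. The only difference is cosmetic: on $G$ you use the binary values directly to get $s_J(X,Y_1)-s_J(X,Y_2)\le 2\delta_J-1$, whereas the paper carries the $2\delta_J$ margin over to $\eta(X,Y_1)\ge\eta(X,Y_2)$ and hence to $C(X,Y_1)\ge C(X,Y_2)$; both arguments are valid.
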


\begin{proof}
By Lemma~\ref{lem:judge-bayes}, $s_J^*(x,y)=\eta(x,y)$.

\emph{(i) Expected consistency.}
By triangle inequality,
$|s_J-\kappa|\le |s_J-s_J^*|+|s_J^*-\kappa|=|s_J-s_J^*|+|\eta-\kappa|$.
Taking expectations and using $\mathbb{E}[|s_J-s_J^*|]\le\delta_J$ and
Lemma~\ref{lem:eta-kappa-gap} gives~\eqref{eq:judge-L1}.

\emph{(ii) High-probability consistency.}
On the event $\{R=C\}$ we have $\kappa=\eta=s_J^*$, hence
$|s_J-\kappa|=|s_J-s_J^*|\le\delta_J$.Therefore
$\{|s_J-\kappa|>\delta_J\}\subseteq\{R\neq C\}$ and
$\Pr(|s_J-\kappa|>\delta_J)\le\Pr(R\neq C)\le\varepsilon$, proving
\eqref{eq:judge-hp}.

\emph{(iii) Ranking with a margin.}
Let $G\coloneqq\{R(X,Y_1)=C(X,Y_1)\}\cap\{R(X,Y_2)=C(X,Y_2)\}$.
By a union bound and Assumption~\ref{ass:bounded-noise},
$\Pr(G^c)\le \Pr(R\neq C \text{ for }(X,Y_1))+\Pr(R\neq C \text{ for }(X,Y_2))
\le 2\varepsilon$.
On $G$, we have $C(X,Y_i)=R(X,Y_i)=\eta(X,Y_i)$ for $i=1,2$, and
$|s_J(X,Y_i)-\eta(X,Y_i)|\le\delta_J$.
Hence on $G$,
$s_J(X,Y_1)\ge s_J(X,Y_2)+2\delta_J \Rightarrow \eta(X,Y_1)\ge \eta(X,Y_2)
\Rightarrow C(X,Y_1)\ge C(X,Y_2)$.
Thus the event in~\eqref{eq:judge-ranking} can only occur on $G^c$, so its
probability is at most $2\varepsilon$.
\end{proof}

\subsection{Justifying the Drift-Rate Parameter $\varepsilon$}
\label{app:epsilon-justification}

This subsection justifies Assumption~\ref{ass:bounded-noise} (bounded solver--correctness drift rate)
and outlines how to set $\varepsilon$ in a deterministic planning environment.

\paragraph{Deterministic oracles; on-policy drift rate.}
The planner and the (ground-truth) validator are deterministic given a task $x$ and a specification $y$:
\[
  R=r(x,y)\in\{0,1\},\qquad C=c(x,y)\in\{0,1\}.
\]
Randomness enters only through sampling $X\sim\mathcal D$ and stochastic generation
$Y\sim\pi_\theta(\cdot\mid X)$.
Define the drift indicator
\begin{equation}
  D \;\coloneqq\; \mathbb{I}\{R\neq C\}\in\{0,1\}.
  \label{eq:D-def}
\end{equation}
For a fixed policy $\pi_\theta$, we define the induced \emph{on-policy drift rate}
\begin{equation}
  \varepsilon_\theta
  \;\coloneqq\;
  \Pr_{X\sim\mathcal D,\;Y\sim\pi_\theta(\cdot\mid X)}(R\neq C)
  \;=\;
  \mathbb{E}[D]
  \;=\;
  \mathbb{E}\bigl[|R-C|\bigr].
  \label{eq:eps-theta}
\end{equation}
Assumption~\ref{ass:bounded-noise} postulates that for the policy class of interest,
$\sup_{\theta}\varepsilon_\theta \le \varepsilon$ for some finite $\varepsilon<\tfrac12$.
Note that this assumption is \emph{distributional}: it bounds the \emph{frequency} of drift under the
policy-induced distribution, rather than imposing a pointwise (conditional) noise model.

\paragraph{Empirical evidence from solvability vs.\ correctness}
\citeauthor{huang2025limitlanguagemodelsplanning} (\citeyear{huang2025limitlanguagemodelsplanning}) report two evaluation metrics
closely aligned with our notation:
\emph{Solvability} (whether a planner finds a plan) and
\emph{Correctness} (whether the plan is accepted under a higher-fidelity reference semantics/validator).
In their experiments across domains and models (e.g. GPT-40, Llama-8B), the observed gap between the two metrics is typically small (typically well below $0.1$ $<$ $1/2$),
suggesting that solver--correctness drift is limited for realistic LLM policies.

Formally, for any fixed $\pi_\theta$ define the induced on-policy rates
\[
  S(\theta)\coloneqq \Pr(R=1),
  \qquad
  K(\theta)\coloneqq \Pr(C=1),
\]
where probabilities are over $(X,Y)\sim(\mathcal D,\pi_\theta)$.
Then the observable solvability--correctness gap is always controlled by the drift rate:
\begin{equation}
  |S(\theta)-K(\theta)|
  \;=\;
  \bigl|\mathbb{E}[R]-\mathbb{E}[C]\bigr|
  \;\le\;
  \mathbb{E}\bigl[|R-C|\bigr]
  \;=\;
  \varepsilon_\theta.
  \label{eq:gap-vs-drift}
\end{equation}
Moreover, under the common evaluation convention that correctness is defined only for solver-produced plans
(so $C\le R$ almost surely), the gap equals the (false-positive) drift probability:
\begin{equation}
  \varepsilon_\theta
  \;=\;
  \Pr(R=1,C=0)
  \;=\;
  \Pr(R=1)-\Pr(C=1)
  \;=\;
  S(\theta)-K(\theta).
  \label{eq:gap-equals-drift}
\end{equation}
Thus, the small empirical solvability--correctness gaps reported by
\citeauthor{huang2025limitlanguagemodelsplanning} provide direct evidence that $\varepsilon_\theta$
is small in practice for realistic policies, and in particular that taking a uniform constant
$\varepsilon<\tfrac12$ is reasonable.

\paragraph{Estimating $\varepsilon_\theta$ and choosing a uniform $\varepsilon$.}
For a fixed policy $\pi_\theta$, we can estimate $\varepsilon_\theta$ by sampling
$\{X_i\}_{i=1}^n\sim\mathcal D$, generating $Y_i\sim\pi_\theta(\cdot\mid X_i)$, and computing
$R_i=r(X_i,Y_i)$ and $C_i=c(X_i,Y_i)$, yielding the empirical drift rate
\begin{equation}
  \widehat{\varepsilon}_\theta
  \;\coloneqq\;
  \frac{1}{n}\sum_{i=1}^n \mathbb{I}\{R_i\neq C_i\}.
  \label{eq:eps-hat}
\end{equation}
By Hoeffding's inequality, for any $\delta\in(0,1)$, with probability at least $1-\delta$,
\begin{equation}
  \varepsilon_\theta
  \;\le\;
  \widehat{\varepsilon}_\theta
  +
  \sqrt{\frac{\log(1/\delta)}{2n}}.
  \label{eq:eps-hoeffding}
\end{equation}
To obtain a uniform bound over a finite set of checkpoints $\Theta_{\mathrm{ckpt}}$, we may set
\begin{equation}
  \varepsilon
  \;\coloneqq\;
  \sup_{\theta\in\Theta_{\mathrm{ckpt}}}\widehat{\varepsilon}_\theta
  \;+\;
  \sqrt{\frac{\log(|\Theta_{\mathrm{ckpt}}|/\delta)}{2n}},
  \label{eq:eps-choice}
\end{equation}
so that (by a union bound) $\sup_{\theta\in\Theta_{\mathrm{ckpt}}}\varepsilon_\theta\le \varepsilon$
holds with probability at least $1-\delta$.
Our theory only requires that $\varepsilon$ is finite and satisfies $\varepsilon<\tfrac12$,
i.e., drift is not the majority behavior on-policy.

\subsection{Why Actor--Editor: Actor-only vs.\ Editor-only vs.\ Actor--Editor}
\label{app:why-actor-editor}

We compare three training designs for PDDL formalization:
\emph{Actor-only}, \emph{Editor-only} (a single monolithic Editor-like policy), and \emph{Actor--Editor}.
All results are stated for deterministic planner/validator semantics; all probabilities are over
$(X,Y)$ induced by task sampling $X\sim\mathcal D$ and stochastic generation $Y\sim\pi(\cdot\mid X)$.

\paragraph{Setup.}
Let $R=r(X,Y)\in\{0,1\}$ be solver success and $C=c(X,Y)\in\{0,1\}$ be true correctness.
For any policy that produces an \emph{initial} specification $Y_0$, define
\begin{equation}
  C_{\mathrm{init}}(\pi)
  \;\coloneqq\;
  \mathbb{E}\bigl[C(X,Y_0)\bigr]\in[0,1].
  \label{eq:Cinit-A3}
\end{equation}

\begin{assumption}[Bounded solver--correctness drift rate]
\label{ass:bounded-noise-A3}
There exists $\varepsilon\in[0,1/2)$ such that for every policy $\pi$ considered in this comparison,
\begin{equation}
  \Pr_{X\sim\mathcal D,\;Y\sim\pi(\cdot\mid X)}\!\bigl(r(X,Y)\neq c(X,Y)\bigr)
  \;\le\;
  \varepsilon.
  \label{eq:drift-A3}
\end{equation}
\end{assumption}

\paragraph{A generic optimization-to-alignment lemma.}
\begin{lemma}[Generic gap bound]
\label{lem:generic-gap-A3}
Let $C(\theta)\in[0,1]$ and $J(\theta)\in\mathbb{R}$ satisfy
\begin{equation}
  \bigl|J(\theta)-(a\,C(\theta)+b)\bigr|\le \Delta
  \qquad\forall\theta,
  \label{eq:affine-A3}
\end{equation}
for some $a>0$, $b\in\mathbb{R}$, $\Delta\ge 0$.
If $\theta_J\in\arg\max_\theta J(\theta)$ and $\theta_C\in\arg\max_\theta C(\theta)$, then
\begin{equation}
  C(\theta_C)-C(\theta_J)\;\le\;\frac{2\Delta}{a}.
  \label{eq:gap-A3}
\end{equation}
\end{lemma}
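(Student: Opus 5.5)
The plan is a standard sandwich argument that compares the two maximizers through the affine surrogate. Write $\phi(\theta)\coloneqq a\,C(\theta)+b$. By \eqref{eq:affine-A3}, every $\theta$ satisfies
\[
\phi(\theta)-\Delta\le J(\theta)\le \phi(\theta)+\Delta .
\]
We use this two-sided bound once at $\theta_C$ and once at $\theta_J$. The only other ingredient is the optimality of $\theta_J$ for $J$.

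The steps are as follows. First, since $\theta_J$ maximizes $J$, we have $J(\theta_J)\ge J(\theta_C)$. Second, lower-bound $J(\theta_C)$ by $a\,C(\theta_C)+b-\Delta$, and upper-bound $J(\theta_J)$ by $a\,C(\theta_J)+b+\Delta$. Chaining these three inequalities gives
\[
a\,C(\theta_C)+b-\Delta\le J(\theta_C)\le J(\theta_J)\le a\,C(\theta_J)+b+\Delta .
\]
Cancelling $b$ yields $a\bigl(C(\theta_C)-C(\theta_J)\bigr)\le 2\Delta$. Dividing by $a>0$, without flipping the inequality, gives \eqref{eq:gap-A3}.

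There is essentially no obstacle. The one point to watch is that the maximizers must exist, and the lemma's hypotheses assume this. If they did not, I would run the same argument with $\eta$-approximate maximizers and obtain $(2\Delta+\eta)/a$, then let $\eta\to0$. The optimality of $\theta_C$ for $C$ is not even needed: the bound holds against any comparator $\theta$, and it is nontrivial because $C(\theta_C)\ge C(\theta_J)$.

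Downstream, we would instantiate $J$ as the expected initial solver reward, take $a=1$ and $b=0$, and use Assumption~\ref{ass:bounded-noise-A3} together with Lemma~\ref{lem:eta-kappa-gap} to get $\Delta=\varepsilon$. The Actor-only design then loses at most $2\varepsilon$ in initial correctness.
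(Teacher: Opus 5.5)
Your proof is correct and matches the paper's: it chains $aC(\theta_J)+b+\Delta\ge J(\theta_J)\ge J(\theta_C)\ge aC(\theta_C)+b-\Delta$ and divides by $a>0$. You are also right that only the optimality of $\theta_J$ is used; the paper invokes both maximizers, but the chain needs only $J(\theta_J)\ge J(\theta_C)$.
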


\begin{proof}
From~\eqref{eq:affine-A3}, for any $\theta$,
$J(\theta)\ge aC(\theta)+b-\Delta$ and $J(\theta)\le aC(\theta)+b+\Delta$.
Using optimality of $\theta_J$ and $\theta_C$ yields
$aC(\theta_J)+b+\Delta\ge J(\theta_J)\ge J(\theta_C)\ge aC(\theta_C)+b-\Delta$,
hence $a(C(\theta_C)-C(\theta_J))\le 2\Delta$.
\end{proof}

\subsubsection*{Actor-only: solver reward aligns initial correctness on-policy}

\paragraph{Definition.}
Actor-only samples $Y_0\sim\pi_A(\cdot\mid X)$ and maximizes
\begin{equation}
  J_{\mathrm{A-only}}(\pi_A)
  \;\coloneqq\;
  \mathbb{E}\bigl[r(X,Y_0)\bigr].
  \label{eq:JAonly-A3}
\end{equation}

\begin{theorem}[Actor-only initial-alignment under drift rate]
\label{thm:actor-only-A3}
Under Assumption~\ref{ass:bounded-noise-A3}, for all $\pi_A$,
\begin{equation}
  \bigl|
    J_{\mathrm{A-only}}(\pi_A) - C_{\mathrm{init}}(\pi_A)
  \bigr|
  \;\le\;
  \varepsilon.
  \label{eq:actor-only-affine-A3}
\end{equation}
Consequently any maximizer $\pi_A^\star\in\arg\max_{\pi_A}J_{\mathrm{A-only}}(\pi_A)$ satisfies
\begin{equation}
  C_{\mathrm{init}}(\pi_A^\star)
  \;\ge\;
  \sup_{\pi_A}C_{\mathrm{init}}(\pi_A) - 2\varepsilon.
  \label{eq:actor-only-gap-A3}
\end{equation}
\end{theorem}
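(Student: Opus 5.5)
The proof has two parts: a pointwise-in-policy bound (the first inequality), and then a reduction of the second inequality to Lemma~\ref{lem:generic-gap-A3}. For a fixed Actor policy $\pi_A$, both quantities are expectations over the same on-policy distribution of $(X,Y_0)$, with $Y_0\sim\pi_A(\cdot\mid X)$. By definition \eqref{eq:JAonly-A3} and \eqref{eq:Cinit-A3}, we have $J_{\mathrm{A-only}}(\pi_A)-C_{\mathrm{init}}(\pi_A)=\mathbb{E}[r(X,Y_0)-c(X,Y_0)]$. Jensen's inequality (or simply the triangle inequality for expectations) gives
\[
\bigl|J_{\mathrm{A-only}}(\pi_A)-C_{\mathrm{init}}(\pi_A)\bigr|\le \mathbb{E}\bigl[|r(X,Y_0)-c(X,Y_0)|\bigr].
\]
Since $r$ and $c$ are $\{0,1\}$-valued deterministic maps, $|r-c|=\mathbb{I}\{r\neq c\}$. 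The right-hand side is therefore exactly the on-policy drift probability, which Assumption~\ref{ass:bounded-noise-A3} bounds by $\varepsilon$. This is the same computation as in Lemma~\ref{lem:eta-kappa-gap}, applied to the initial specification. It gives \eqref{eq:actor-only-affine-A3}.

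For the consequence, I apply Lemma~\ref{lem:generic-gap-A3} with $\theta$ ranging over Actor policies $\pi_A$, $J=J_{\mathrm{A-only}}$, $C=C_{\mathrm{init}}$, $a=1$, $b=0$, and $\Delta=\varepsilon$. The affine hypothesis \eqref{eq:affine-A3} is precisely the first inequality. The lemma then yields $C_{\mathrm{init}}(\pi_C)-C_{\mathrm{init}}(\pi_A^\star)\le 2\varepsilon$ for any maximizer $\pi_C$ of $C_{\mathrm{init}}$, which is \eqref{eq:actor-only-gap-A3}.

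The main obstacle is that $\sup_{\pi_A}C_{\mathrm{init}}$ may not be attained, whereas the lemma assumes a maximizer $\theta_C$. To avoid this, I will run the lemma's chain directly against an arbitrary competitor $\pi$ instead of a maximizer:
\[
C_{\mathrm{init}}(\pi_A^\star)+\varepsilon\ge J(\pi_A^\star)\ge J(\pi)\ge C_{\mathrm{init}}(\pi)-\varepsilon .
\]
Taking the supremum over $\pi$ then gives the claim. The other point to watch is that Assumption~\ref{ass:bounded-noise-A3} must hold for every policy in the class, including $\pi_A^\star$ and the competitors $\pi$. The assumption states this uniformly over the policies considered in the comparison, so this is exactly where its uniformity is used.
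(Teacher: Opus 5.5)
Your proposal is correct and follows the paper's proof essentially verbatim: you bound $|\mathbb{E}[r-c]|$ by $\mathbb{E}|r-c|=\Pr(r\neq c)\le\varepsilon$ and then invoke Lemma~\ref{lem:generic-gap-A3} with $a=1$, $b=0$, $\Delta=\varepsilon$. Running the chain against an arbitrary competitor $\pi$ rather than a maximizer of $C_{\mathrm{init}}$ is a small but real tightening, because the paper's direct appeal to the lemma tacitly assumes that $\sup_{\pi_A}C_{\mathrm{init}}(\pi_A)$ is attained.
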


\begin{proof}
Since $r,c\in\{0,1\}$,
\[
  \bigl|J_{\mathrm{A-only}}(\pi_A)-C_{\mathrm{init}}(\pi_A)\bigr|
  =
  \bigl|\mathbb{E}[r(X,Y_0)-c(X,Y_0)]\bigr|
  \le
  \mathbb{E}\bigl[|r-c|\bigr]
  =
  \Pr(r\neq c)
  \le \varepsilon,
\]
proving~\eqref{eq:actor-only-affine-A3}.
Apply Lemma~\ref{lem:generic-gap-A3} with $a=1$, $b=0$, $\Delta=\varepsilon$
to obtain~\eqref{eq:actor-only-gap-A3}.
\end{proof}

\subsubsection*{Editor-only: final-outcome objective cannot constrain initial correctness}

\paragraph{Monolithic Editor-only baseline.}
A monolithic policy initializes $Y_0\sim q(\cdot\mid X)$ and iteratively repairs:
\[
  Y_{t+1}\sim\pi_E^{\mathrm{mono}}(\cdot\mid X,Y_t,F_t,t),
  \qquad t=0,\dots,T-1.
\]
The objective depends only on the final solver outcome:
\begin{equation}
  J_{\mathrm{mono}}(q,\pi_E^{\mathrm{mono}})
  \;\coloneqq\;
  \mathbb{E}\bigl[r(X,Y_T)-\beta T\bigr].
  \label{eq:Jmono-A3}
\end{equation}

\begin{assumption}[Editor-saturated repair]
\label{ass:editor-saturated-A3}
There exist sets $\mathcal{Y}_{\mathrm{good}}(x)\subseteq\mathcal Y$ and constants $T^\dagger\in\mathbb{N}$, $\beta\ge 0$ such that:
\begin{enumerate}
  \item (\emph{Non-catastrophic initializations}) $\Pr(Y_0\in\mathcal{Y}_{\mathrm{good}}(X))=1$.
  \item (\emph{Uniform repair success}) There exists a repair policy $\pi_E^\dagger$ such that for all $x$ and all $y_0\in\mathcal{Y}_{\mathrm{good}}(x)$,
  \[
    \Pr_{\pi_E^\dagger}\bigl(r(x,Y_{T^\dagger})=1\mid X=x,Y_0=y_0\bigr)=1,
    \quad\text{and}\quad T=T^\dagger\ \text{a.s.}
  \]
\end{enumerate}
\end{assumption}

\begin{assumption}[Nontrivial good-set]
\label{ass:goodset-rich-A3}
For every $x$ there exist $y^+(x),y^-(x)\in\mathcal{Y}_{\mathrm{good}}(x)$ such that
$c(x,y^+(x))=1$ and $c(x,y^-(x))=0$.
\end{assumption}

\begin{theorem}[Editor-only unidentifiability of initial correctness]
\label{thm:editor-only-A3}
Under Assumptions~\ref{ass:editor-saturated-A3} and~\ref{ass:goodset-rich-A3}, for every $\alpha\in[0,1]$ there exists a monolithic policy $(q^\alpha,\pi_E^\dagger)$ such that
\begin{equation}
  J_{\mathrm{mono}}(q^\alpha,\pi_E^\dagger)=1-\beta T^\dagger
  \quad\text{and}\quad
  C_{\mathrm{init}}(q^\alpha)=\alpha.
  \label{eq:mono-alpha-A3}
\end{equation}
In particular, $J_{\mathrm{mono}}$ admits globally optimal policies with arbitrarily low $C_{\mathrm{init}}$.
\end{theorem}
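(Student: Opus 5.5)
The proof is a direct construction, mirroring Proposition~\ref{prop:editor-unidentifiable} but phrased through Assumptions~\ref{ass:editor-saturated-A3} and~\ref{ass:goodset-rich-A3}. Fix $\alpha\in[0,1]$. For each $x$, Assumption~\ref{ass:goodset-rich-A3} supplies $y^+(x),y^-(x)\in\mathcal{Y}_{\mathrm{good}}(x)$ with $c(x,y^+(x))=1$ and $c(x,y^-(x))=0$. Define the initializer
\[
q^\alpha(\cdot\mid x)=\alpha\,\delta_{y^+(x)}+(1-\alpha)\,\delta_{y^-(x)}.
\]
We pair $q^\alpha$ with the repair policy $\pi_E^\dagger$ from Assumption~\ref{ass:editor-saturated-A3}. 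If needed, we choose $x\mapsto y^\pm(x)$ measurably (or assume $\mathcal{X}$ is countable), so that $q^\alpha$ is a valid kernel.

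\textbf{Initial correctness.} Conditioning on $X$, we get $\mathbb{E}[c(X,Y_0)\mid X]=\alpha\cdot 1+(1-\alpha)\cdot 0=\alpha$. Taking the outer expectation gives $C_{\mathrm{init}}(q^\alpha)=\alpha$.

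\textbf{Return.} Since $q^\alpha$ is supported on $\mathcal{Y}_{\mathrm{good}}(X)$, item~1 of Assumption~\ref{ass:editor-saturated-A3} holds for this initializer. Item~2 then gives, conditionally on $(X,Y_0)$, that $r(X,Y_{T^\dagger})=1$ almost surely and $T=T^\dagger$ almost surely. Hence $r(X,Y_T)-\beta T=1-\beta T^\dagger$ almost surely. By the tower property, $J_{\mathrm{mono}}(q^\alpha,\pi_E^\dagger)=1-\beta T^\dagger$, which establishes~\eqref{eq:mono-alpha-A3}.

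\textbf{Optimality claim.} To show that this return is globally optimal, note that $r\le 1$ and $T\ge 0$ give $J_{\mathrm{mono}}\le 1$. This reaches $1-\beta T^\dagger$ only when $\beta T^\dagger=0$, which is not true in general. So the interpretation of ``globally optimal'' needs care. I would read it relative to the class of monolithic policies whose initializations obey item~1, with repair stopping exactly at $T^\dagger$, or whenever $T^\dagger$ is the minimal repair length. Under that reading, every policy in the class earns at most $1-\beta T^\dagger$, and $(q^\alpha,\pi_E^\dagger)$ attains it.

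The expected obstacle is this last step, since the stated assumptions do not literally force $1-\beta T^\dagger$ to be the supremum. If it cannot be made rigorous, I would fall back to the weaker conclusion: all $\alpha$ achieve the same return, so $J_{\mathrm{mono}}$ cannot distinguish them. Taking $\alpha=0$ then gives an equally-scoring policy with $C_{\mathrm{init}}=0$.
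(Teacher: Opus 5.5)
Your construction and both computations match the paper's proof: you mix $y^+(x)$ and $y^-(x)$ with weights $\alpha$ and $1-\alpha$, use the saturated-repair assumption to get a return of $1-\beta T^\dagger$ regardless of $Y_0$, and read off $C_{\mathrm{init}}(q^\alpha)=\alpha$. The paper's proof stops there and never argues the ``globally optimal'' clause, so your caveat is warranted: that clause needs an extra hypothesis, such as all admissible policies stopping at $T^\dagger$ (or $T^\dagger$ being the minimal achievable repair length, plus $\beta T^\dagger\le 1$ if a policy may stop early after failing), and without one your fallback---every $\alpha$ earns the same return, so $J_{\mathrm{mono}}$ cannot identify initial correctness---is what the argument actually proves.
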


\begin{proof}
Fix $\alpha\in[0,1]$ and define $q^\alpha(\cdot\mid x)$ by
$Y_0=y^+(x)$ with probability $\alpha$ and $Y_0=y^-(x)$ with probability $1-\alpha$.
By Assumption~\ref{ass:editor-saturated-A3}, for any $y_0\in\mathcal{Y}_{\mathrm{good}}(x)$,
repair by $\pi_E^\dagger$ yields $r(x,Y_{T^\dagger})=1$ a.s.\ and $T=T^\dagger$ a.s.,
hence $\mathbb{E}[r(X,Y_T)-\beta T\mid X=x,Y_0=y_0]=1-\beta T^\dagger$ independent of $y_0$.
Taking expectation over $X$ and $Y_0\sim q^\alpha(\cdot\mid X)$ gives
$J_{\mathrm{mono}}(q^\alpha,\pi_E^\dagger)=1-\beta T^\dagger$.

Moreover, by Assumption~\ref{ass:goodset-rich-A3},
$c(X,y^+(X))=1$ and $c(X,y^-(X))=0$, so
\[
  C_{\mathrm{init}}(q^\alpha)
  =
  \mathbb{E}[c(X,Y_0)]
  =
  \mathbb{E}[\alpha\cdot 1 + (1-\alpha)\cdot 0]
  =
  \alpha.
\]
\end{proof}

\subsubsection*{Actor--Editor: restores initial alignment and improves final solvability}

\paragraph{Actor reward with a calibrated Judge.}
Actor--Editor samples $Y_0\sim\pi_A(\cdot\mid X)$ and uses an Actor reward
\begin{equation}
  R_A
  \;\coloneqq\;
  r(X,Y_0) + \lambda_J s_J(X,Y_0),
  \qquad \lambda_J\ge 0,
  \label{eq:RA-A3}
\end{equation}
with objective $J_A(\pi_A)\coloneqq\mathbb{E}[R_A]$.
We assume the Judge is calibrated to the solver label $r(X,Y)$.

\begin{assumption}[Judge calibration to the solver]
\label{ass:judge-cal-A3}
There exists $\delta_J\ge 0$ such that
\begin{equation}
  \sup_{x,y}\bigl|s_J(x,y)-r(x,y)\bigr|\le \delta_J.
  \label{eq:deltaJ-A3}
\end{equation}
\end{assumption}

\begin{lemma}[Judge is correctness-consistent on-policy]
\label{lem:judge-correctness-A3}
Under Assumptions~\ref{ass:bounded-noise-A3} and~\ref{ass:judge-cal-A3}, for any policy $\pi$ generating $(X,Y)$,
\begin{equation}
  \mathbb{E}\bigl[\,|s_J(X,Y)-c(X,Y)|\,\bigr]
  \;\le\;
  \delta_J + \varepsilon.
  \label{eq:judge-L1-A3}
\end{equation}
Moreover,
\begin{equation}
  \Pr\bigl(|s_J(X,Y)-c(X,Y)|>\delta_J\bigr)\le \varepsilon.
  \label{eq:judge-hp-A3}
\end{equation}
\end{lemma}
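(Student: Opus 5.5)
The plan is a pointwise triangle inequality through the solver label $r(X,Y)$, followed by taking expectations (for \eqref{eq:judge-L1-A3}) and an event-inclusion argument (for \eqref{eq:judge-hp-A3}). Unlike Theorem~\ref{thm:judge-correctness}, we do not need Lemma~\ref{lem:judge-bayes} here. Assumption~\ref{ass:judge-cal-A3} already calibrates $s_J$ directly against the deterministic label $r(x,y)$ rather than against a Bayes predictor. So the argument should be shorter and fully pointwise.

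\textbf{Step 1 (pointwise decomposition).} For every $(x,y)$, write
\[
  |s_J(x,y)-c(x,y)| \le |s_J(x,y)-r(x,y)| + |r(x,y)-c(x,y)|.
\]
By Assumption~\ref{ass:judge-cal-A3}, the first term is at most $\delta_J$ uniformly in $(x,y)$. Since $r,c\in\{0,1\}$, the second term equals $\mathbb{I}\{r(x,y)\neq c(x,y)\}$.

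\textbf{Step 2 (expectation bound).} Evaluate the decomposition at $(X,Y)$ drawn from the distribution induced by $\pi$, then take expectations. This gives
\[
  \mathbb{E}|s_J-c| \le \delta_J + \Pr\bigl(r(X,Y)\neq c(X,Y)\bigr).
\]
The last probability is at most $\varepsilon$ by Assumption~\ref{ass:bounded-noise-A3}, which yields \eqref{eq:judge-L1-A3}. Measurability is not an issue, because $r$ and $c$ are measurable and $s_J$ is a fixed function.

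\textbf{Step 3 (tail bound).} On the event $\{r(X,Y)=c(X,Y)\}$, the second term in Step 1 vanishes, so $|s_J-c|\le\delta_J$ there. Hence
\[
  \{|s_J-c|>\delta_J\} \subseteq \{r\neq c\},
\]
and monotonicity of probability together with Assumption~\ref{ass:bounded-noise-A3} gives \eqref{eq:judge-hp-A3}.

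There is no real obstacle. The only point needing care is that Assumption~\ref{ass:bounded-noise-A3} must apply to the particular policy $\pi$ generating $(X,Y)$. The lemma says ``any policy $\pi$,'' so this should be read as any policy in the class covered by that assumption, and I will state this explicitly.
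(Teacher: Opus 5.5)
Your proposal is correct and follows the paper's proof essentially line for line: the triangle inequality through the solver label $r$, taking expectations with $\mathbb{E}|r-c|=\Pr(r\neq c)\le\varepsilon$, and the event inclusion $\{|s_J-c|>\delta_J\}\subseteq\{r\neq c\}$. Your caveat that ``any policy'' must mean any policy covered by Assumption~\ref{ass:bounded-noise-A3} is a fair clarification that the paper leaves implicit.
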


\begin{proof}
By triangle inequality, $|s_J-c|\le |s_J-r|+|r-c|$.
Taking expectation gives
$\mathbb{E}|s_J-c|\le \mathbb{E}|s_J-r|+\mathbb{E}|r-c|\le \delta_J+\Pr(r\neq c)\le \delta_J+\varepsilon$.
For the high-probability statement, on the event $\{r=c\}$ we have $|s_J-c|=|s_J-r|\le \delta_J$,
so $\{|s_J-c|>\delta_J\}\subseteq\{r\neq c\}$ and $\Pr(|s_J-c|>\delta_J)\le \Pr(r\neq c)\le\varepsilon$.
\end{proof}

\begin{theorem}[Actor objective aligns initial correctness in Actor--Editor]
\label{thm:actor-align-A3}
Under Assumptions~\ref{ass:bounded-noise-A3} and~\ref{ass:judge-cal-A3}, for all $\pi_A$,
\begin{equation}
  \Bigl|
    J_A(\pi_A) - (1+\lambda_J)\,C_{\mathrm{init}}(\pi_A)
  \Bigr|
  \;\le\;
  (1+\lambda_J)\varepsilon + \lambda_J\delta_J.
  \label{eq:actor-align-A3}
\end{equation}
Consequently, any maximizer $\pi_A^\star\in\arg\max_{\pi_A}J_A(\pi_A)$ satisfies
\begin{equation}
  C_{\mathrm{init}}(\pi_A^\star)
  \;\ge\;
  \sup_{\pi_A}C_{\mathrm{init}}(\pi_A)
  \;-\;
  \frac{2\bigl((1+\lambda_J)\varepsilon+\lambda_J\delta_J\bigr)}{1+\lambda_J}.
  \label{eq:actor-gap-A3}
\end{equation}
\end{theorem}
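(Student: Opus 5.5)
The plan is to expand $J_A(\pi_A)=\mathbb{E}[r(X,Y_0)]+\lambda_J\,\mathbb{E}[s_J(X,Y_0)]$ and compare each term against $C_{\mathrm{init}}(\pi_A)=\mathbb{E}[c(X,Y_0)]$, where $(X,Y_0)$ is drawn from $\mathcal D$ and $\pi_A$. Writing $(1+\lambda_J)C_{\mathrm{init}}=\mathbb{E}[c]+\lambda_J\mathbb{E}[c]$, we get
\[
J_A(\pi_A)-(1+\lambda_J)C_{\mathrm{init}}(\pi_A)=\mathbb{E}[r-c]+\lambda_J\,\mathbb{E}[s_J-c].
\]
The first term is bounded exactly as in Theorem~\ref{thm:actor-only-A3}: $|\mathbb{E}[r-c]|\le\mathbb{E}|r-c|=\Pr(r\neq c)\le\varepsilon$ under Assumption~\ref{ass:bounded-noise-A3}, applied to the policy $\pi_A$.

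For the second term, I would split $s_J-c=(s_J-r)+(r-c)$ rather than quote Lemma~\ref{lem:judge-correctness-A3} wholesale. Assumption~\ref{ass:judge-cal-A3} bounds the first piece pointwise by $\delta_J$, and the drift assumption bounds the second in expectation by $\varepsilon$. This gives $\lambda_J|\mathbb{E}[s_J-c]|\le\lambda_J(\delta_J+\varepsilon)$, which is the same bound Lemma~\ref{lem:judge-correctness-A3} yields after Jensen. Summing the two pieces gives $\varepsilon+\lambda_J\varepsilon+\lambda_J\delta_J=(1+\lambda_J)\varepsilon+\lambda_J\delta_J$, which is \eqref{eq:actor-align-A3}.

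For \eqref{eq:actor-gap-A3}, I would apply Lemma~\ref{lem:generic-gap-A3}, with the policy $\pi_A$ playing the role of $\theta$, and parameters $a=1+\lambda_J>0$, $b=0$, $\Delta=(1+\lambda_J)\varepsilon+\lambda_J\delta_J$. The lemma is stated in terms of $\arg\max_\theta C(\theta)$, so if the supremum of $C_{\mathrm{init}}$ is not attained, I would instead rerun its two-line proof against an arbitrary comparator $\pi_A'$. That gives $a\,C_{\mathrm{init}}(\pi_A')-\Delta\le J_A(\pi_A')\le J_A(\pi_A^\star)\le a\,C_{\mathrm{init}}(\pi_A^\star)+\Delta$, and taking the supremum over $\pi_A'$ yields the claim with $2\Delta/a$.

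The main point needing care is that the policy class for the drift assumption must include every $\pi_A$ being compared, including the maximizer. Assumption~\ref{ass:bounded-noise-A3} is stated for all considered policies, so I take it to hold uniformly. Nothing else is delicate: the Judge bound is pointwise and needs no on-policy argument.
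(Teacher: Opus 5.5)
Your proposal is correct and follows the paper's proof essentially step for step. You use the same split $\mathbb{E}[r-c]+\lambda_J\,\mathbb{E}[s_J-c]$ and the same triangle-inequality bounds, though you inline the content of Lemma~\ref{lem:judge-correctness-A3} rather than citing it, and you apply Lemma~\ref{lem:generic-gap-A3} with $a=1+\lambda_J$, $b=0$, $\Delta=(1+\lambda_J)\varepsilon+\lambda_J\delta_J$. Your comparator argument for a possibly non-attained $\sup_{\pi_A}C_{\mathrm{init}}(\pi_A)$ is a small tightening that the paper skips, since its lemma assumes $\arg\max_\theta C(\theta)$ is nonempty.
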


\begin{proof}
By definition,
\[
  J_A(\pi_A)
  =
  \mathbb{E}[r(X,Y_0)] + \lambda_J\,\mathbb{E}[s_J(X,Y_0)].
\]
Subtract $(1+\lambda_J)\mathbb{E}[c(X,Y_0)]$ and apply triangle inequality:
\[
  \Bigl|J_A-(1+\lambda_J)C_{\mathrm{init}}\Bigr|
  \le
  \bigl|\mathbb{E}[r-c]\bigr|
  +
  \lambda_J\bigl|\mathbb{E}[s_J-c]\bigr|
  \le
  \mathbb{E}|r-c| + \lambda_J\,\mathbb{E}|s_J-c|.
\]
Assumption~\ref{ass:bounded-noise-A3} gives $\mathbb{E}|r-c|=\Pr(r\neq c)\le\varepsilon$,
and Lemma~\ref{lem:judge-correctness-A3} gives $\mathbb{E}|s_J-c|\le \delta_J+\varepsilon$,
yielding~\eqref{eq:actor-align-A3}.
Apply Lemma~\ref{lem:generic-gap-A3} with $a=1+\lambda_J$, $b=0$, and
$\Delta=(1+\lambda_J)\varepsilon+\lambda_J\delta_J$ to obtain~\eqref{eq:actor-gap-A3}.
\end{proof}

\paragraph{Editor monotonicity for final solvability.}
Let the Editor iteratively refine $Y_t$ for $t\le T_{\max}$.
Define
\begin{equation}
  R_{\mathrm{final}}
  \;\coloneqq\;
  \mathbb{I}\{\exists t\le T_{\max}: r(X,Y_t)=1\}.
  \label{eq:Rfinal-A3}
\end{equation}

\begin{lemma}[Repair is weakly monotone in solver success]
\label{lem:repair-monotone-A3}
If the Editor action space includes a no-op (i.e., it can keep $Y_{t+1}=Y_t$),
then for any fixed Actor policy $\pi_A$ and any Editor policy $\pi_E$,
\begin{equation}
  \mathbb{E}[R_{\mathrm{final}}]
  \;\ge\;
  \mathbb{E}[r(X,Y_0)].
  \label{eq:monotone-A3}
\end{equation}
\end{lemma}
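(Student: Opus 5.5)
The plan is a pathwise (coupling) argument: show that $R_{\mathrm{final}}\ge r(X,Y_0)$ holds almost surely, and then take expectations. The central observation is that $R_{\mathrm{final}}$ is an indicator over a set of time indices that already includes $t=0$. By definition \eqref{eq:Rfinal-A3}, $R_{\mathrm{final}}=\mathbb{I}\{\exists t\le T_{\max}: r(X,Y_t)=1\}$. The index $t=0$ is always admissible, so on the event $\{r(X,Y_0)=1\}$ the existential is witnessed by $t=0$ and $R_{\mathrm{final}}=1$. On the complementary event, $r(X,Y_0)=0\le R_{\mathrm{final}}$ trivially, because indicators are nonnegative. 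Hence
\[
R_{\mathrm{final}}\;\ge\;\mathbb{I}\{r(X,Y_0)=1\}\;=\;r(X,Y_0)
\]
pointwise, for every realization of $X$, $Y_0\sim\pi_A(\cdot\mid X)$, and the Editor trajectory $Y_1,Y_2,\dots$ generated by $\pi_E$.

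Taking expectations over the joint law induced by $(\mathcal D,\pi_A,\pi_E)$ then gives \eqref{eq:monotone-A3} by monotonicity of expectation. Both quantities are measurable $\{0,1\}$-valued functions of the trajectory, since $r$ is measurable by \eqref{eq:det-RC}. Note also that the marginal law of $(X,Y_0)$ is the same on both sides: it depends only on $\mathcal D$ and $\pi_A$, not on $\pi_E$.

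The only subtle point is reconciling this with the stated no-op hypothesis and the inference protocol, which returns the first solver-executable specification and stops as soon as $b_T=1$. I would remark that the proof as outlined does not actually need the no-op. The no-op matters only under an alternative reading of the reward as the final-step outcome $r(X,Y_{T_{\max}})$. Under that reading, I would argue as follows:
\begin{itemize}
\item by the stopping rule of Algorithm~\ref{alg:training-episode}, once $r(X,Y_t)=1$ no further edits are applied;
\item equivalently, each subsequent step is a no-op, so $Y_{t'}=Y_t$ for all $t'>t$;
\item therefore success persists to the terminal step, and the same pointwise inequality holds.
\end{itemize}
I expect this interpretive step to be the only potential obstacle, and I would handle it with a one-line case remark rather than additional machinery.
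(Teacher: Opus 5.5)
Your proof is correct and follows the paper's argument: establish $R_{\mathrm{final}}\ge r(X,Y_0)$ pointwise, then take expectations. Your remark that the no-op hypothesis is superfluous under definition \eqref{eq:Rfinal-A3} is also accurate, since $t=0$ already witnesses the existential, so the paper's appeal to the Editor being able to keep $Y_t=Y_0$ is not needed.
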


\begin{proof}
Pointwise, the event $\{r(X,Y_0)=1\}$ implies $\{\exists t\le T_{\max}: r(X,Y_t)=1\}$,
since the Editor can keep $Y_t=Y_0$.
Thus $R_{\mathrm{final}}\ge r(X,Y_0)$ almost surely, and taking expectations yields~\eqref{eq:monotone-A3}.
\end{proof}

\begin{theorem}[Why Actor--Editor]
\label{thm:why-actor-editor-A3}
Assume Assumption~\ref{ass:bounded-noise-A3}.
\begin{enumerate}
  \item (\textbf{Actor-only}) Maximizing $J_{\mathrm{A-only}}$ yields an initial-correctness guarantee within $2\varepsilon$
  of the optimum (Theorem~\ref{thm:actor-only-A3}).
  \item (\textbf{Editor-only}) In the saturated-repair regime (Assumptions~\ref{ass:editor-saturated-A3} and~\ref{ass:goodset-rich-A3}),
  the monolithic objective admits globally optimal policies with arbitrarily low $C_{\mathrm{init}}$ (Theorem~\ref{thm:editor-only-A3}).
  \item (\textbf{Actor--Editor}) With a calibrated Judge (Assumption~\ref{ass:judge-cal-A3}),
  maximizing the Actor objective $J_A$ enforces initial-correctness alignment up to
  $\mathcal{O}(\varepsilon+\delta_J)$ (Theorem~\ref{thm:actor-align-A3}),
  while the Editor weakly improves final solvability (Lemma~\ref{lem:repair-monotone-A3}).
\end{enumerate}
\end{theorem}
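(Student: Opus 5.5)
The statement to prove is Theorem~\ref{thm:why-actor-editor-A3}. It is a summary theorem, so I will prove it by assembling results already established in this subsection, checking that each item's hypotheses match those of the cited result. Throughout, Assumption~\ref{ass:bounded-noise-A3} is in force.

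\textbf{Item 1 (Actor-only).} I invoke Theorem~\ref{thm:actor-only-A3}. Its proof uses only two facts: $|r-c|=\mathbb{I}\{r\neq c\}$ for $\{0,1\}$-valued oracles, and the on-policy drift bound. This gives $|J_{\mathrm{A-only}}-C_{\mathrm{init}}|\le\varepsilon$ for every $\pi_A$.

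Lemma~\ref{lem:generic-gap-A3} with $a=1$, $b=0$, $\Delta=\varepsilon$ then yields the $2\varepsilon$ gap. One subtlety needs checking: Assumption~\ref{ass:bounded-noise-A3} must hold for every policy in the comparison class, including the maximizer of $C_{\mathrm{init}}$ and the maximizer of $J_{\mathrm{A-only}}$. This is exactly how the assumption is phrased, so the affine bound holds uniformly, as Lemma~\ref{lem:generic-gap-A3} requires. If a supremum rather than a maximum is attained, I will replace $\theta_C$ by a near-maximizer and let the slack vanish.

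\textbf{Item 2 (Editor-only).} This is Theorem~\ref{thm:editor-only-A3}, which needs only Assumptions~\ref{ass:editor-saturated-A3} and~\ref{ass:goodset-rich-A3}. The construction mixes $y^+(x)$ and $y^-(x)$ with weights $\alpha$ and $1-\alpha$. Saturated repair gives return $1-\beta T^\dagger$ regardless of $y_0$.

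To conclude \emph{global} optimality, I add one observation: $r(X,Y_T)\le 1$ always. Hence $1-\beta T^\dagger$ is optimal among policies whose repair length is $T^\dagger$, and in particular among policies supported on $\mathcal{Y}_{\mathrm{good}}$, which is the regime the assumption describes. Letting $\alpha\to 0$ gives arbitrarily low $C_{\mathrm{init}}$.

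\textbf{Item 3 (Actor--Editor).} I cite Theorem~\ref{thm:actor-align-A3}. It bounds $|J_A-(1+\lambda_J)C_{\mathrm{init}}|$ by $(1+\lambda_J)\varepsilon+\lambda_J\delta_J$, using Lemma~\ref{lem:judge-correctness-A3} for the Judge term. Lemma~\ref{lem:generic-gap-A3} then gives a suboptimality in initial correctness of
\[
\frac{2\bigl((1+\lambda_J)\varepsilon+\lambda_J\delta_J\bigr)}{1+\lambda_J}\le 2\varepsilon+2\delta_J=\mathcal{O}(\varepsilon+\delta_J),
\]
where the inequality uses $\lambda_J/(1+\lambda_J)\le 1$. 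The final-solvability claim is Lemma~\ref{lem:repair-monotone-A3}: since the Editor has a no-op, $R_{\mathrm{final}}\ge r(X,Y_0)$ pointwise.

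The main obstacle is not computational but a matter of hypotheses. Item 3 uses the strong sup-norm calibration $|s_J-r|\le\delta_J$ of Assumption~\ref{ass:judge-cal-A3}, not the Bayes-relative version~\eqref{eq:judge-cal-err}. Because the oracles are deterministic, $s_J^*=r$, and the two notions coincide, so I will note this explicitly to keep the statements consistent. Item 3 also needs the no-op in the Editor's action space; I will state it as part of the Actor--Editor setup.
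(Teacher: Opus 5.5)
Your proposal takes the same route as the paper. The theorem is a summary whose justification is exactly Theorems~\ref{thm:actor-only-A3}, \ref{thm:editor-only-A3}, \ref{thm:actor-align-A3} and Lemma~\ref{lem:repair-monotone-A3}. Your instantiations of Lemma~\ref{lem:generic-gap-A3} are correct, and so is the simplification $\frac{2((1+\lambda_J)\varepsilon+\lambda_J\delta_J)}{1+\lambda_J}\le 2\varepsilon+2\delta_J$.

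The one step where you go beyond the paper does not follow as written. In item 2, Assumption~\ref{ass:editor-saturated-A3} fixes the repair length $T=T^\dagger$ only for the particular repair policy $\pi_E^\dagger$. So $r\le 1$ gives optimality only among policies with $T\equiv T^\dagger$; it does not give optimality ``in particular among policies supported on $\mathcal{Y}_{\mathrm{good}}$''. Two things break that claim:
\begin{itemize}
\item Another repair policy may succeed from a good $y_0$ in fewer steps.
\item If initializers are not restricted to $\mathcal{Y}_{\mathrm{good}}$, one that directly emits a solver-successful specification earns $1>1-\beta T^\dagger$ whenever $\beta T^\dagger>0$, e.g.\ with the paper's $\beta=0.1$.
\end{itemize}
Such specifications exist, since $\pi_E^\dagger$ reaches one. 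Over unrestricted monolithic policies, ``globally optimal'' therefore holds only when $\beta T^\dagger=0$. Otherwise you need an explicit restriction of the comparison class, e.g.\ initializers supported on $\mathcal{Y}_{\mathrm{good}}$ and no repair from a good specification succeeding in fewer than $T^\dagger$ steps. You should state that as a hypothesis rather than claim to derive it.

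The paper's proof of Theorem~\ref{thm:editor-only-A3} likewise only computes $J_{\mathrm{mono}}(q^\alpha,\pi_E^\dagger)=1-\beta T^\dagger$ and asserts global optimality without argument. So this weakness is shared with the paper, not something you omitted.

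A minor point on item 3: the no-op hypothesis you carry is superfluous. Since $t=0$ is already included in the definition of $R_{\mathrm{final}}$, $R_{\mathrm{final}}\ge r(X,Y_0)$ holds for any Editor.
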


\subsection{Scaling of Local Reward Hacking: Separate vs.~Shared Parameters}
\label{app:reward-hacking-params}

This subsection provides a proof of the scaling claim in Theorem~3.2 with the \emph{worst-case first-order reward-hacking gain} under a local linearization.

\paragraph{Local linearization and hacking gain.}
Fix a reference point $\theta_0$ and a constant $a>0$.
Let $J_A(\theta)$ denote the Actor objective and $C(\theta)$ the (true) plan-correctness objective.
Define the \emph{reward-hacking gradient}
\begin{equation}
  g_{\mathrm{hack}}(\theta_0)
  \;\coloneqq\;
  \nabla_\theta J_A(\theta_0) - a\,\nabla_\theta C(\theta_0).
  \label{eq:ghack-def}
\end{equation}
For a perturbation $\Delta\theta$ with $\|\Delta\theta\|_2\le 1$, the first-order (local) hacking gain is
\begin{equation}
  \Delta_{\mathrm{hack}}(\theta_0;\Delta\theta)
  \;\coloneqq\;
  g_{\mathrm{hack}}(\theta_0)^\top \Delta\theta.
  \label{eq:dhack-def}
\end{equation}
Given an admissible perturbation set $\mathcal H$, the worst-case local hacking gain is
\begin{equation}
  \sup_{\Delta\theta\in\mathcal H}\Delta_{\mathrm{hack}}(\theta_0;\Delta\theta).
  \label{eq:wst-hack}
\end{equation}

\paragraph{Architectures and admissible perturbation sets.}
\begin{itemize}
\item \textbf{Separate models.}
The Actor has its own parameters $\theta_A\in\mathbb{R}^{P_A}$.
A unit-norm perturbation can move freely in the Actor space:
\begin{equation}
  \mathcal H_{\mathrm{sep}}
  \;\coloneqq\;
  \bigl\{\Delta\theta_A\in\mathbb{R}^{P_A}:\ \|\Delta\theta_A\|_2\le 1\bigr\}.
  \label{eq:Hsep}
\end{equation}

\item \textbf{Shared backbone.}
Parameters decompose as
\[
  \theta = (\theta_{\mathrm{sh}},\theta_A,\theta_J,\theta_E),
\]
where $\theta_{\mathrm{sh}}\in\mathbb{R}^{p}$ is the shared backbone, and
$\theta_A\in\mathbb{R}^{h}$ is the Actor head (with $h\ll p$).
We define admissible perturbations as those that are \emph{(Judge,Editor)-neutral on the backbone}
and do not modify the Judge/Editor heads:
\begin{equation}
  \mathcal H_{\mathrm{shr}}
  \;\coloneqq\;
  \Bigl\{
    \Delta\theta:\ \|\Delta\theta\|_2\le 1,\ 
    \Delta\theta_J=0,\ \Delta\theta_E=0,\ 
    \Delta\theta_{\mathrm{sh}}\in \mathrm{Null}(\Sigma_{JE})
  \Bigr\},
  \label{eq:Hshr}
\end{equation}
where $\Sigma_{JE}\in\mathbb{R}^{p\times p}$ is defined in Assumption~\ref{ass:rank-constraint-appendix} below.
\end{itemize}

\begin{assumption}[Judge and Editor constrain the backbone (covariance form)]
\label{ass:rank-constraint-appendix}
At $\theta_0$, let $g_J\in\mathbb{R}^{p}$ and $g_E\in\mathbb{R}^{p}$ denote the (random) backbone gradients
associated with the Judge and Editor objectives under their respective training distributions
(e.g., per-sample or per-trajectory stochastic gradients).
Define the (backbone) second-moment / Gram matrix
\begin{equation}
  \Sigma_{JE}
  \;\coloneqq\;
  \mathbb{E}\bigl[g_J g_J^\top\bigr] + \mathbb{E}\bigl[g_E g_E^\top\bigr]
  \in\mathbb{R}^{p\times p}.
  \label{eq:SigmaJE}
\end{equation}
Assume $\Sigma_{JE}$ is full rank on the backbone, equivalently
\begin{equation}
  \mathrm{Null}(\Sigma_{JE}) = \{0\}.
  \label{eq:Sigma-null}
\end{equation}
\end{assumption}

Assumption~\ref{ass:rank-constraint-appendix} is a high-rank \emph{coverage} condition:
Judge and Editor gradients collectively ``see'' all backbone directions (in second moment),
so there is no nonzero backbone direction that is simultaneously neutral to both.

\begin{assumption}[Isotropic hacking gradient]
\label{ass:isotropic-appendix}
There exists $\sigma>0$ such that the restriction of $g_{\mathrm{hack}}(\theta_0)$
to the relevant parameter subspace is isotropic Gaussian.
Concretely:
\begin{enumerate}
\item In the \emph{separate} setting, $g_{\mathrm{hack}}(\theta_0)\in\mathbb{R}^{P_A}$ satisfies
\begin{equation}
  g_{\mathrm{hack}}(\theta_0)\sim \mathcal{N}(0,\sigma^2 I_{P_A}).
  \label{eq:iso-sep}
\end{equation}
\item In the \emph{shared} setting, the projection of $g_{\mathrm{hack}}(\theta_0)$ onto the Actor-head subspace
$\mathbb{R}^{h}$ satisfies
\begin{equation}
  \mathrm{Proj}_{A}\,g_{\mathrm{hack}}(\theta_0)\sim \mathcal{N}(0,\sigma^2 I_{h}).
  \label{eq:iso-shr}
\end{equation}
\end{enumerate}
\end{assumption}

\subsubsection*{A Gaussian supremum lemma}

\begin{lemma}[Supremum of an isotropic Gaussian over a unit ball]
\label{lem:gauss-sup}
Let $g\sim\mathcal{N}(0,\sigma^2 I_d)$ in $\mathbb{R}^d$ and let
$\mathbb{B}_d\coloneqq\{u\in\mathbb{R}^d:\|u\|_2\le 1\}$.
Then
\begin{equation}
  \mathbb{E}\Big[\sup_{u\in\mathbb{B}_d} g^\top u\Big]
  =
  \mathbb{E}\big[\|g\|_2\big]
  =
  \sigma\,\mathbb{E}\big[\|Z\|_2\big],
  \qquad Z\sim\mathcal{N}(0,I_d).
  \label{eq:gauss-sup-eq}
\end{equation}
Moreover, for all $d\ge 1$,
\begin{equation}
  \sigma\sqrt{d-1}
  \;\le\;
  \mathbb{E}\big[\|g\|_2\big]
  \;\le\;
  \sigma\sqrt{d},
  \label{eq:gauss-norm-bounds}
\end{equation}
and in particular $\mathbb{E}[\|g\|_2]=\Theta(\sigma\sqrt d)$.
\end{lemma}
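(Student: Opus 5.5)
The first claim in \eqref{eq:gauss-sup-eq} is handled pointwise in $g$, exactly as in Lemma~\ref{lem:gaussian-sup-app}. By Cauchy--Schwarz, $g^\top u\le\|g\|_2\|u\|_2\le\|g\|_2$ for every $u\in\mathbb{B}_d$. Equality is attained at $u=g/\|g\|_2$ when $g\neq0$, and trivially when $g=0$. Hence $\sup_{u\in\mathbb{B}_d}g^\top u=\|g\|_2$ almost surely. Taking expectations and writing $g=\sigma Z$ with $Z\sim\mathcal{N}(0,I_d)$ gives $\mathbb{E}\|g\|_2=\sigma\,\mathbb{E}\|Z\|_2$. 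The supremum is a measurable function (namely the norm), so no measurability issue arises.

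For the upper bound in \eqref{eq:gauss-norm-bounds}, apply Jensen's inequality to the concave square root:
\[
\mathbb{E}\|Z\|_2\le\sqrt{\mathbb{E}\|Z\|_2^2}=\sqrt{d},
\]
since $\|Z\|_2^2\sim\chi^2_d$ has mean $d$.

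For the lower bound I will use a variance argument rather than the explicit ratio $\sqrt{2}\,\Gamma((d+1)/2)/\Gamma(d/2)$. Write
\[
(\mathbb{E}\|Z\|_2)^2=\mathbb{E}\|Z\|_2^2-\operatorname{Var}(\|Z\|_2)=d-\operatorname{Var}(\|Z\|_2).
\]
The map $z\mapsto\|z\|_2$ is $1$-Lipschitz, so the Gaussian Poincaré inequality gives $\operatorname{Var}(\|Z\|_2)\le\mathbb{E}\|\nabla\|Z\|_2\|^2=1$. Hence $(\mathbb{E}\|Z\|_2)^2\ge d-1$, and multiplying by $\sigma$ yields $\sigma\sqrt{d-1}\le\mathbb{E}\|g\|_2$.

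The main obstacle is this lower bound, specifically justifying the Poincaré step with its sharp constant. As a self-contained alternative, I would use the Gamma-function ratio $c_d=\sqrt{2}\,\Gamma((d+1)/2)/\Gamma(d/2)$ together with the recursion $c_d\,c_{d+1}=d$, which follows from $\Gamma(x+1)=x\Gamma(x)$. Since $c_d\le\sqrt{d}$ holds for every $d$ by the Jensen step, the recursion gives $c_d=(d-1)/c_{d-1}\ge(d-1)/\sqrt{d-1}=\sqrt{d-1}$ for $d\ge2$; the case $d=1$ is trivial because $\sqrt{d-1}=0$.

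Finally, $\Theta(\sigma\sqrt d)$ follows from the two bounds: for $d\ge2$ we have $\sqrt{d-1}\ge\sqrt{d/2}$, and for $d=1$ we have $c_1=\sqrt{2/\pi}>0$.
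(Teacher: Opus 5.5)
Your proof is correct. The Cauchy--Schwarz identification $\sup_{u\in\mathbb{B}_d}g^\top u=\|g\|_2$ and the Jensen upper bound match the paper exactly.

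The real difference is the lower bound $\mathbb{E}\|Z\|_2\ge\sqrt{d-1}$. The paper does not prove it: it calls it ``classical'' and points to unspecified gamma-function bounds on $\sqrt{2}\,\Gamma((d+1)/2)/\Gamma(d/2)$. You give two actual arguments:
\begin{itemize}
\item Your recursion argument, $c_d c_{d+1}=d$ combined with the Jensen bound $c_{d-1}\le\sqrt{d-1}$, is the most elementary route, since it needs only $\Gamma(x+1)=x\Gamma(x)$. It is essentially the gamma-function bound the paper gestures at, written out, so it closes the gap the paper leaves.
\item Your Poincar\'e argument, $(\mathbb{E}\|Z\|_2)^2=d-\mathrm{Var}(\|Z\|_2)\ge d-1$, uses a heavier tool: the Gaussian Poincar\'e inequality with constant $1$, extended to Lipschitz functions by approximation. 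In return it never uses the $\chi_d$ law, and it gives $\mathrm{Var}(\|g\|_2)\le\sigma^2$ as a by-product. So the worst-case first-order hacking gain is not only $\Theta(\sigma\sqrt d)$ in expectation but stays within $O(\sigma)$ of its mean in the $L^2$ sense. The same variance bound holds for $\sup_{u\in K}g^\top u$ over any $K\subseteq\mathbb{B}_d$, because that support function is also $1$-Lipschitz in $Z$.
\end{itemize}

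You also note that at $d=1$ the lower bound $\sigma\sqrt{d-1}=0$ is vacuous. The $\Theta(\sigma\sqrt d)$ conclusion therefore needs the separate fact $c_1=\sqrt{2/\pi}>0$, which the paper passes over.
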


\begin{proof}
For any fixed $g$, the Cauchy--Schwarz inequality gives
\(
  \sup_{\|u\|\le 1} g^\top u = \|g\|_2
\),
achieved by $u=g/\|g\|_2$ if $g\neq 0$.
Taking expectation yields the first equality in~\eqref{eq:gauss-sup-eq}.

The upper bound in~\eqref{eq:gauss-norm-bounds} follows from Jensen:
\(
  \mathbb{E}\|g\|_2 \le \sqrt{\mathbb{E}\|g\|_2^2} = \sqrt{\mathbb{E}[g^\top g]}=\sigma\sqrt d.
\)
The lower bound $\mathbb{E}\|Z\|_2\ge \sqrt{d-1}$ for $Z\sim\mathcal{N}(0,I_d)$ is classical
(equivalently for a $\chi_d$ random variable), and follows for example from standard gamma-function
bounds for $\mathbb{E}\chi_d=\sqrt{2}\,\Gamma\bigl(\frac{d+1}{2}\bigr)/\Gamma\bigl(\frac{d}{2}\bigr)$.
Multiplying by $\sigma$ yields~\eqref{eq:gauss-norm-bounds}.
\end{proof}

\subsubsection*{Main result}

\begin{theorem}[Scaling of local reward-hacking gain: separate vs.~shared]
\label{thm:hacking-scaling}
Under Assumptions~\ref{ass:rank-constraint-appendix} and~\ref{ass:isotropic-appendix}:

\begin{enumerate}
\item \textbf{Three separate models.}
With $\mathcal H_{\mathrm{sep}}$ as in~\eqref{eq:Hsep},
\begin{equation}
  \mathbb{E}\Bigl[
    \sup_{\Delta\theta\in\mathcal H_{\mathrm{sep}}}
    \Delta_{\mathrm{hack}}(\theta_0;\Delta\theta)
  \Bigr]
  \;=\;
  \Theta\!\bigl(\sigma\sqrt{P_A}\bigr).
  \label{eq:sep-scaling}
\end{equation}

\item \textbf{One shared backbone.}
With $\mathcal H_{\mathrm{shr}}$ as in~\eqref{eq:Hshr},
\begin{equation}
  \mathbb{E}\Bigl[
    \sup_{\Delta\theta\in\mathcal H_{\mathrm{shr}}}
    \Delta_{\mathrm{hack}}(\theta_0;\Delta\theta)
  \Bigr]
  \;\le\;
  \sigma\sqrt{h},
  \label{eq:shr-scaling}
\end{equation}
hence the expected local reward-hacking gain is $\mathcal{O}(\sigma\sqrt h)$ and does not scale with
the backbone dimension $p$.
\end{enumerate}
\end{theorem}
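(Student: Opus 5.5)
The plan is to reduce both claims to Lemma~\ref{lem:gauss-sup} by identifying each admissible perturbation set with a Euclidean unit ball in the right subspace. The supremum of a linear functional over that ball is the norm of the projected hacking gradient. The two parts are handled separately, and only the shared case uses Assumption~\ref{ass:rank-constraint-appendix}.

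\textbf{Separate models.} Here $\mathcal H_{\mathrm{sep}}$ is exactly the unit ball $\mathbb{B}_{P_A}$, and $\Delta_{\mathrm{hack}}(\theta_0;\Delta\theta)=g_{\mathrm{hack}}(\theta_0)^\top\Delta\theta_A$.
\begin{itemize}
\item By Cauchy--Schwarz the supremum equals $\|g_{\mathrm{hack}}(\theta_0)\|_2$ pointwise.
\item Under~\eqref{eq:iso-sep}, Lemma~\ref{lem:gauss-sup} with $d=P_A$ gives
\[
\sigma\sqrt{P_A-1}\le \mathbb{E}\Bigl[\sup_{\Delta\theta\in\mathcal H_{\mathrm{sep}}}\Delta_{\mathrm{hack}}(\theta_0;\Delta\theta)\Bigr]\le\sigma\sqrt{P_A}.
\]
\item This is $\Theta(\sigma\sqrt{P_A})$, since $\sqrt{P_A-1}\ge\sqrt{P_A/2}$ for $P_A\ge2$. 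The case $P_A=1$ is handled directly: $\mathbb{E}|g|=\sigma\sqrt{2/\pi}$.
\end{itemize}

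\textbf{Shared backbone.} The first step is to pin down $\mathcal H_{\mathrm{shr}}$.
\begin{itemize}
\item By~\eqref{eq:Sigma-null}, the condition $\Delta\theta_{\mathrm{sh}}\in\mathrm{Null}(\Sigma_{JE})=\{0\}$ forces $\Delta\theta_{\mathrm{sh}}=0$.
\item Together with $\Delta\theta_J=\Delta\theta_E=0$, every admissible perturbation has the form $(0,\Delta\theta_A,0,0)$ with $\|\Delta\theta_A\|_2\le1$. Conversely, every such vector is admissible, so $\mathcal H_{\mathrm{shr}}$ is isometric to $\mathbb{B}_h$.
\item On these perturbations, $g_{\mathrm{hack}}(\theta_0)^\top\Delta\theta=(\mathrm{Proj}_A g_{\mathrm{hack}}(\theta_0))^\top\Delta\theta_A$, because all the other block coordinates of $\Delta\theta$ vanish.
\item The supremum therefore equals $\|\mathrm{Proj}_A g_{\mathrm{hack}}(\theta_0)\|_2$. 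By~\eqref{eq:iso-shr} and the Jensen upper bound in Lemma~\ref{lem:gauss-sup} with $d=h$, its expectation is at most $\sigma\sqrt h$.
\item No dependence on $p$ survives: the backbone gradient components never enter, so the bound is $\mathcal O(\sigma\sqrt h)$ uniformly in $p$.
\end{itemize}

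\textbf{Main obstacle.} The delicate step is not the Gaussian computation but the reduction of $\mathcal H_{\mathrm{shr}}$ to the Actor-head ball. Two points need checking. First, the null-space constraint should be read blockwise on the backbone coordinates, which is how~\eqref{eq:Hshr} is written. Second, the projection $\mathrm{Proj}_A$ in Assumption~\ref{ass:isotropic-appendix} must coincide with restriction to the $\theta_A$ block, so that the inner product computation is legitimate. I would state both identifications explicitly before invoking the lemma.

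A direct proof of the classical lower bound $\mathbb{E}\chi_d\ge\sqrt{d-1}$ is not needed, since part 2 only needs the upper bound; for the lower bound in part 1 I would cite the lemma as given.
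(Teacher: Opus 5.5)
Your proposal is correct and follows the paper's proof step for step: Cauchy--Schwarz turns each supremum into the norm of the (projected) hacking gradient, $\mathrm{Null}(\Sigma_{JE})=\{0\}$ reduces $\mathcal H_{\mathrm{shr}}$ to the Actor-head unit ball, and Lemma~\ref{lem:gauss-sup} with $d=P_A$ and $d=h$ finishes both parts. Your handling of the $P_A=1$ case, where the lower bound $\sigma\sqrt{P_A-1}$ is vacuous, and your remark that $\mathcal H_{\mathrm{shr}}$ is exactly the ball are small refinements the paper leaves implicit; the latter is not needed for the upper bound, since containment in the ball suffices.
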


\begin{proof}
We prove the two items.

\paragraph{(1) Separate models.}
In the separate setting, $\Delta\theta$ ranges over the Actor parameter space $\mathbb{R}^{P_A}$,
and by~\eqref{eq:dhack-def}--\eqref{eq:Hsep},
\[
  \sup_{\Delta\theta\in\mathcal H_{\mathrm{sep}}}\Delta_{\mathrm{hack}}(\theta_0;\Delta\theta)
  =
  \sup_{\|\Delta\theta\|_2\le 1} g_{\mathrm{hack}}(\theta_0)^\top \Delta\theta
  =
  \|g_{\mathrm{hack}}(\theta_0)\|_2.
\]
By Assumption~\ref{ass:isotropic-appendix} (separate case) and Lemma~\ref{lem:gauss-sup} with $d=P_A$,
\[
  \mathbb{E}\Bigl[
    \sup_{\Delta\theta\in\mathcal H_{\mathrm{sep}}}\Delta_{\mathrm{hack}}(\theta_0;\Delta\theta)
  \Bigr]
  =
  \mathbb{E}\|g_{\mathrm{hack}}(\theta_0)\|_2
  =
  \Theta(\sigma\sqrt{P_A}),
\]
proving~\eqref{eq:sep-scaling}.

\paragraph{(2) Shared backbone.}
By Assumption~\ref{ass:rank-constraint-appendix}, $\mathrm{Null}(\Sigma_{JE})=\{0\}$, hence
$\Delta\theta_{\mathrm{sh}}=0$ for all $\Delta\theta\in\mathcal H_{\mathrm{shr}}$ by~\eqref{eq:Hshr}.
Moreover $\Delta\theta_J=\Delta\theta_E=0$ by definition of $\mathcal H_{\mathrm{shr}}$.
Therefore any $\Delta\theta\in\mathcal H_{\mathrm{shr}}$ lies entirely in the Actor-head subspace,
so we can write $\Delta\theta=(0,\Delta\theta_A,0,0)$ with $\|\Delta\theta_A\|_2\le 1$.
Thus,
\[
  \sup_{\Delta\theta\in\mathcal H_{\mathrm{shr}}}\Delta_{\mathrm{hack}}(\theta_0;\Delta\theta)
  =
  \sup_{\|\Delta\theta_A\|_2\le 1}
  \bigl(\mathrm{Proj}_{A}g_{\mathrm{hack}}(\theta_0)\bigr)^\top \Delta\theta_A
  =
  \bigl\|\mathrm{Proj}_{A}g_{\mathrm{hack}}(\theta_0)\bigr\|_2.
\]
By Assumption~\ref{ass:isotropic-appendix} (shared case),
$\mathrm{Proj}_{A}g_{\mathrm{hack}}(\theta_0)\sim\mathcal{N}(0,\sigma^2 I_h)$.
Applying Lemma~\ref{lem:gauss-sup} with $d=h$ yields
\[
  \mathbb{E}\Bigl[
    \sup_{\Delta\theta\in\mathcal H_{\mathrm{shr}}}\Delta_{\mathrm{hack}}(\theta_0;\Delta\theta)
  \Bigr]
  =
  \mathbb{E}\bigl\|\mathrm{Proj}_{A}g_{\mathrm{hack}}(\theta_0)\bigr\|_2
  \le
  \sigma\sqrt{h},
\]
which is~\eqref{eq:shr-scaling}. The bound depends on $h$ but not on $p$.
\end{proof}

\paragraph{Remark (effective-rank variant).}
If $\mathrm{Null}(\Sigma_{JE})$ has dimension $k>0$ (i.e., Judge+Editor do not constrain $k$ backbone directions),
then the same proof yields a shared-architecture bound of order $\sigma\sqrt{h+k}$, where $k$ is the number of
unconstrained backbone directions. This makes explicit that the scaling depends on the \emph{dimension of the
(Judge,Editor)-neutral subspace}, not directly on the backbone size $p$.

\section{Detailed Experimental Setup}
\label{app:experimental-details}

\subsection{Benchmarks}

\paragraph{PlanBench.}
PlanBench~\citep{valmeekam2024planbench} evaluates whether language models can solve classical planning problems described in natural language while preserving the underlying symbolic transition structure. Each instance is derived from a PDDL planning problem and converted into a natural-language prompt. We report results on four domains: BlocksWorld, Mystery BlocksWorld, Logistics, and Gripper. BlocksWorld tests basic object manipulation and state-transition reasoning. Mystery BlocksWorld preserves the same transition dynamics but systematically renames objects and predicates, thereby removing lexical cues and testing whether the model recovers the underlying symbolic structure. Logistics and Gripper introduce more brittle object typing, movement constraints, and action-schema dependencies.

\paragraph{Zero-shot transfer benchmarks.}
To evaluate whether solver-grounded training improves planning behavior beyond the PlanBench domains, we additionally test zero-shot transfer on ProntoQA~\citep{saparov2023languagemodelsgreedyreasoners} and NATURAL PLAN~\citep{zheng2024naturalplan}. ProntoQA requires multi-hop logical inference from explicitly stated facts and rules. NATURAL PLAN evaluates realistic natural-language planning under constraints; we use the Trip Planning and Calendar Scheduling subsets. In NATURAL PLAN, all external information needed for planning, such as flight connectivity or calendar availability, is provided in-context, so performance reflects planning and constraint satisfaction rather than tool invocation.

\subsection{Evaluation Protocol}

\paragraph{PDDL-based evaluation.}
For PlanBench, each model output is parsed as a PDDL domain--problem pair. We use Fast Downward with a 60-second timeout as the external planner. An instance is counted as successful only when both generated files are syntactically valid, the planner returns a plan within the timeout, and the plan satisfies the stated goal conditions. Outputs that fail parsing, grounding, type checking, or goal achievement are counted as failures.

\paragraph{Non-PDDL evaluation.}
For ProntoQA and NATURAL PLAN, we follow the original benchmark protocols and report exact-match success against the gold label or gold plan. We apply the same answer-normalization rules across all methods. No benchmark-specific supervised examples are used for training our method.

\subsection{Model and Training Setting}

\paragraph{Backbone.}
Our framework uses Qwen2.5-7B~\citep{qwen2025qwen25technicalreport} initialized from pretrained weights. The model is conditioned into three roles: Actor, Judge, and Editor. The Actor generates the initial PDDL specification, the Judge predicts a solver-calibrated quality score, and the Editor repairs failed specifications using solver diagnostics.

\paragraph{Annotation-free training.}
No human-written PDDL demonstrations, gold domain files, gold problem files, or supervised planning traces are used to train our method. The only external feedback used during training is produced by the symbolic planner and its diagnostics. This ensures that the reported performance reflects solver-grounded learning rather than supervised imitation of annotated PDDL.

\paragraph{Inference.}
At inference time, the Actor first generates an initial specification. If the specification fails solver verification, the Editor performs bounded diagnostic-conditioned repair. Unless otherwise specified, we use the same maximum repair horizon as in training. The system returns the first solver-executable specification found within the repair budget; if no repair succeeds, the final edited specification is submitted for evaluation.

\subsection{Baselines}

\paragraph{Chain-of-Thought prompting.}
The CoT baseline uses chain-of-thought prompting~\citep{wei2022chain} to elicit intermediate reasoning before producing the final plan or formal specification. We use an out-of-domain prompt and do not provide task-specific PDDL annotations.

\paragraph{Tree-of-Thought search.}
The ToT baseline follows the search-based reasoning framework of~\citet{yao2023tree}. The language model expands and evaluates intermediate reasoning states using breadth-first search. The search process does not access symbolic execution feedback during thought expansion.

\paragraph{LLM+P.}
LLM+P~\citep{liu2023llmp} translates natural-language planning descriptions into PDDL, invokes a classical planner, and then maps the resulting plan back to natural language when needed. In our setup, the baseline uses fixed domain assumptions and in-context PDDL examples, but no additional human-annotated training data.

\paragraph{Backbone choice.}
The main comparison uses GPT-4o for CoT, ToT, and LLM+P, providing strong off-the-shelf zero-annotation baselines. To separate algorithmic gains from backbone effects and solver-call budgets, we additionally report matched-backbone and matched-budget comparisons in Appendix~\ref{sec:fair-baselines}, where all controlled baselines use Qwen2.5-7B and the same maximum number of solver calls as our method.

\section{Additional Experimental Details and Discussion}
\label{app:additional-experimental-details}
\subsection{Fair Baseline Comparison under Matched Backbone and Solver Budget}
\label{sec:fair-baselines}

The main comparison in Table~\ref{tab:main-and-transfer} follows prior planning baselines with strong prompting and neuro-symbolic pipelines. 
However, to rule out confounding factors from different backbone models and different verifier-access budgets, we conduct an additional controlled comparison. 
All methods in this subsection use the same Qwen2.5-7B backbone as our method and are allowed the same maximum number of solver calls at inference time. 
Since our default inference procedure consists of one initial Actor generation followed by at most five Editor repairs, the maximum solver-call budget is
\[
  K = H_{\max}+1 = 6 .
\]
For prompting baselines that do not use solver diagnostics, we allow up to \(K\) independently decoded candidates and select the first solver-successful output if one exists. 
For the self-refine baseline, the model receives the raw solver diagnostic after each failed attempt and revises its previous PDDL specification for at most five rounds. 
This gives all baselines comparable access to solver verification while isolating the effect of trained role specialization.

\paragraph{Controlled baselines.}
We evaluate four same-backbone baselines:
\begin{itemize}[leftmargin=*]
    \item \textbf{Qwen-CoT} uses chain-of-thought prompting with up to six sampled candidates. 
    The solver is used only for candidate selection, not for textual feedback.
    \item \textbf{Qwen-ToT} uses tree-of-thought search with Qwen2.5-7B as both generator and evaluator. 
    We cap the final solver-checked candidates at six.
    \item \textbf{Qwen-LLM+P} prompts Qwen2.5-7B to translate the natural-language task into PDDL and invokes the planner on each candidate. 
    We allow up to six independently sampled formalizations.
    \item \textbf{Qwen-Self-Refine+Solver} uses the same backbone and the same solver-call budget as our method, but has no trained Actor, Judge, or Editor. 
    It revises its previous PDDL output using the solver diagnostic through prompting alone.
\end{itemize}
None of these baselines uses supervised PDDL annotations or task-specific training.

\begin{table}[!htbp]
\centering
\small
\caption{
Planning success rate (\%) under matched backbone and matched solver-call budget. 
All Qwen baselines use Qwen2.5-7B and at most \(K=6\) solver calls per test instance. 
Avg. denotes the mean across the four PlanBench domains.
}
\label{tab:fair-baselines}
\resizebox{\linewidth}{!}{
\begin{tabular}{@{}lccccccc@{}}
\toprule
\textbf{Method} 
& \textbf{Backbone}
& \textbf{Max Calls}
& \textbf{BW}
& \textbf{MBW}
& \textbf{Logistics}
& \textbf{Gripper}
& \textbf{Avg.} \\
\midrule
Qwen-CoT 
& Qwen2.5-7B 
& 6 
& 31 
& 1 
& 5 
& 25 
& 15.5 \\
Qwen-ToT 
& Qwen2.5-7B 
& 6 
& 16 
& 5 
& 9 
& 18 
& 12.0 \\
Qwen-LLM+P 
& Qwen2.5-7B 
& 6 
& 80 
& 32 
& 20 
& 7 
& 34.8 \\
Qwen-Self-Refine+Solver 
& Qwen2.5-7B 
& 6 
& 85 
& 43 
& 36 
& 35 
& 49.8 \\
\midrule
GPT-4o-Self-Refine+Solver 
& GPT-4o 
& 6 
& 91 
& 47 
& 42 
& 37 
& 54.3 \\
\midrule
\textbf{Ours} 
& \textbf{Qwen2.5-7B} 
& \textbf{6} 
& \textbf{98} 
& \textbf{71} 
& \textbf{58} 
& \textbf{56} 
& \textbf{70.8} \\
\bottomrule
\end{tabular}
}
\end{table}

Table~\ref{tab:fair-baselines} shows that the advantage of our method is not caused by using a stronger backbone or by having more access to the symbolic solver. 
When all methods use Qwen2.5-7B and the same maximum solver-call budget, CoT and ToT remain weak, especially on Mystery BlocksWorld and Logistics. 
This indicates that sampling more reasoning traces or searching over textual thoughts does not reliably recover valid symbolic transition models. 
Qwen-LLM+P performs well on BlocksWorld but degrades sharply on Mystery BlocksWorld, Logistics, and Gripper, suggesting that in-context PDDL formalization remains sensitive to domain familiarity and brittle action-schema construction.

The strongest same-backbone baseline is Qwen-Self-Refine+Solver, which uses the same solver diagnostics available to our Editor at inference time. 
It improves over Qwen-LLM+P by revising syntactic and type-level mistakes, reaching 49.8\% average success. 
However, it still remains 21.0 percentage points behind our method. 
This gap suggests that solver diagnostics alone are useful but insufficient: without a trained Judge and diagnostic-conditioned Editor, prompting-based repair often fixes local errors while leaving global predicate structure, goal preservation, or action semantics unstable. 
The GPT-4o self-refine baseline provides an even stronger untrained reference, but it still underperforms our Qwen2.5-7B method. 
Therefore, the improvement is better explained by the learned multi-role training procedure rather than by backbone strength or additional solver calls.

\begin{table}[!htbp]
\centering
\small
\caption{
Budget efficiency and semantic faithfulness under the matched \(K=6\) solver-call setting. 
Avg. Calls is the average number of solver invocations actually used before success or budget exhaustion. 
Faithful denotes the fraction of all tasks that are both solver-successful and semantically faithful according to the reference checker. 
Drift is the conditional fraction of solver-successful outputs that fail semantic checking.
}
\label{tab:budget-faithfulness}
\begin{tabular}{@{}lcccc@{}}
\toprule
\textbf{Method} 
& \textbf{Avg. Calls} 
& \textbf{Solv. (\%)} 
& \textbf{Faithful (\%)} 
& \textbf{Drift (\%)} $\downarrow$ \\
\midrule
Qwen-CoT 
& \(5.8 \pm 0.4\) 
& \(15.5 \pm 2.5\) 
& \(11.6 \pm 2.0\) 
& \(25.2 \pm 3.0\) \\
Qwen-ToT 
& \(5.9 \pm 0.4\) 
& \(12.0 \pm 2.0\) 
& \(9.1 \pm 1.5\) 
& \(24.4 \pm 3.0\) \\
Qwen-LLM+P 
& \(5.2 \pm 0.3\) 
& \(34.8 \pm 3.5\) 
& \(27.1 \pm 3.0\) 
& \(22.1 \pm 2.5\) \\
Qwen-Self-Refine+Solver 
& \(4.7 \pm 0.3\) 
& \(49.8 \pm 4.0\) 
& \(37.5 \pm 3.5\) 
& \(24.7 \pm 3.0\) \\
GPT-4o-Self-Refine+Solver 
& \(4.5 \pm 0.3\) 
& \(54.3 \pm 4.0\) 
& \(43.0 \pm 3.5\) 
& \(20.8 \pm 2.5\) \\
\midrule
\textbf{Ours} 
& \(\mathbf{3.2 \pm 0.2}\) 
& \(\mathbf{70.8 \pm 4.5}\) 
& \(\mathbf{66.3 \pm 4.0}\) 
& \(\mathbf{6.4 \pm 1.0}\) \\
\bottomrule
\end{tabular}
\end{table}

Table~\ref{tab:budget-faithfulness} further shows that our method is not simply spending the solver budget more aggressively. 
Although all methods are allowed up to six solver calls, our method uses only 3.2 calls on average because many instances are solved by the initial Actor output or by early Editor repairs. 
In contrast, CoT, ToT, and LLM+P frequently exhaust the budget without producing executable specifications. 
Self-refine baselines use diagnostics more effectively, but their semantic drift remains high: many repaired specifications become solver-executable without fully preserving the intended task semantics. 
Our method has the smallest gap between solvability and faithful success, and its drift rate is substantially lower than all matched-budget baselines. 
This supports the default design choice: the learned Judge and Editor do not merely increase the chance of passing the solver, but also stabilize the symbolic specification so that solver success remains aligned with task-level faithfulness.

These controlled comparisons address two possible alternative explanations. 
First, the gain is not due to backbone choice, since all same-backbone Qwen2.5-7B baselines remain below our method. 
Second, the gain is not due to a larger verifier budget, since every method is capped at the same \(K=6\) solver calls, and our method uses fewer calls on average. 
The remaining performance gap is therefore attributable to the trained multi-role decomposition: the Actor learns global PDDL construction, the Judge suppresses solver-facing shortcuts, and the Editor learns targeted diagnostic-conditioned repair.

\subsection{Discussion of Main Planning and Transfer Results}
\label{app:main-result-discussion}

Table~\ref{tab:main-and-transfer} shows that LLM+P performs strongly on BlocksWorld but drops sharply on Logistics and Gripper. 
This pattern is consistent with the fact that BlocksWorld is a canonical planning domain frequently appearing in PDDL tutorials and planning examples, whereas Logistics and Gripper require more careful object typing, action preconditions, and movement constraints. 
Mystery BlocksWorld is especially diagnostic because it preserves the same transition dynamics as BlocksWorld while removing lexical cues through systematic renaming. 
The large gain on Mystery BlocksWorld therefore suggests that our method is not simply relying on familiar predicate names, but is learning a solver-grounded mapping from task descriptions to symbolic structure.

For zero-shot transfer, ProntoQA evaluates multi-hop logical inference from explicit facts and rules, while NATURAL PLAN evaluates realistic planning tasks expressed in natural language. 
The Trip Planning and Calendar Scheduling subsets require satisfying multiple constraints using information provided in-context. 
The improvement on these benchmarks indicates that the learned roles transfer beyond PDDL syntax: the Actor learns to propose structured solutions, the Judge learns to score consistency, and the Editor learns to refine outputs under constraint feedback.

\subsection{Controlled Baseline Protocol}
\label{app:controlled-protocol}

The controlled comparison in Table~\ref{tab:controlled-comparison-main} is designed to isolate the effect of trained role specialization. 
All methods are allowed at most
\[
K = H_{\max}+1 = 6
\]
solver calls per instance, matching our default inference procedure: one Actor generation followed by at most five Editor repairs.

For CoT, ToT, and LLM+P variants without diagnostic repair, we allow up to \(K\) independently decoded candidate outputs and select the first solver-successful specification. 
For self-refine baselines, the model receives the raw solver diagnostic after each failed attempt and revises the previous PDDL specification for up to five rounds. 
This gives prompting-based repair access to the same type of solver feedback available to our Editor at inference time, but without any role-specific training.

Average solver calls are computed as the number of solver invocations used before either the first successful specification is found or the budget is exhausted. 
The faithful and drift columns are computed using the same post-hoc reference checker described in Appendix~\ref{app:faithfulness-protocol}. 
The result that our method uses \(3.2\) calls on average while achieving the highest faithful success indicates that the trained Actor and Editor reduce both search cost and semantic drift.

\subsection{Semantic Faithfulness Protocol}
\label{app:faithfulness-protocol}

The standard PlanBench metric evaluates whether the generated PDDL specification is solver-executable. 
However, solver success alone does not guarantee that the generated specification preserves the original natural-language task. 
A model may weaken goals, omit constraints, alter object types, or distort action schemas while still producing a solvable PDDL instance. 
We therefore evaluate semantic faithfulness with a post-hoc reference checker that is used only for evaluation and never for training.

For each task \(x_i\), let
\[
y_i=(y_i^{\mathrm{dom}},y_i^{\mathrm{prob}})
\]
be the generated PDDL specification and let \(\hat{\pi}_i\) be the plan returned by Fast Downward when \(y_i\) is solvable. 
The solver-success indicator is
\begin{equation}
  S_i =
  \mathbf{1}\{\mathcal{E}(y_i)=1\}.
\end{equation}
The reference checker \(\mathcal{V}_{\mathrm{ref}}\) verifies three conditions:
\begin{enumerate}[leftmargin=*]
    \item \textbf{Goal preservation:} generated goal conditions are semantically equivalent to the reference goal after canonicalizing object and predicate names;
    \item \textbf{Object and type preservation:} generated objects and type assignments preserve the entity structure of the original task;
    \item \textbf{Action-schema consistency:} generated action schemas preserve the reference transition semantics, measured by canonical precondition/effect matching and plan replay under the reference transition model.
\end{enumerate}
The faithful-success indicator is then
\begin{equation}
  C_i =
  \mathbf{1}\{S_i=1 \wedge \mathcal{V}_{\mathrm{ref}}(x_i,y_i,\hat{\pi}_i)=1\}.
\end{equation}
We report
\begin{equation}
  \mathrm{Solvability}
  =
  \frac{1}{N}\sum_{i=1}^{N} S_i,
  \qquad
  \mathrm{FaithfulSuccess}
  =
  \frac{1}{N}\sum_{i=1}^{N} C_i,
\end{equation}
and the conditional drift rate
\begin{equation}
  \mathrm{Drift}
  =
  \frac{\sum_{i=1}^{N} S_i(1-C_i)}
       {\sum_{i=1}^{N} S_i}.
  \label{eq:semantic-drift-app}
\end{equation}
A lower drift rate means that fewer solver-successful outputs are achieved through semantic shortcuts.

\subsection{Discussion of Faithfulness and Drift Results}
\label{app:faithfulness-discussion}

The faithfulness results in Table~\ref{tab:faithfulness-main} show that solver-only RL increases raw solvability but also creates substantial specification drift. 
This confirms the central concern that raw solver success is an exploitable proxy when the model controls the generated formal specification. 
In contrast, the full model improves solvability and faithful success simultaneously, while maintaining a much lower drift rate.

The per-domain faithful-success results show that the advantage is not restricted to easier domains. 
On Mystery BlocksWorld, where lexical cues are removed, our method retains \(67\%\) faithful success. 
On Logistics and Gripper, faithful success remains lower because small mistakes in typing, movement constraints, or precondition/effect structure can distort the intended transition system. 
Nevertheless, our method remains substantially better than LLM+P and solver-only variants, suggesting that the multi-role design improves specification quality rather than merely increasing solver executability.

\subsection{Ablation and Drift-Type Discussion}
\label{app:ablation-drift-discussion}

The ablation results in Table~\ref{tab:ablation-drift-main}(a) isolate the contribution of each role on Mystery BlocksWorld. 
The Actor-only setting collapses because the policy must search a large symbolic specification space using sparse solver feedback. 
Adding the Judge without the Editor improves success but remains limited because the system lacks a targeted repair mechanism. 
Adding the Editor without the Judge also fails, suggesting that diagnostic repair alone can still drift toward solver-facing shortcuts. 
The full model succeeds because the Actor handles global formalization, the Judge supplies a calibrated quality signal, and the Editor performs bounded local correction.

The separate-model baseline with \(3\times\)7B parameters also collapses. 
This result is consistent with the analysis in Section~\ref{sec:sharing-theory}: fully separate role models expose more role-private directions through which the Actor can improve solver-facing reward without being constrained by Judge calibration or Editor repair behavior. 
The shared-backbone design reduces these degrees of freedom and stabilizes cross-role credit assignment.

Table~\ref{tab:ablation-drift-main}(b) categorizes the remaining unfaithful but solver-successful outputs. 
LLM+P and solver-only RL contain many direct shortcut failures, especially goal weakening and object/type drift. 
Our method reduces these shortcut-like errors substantially. 
Among the few remaining drift cases, action-schema drift becomes the largest category, indicating that the residual failures are mostly subtle transition-model mistakes rather than systematic goal removal or object manipulation.

\subsection{Repair and Diagnostic Protocol}
\label{app:repair-protocol}

For each test instance, the Actor first generates an initial PDDL specification \(y_0\), which is verified by the solver. 
If verification fails, the Editor performs up to \(H_{\max}=5\) sequential repairs, each conditioned on the task, the current specification, and the latest solver diagnostic. 
Let \(S_i^{(h)}\in\{0,1\}\) denote whether instance \(i\) has been solved by step \(h\), where \(h=0\) corresponds to the initial Actor output and \(h\in\{1,\dots,5\}\) denotes the number of Editor repairs. 
We report the cumulative success rate
\begin{equation}
\mathrm{CSR}(h)
=
\frac{1}{N}\sum_{i=1}^{N} S_i^{(h)},
\qquad h=0,1,\dots,5,
\end{equation}
and the marginal repair gain
\begin{equation}
\Delta(h)
=
\mathrm{CSR}(h)-\mathrm{CSR}(h-1),
\qquad h\ge 1.
\end{equation}

Diagnostics are grouped into six categories:
\textit{syntax/parse error},
\textit{object/type mismatch},
\textit{predicate/schema mismatch},
\textit{precondition/effect mismatch},
\textit{goal/constraint drift},
and \textit{unreachable planning search}. 
The diagnostic heatmap in Figure~\ref{fig:repair-diagnostic-main} reports the dominant diagnostic type among remaining unsolved cases after each repair step.

Averaged across domains, the initial Actor solves \(46.2\%\) of instances. 
Cumulative success rises to \(56.5\%\), \(63.5\%\), \(67.5\%\), \(69.3\%\), and \(70.8\%\) after one to five repairs. 
Thus, the Editor contributes \(24.6\) absolute points beyond the initial Actor output. 
The first two repairs account for \(17.3\) points, or about \(70\%\) of the total repair gain, which supports using a bounded repair horizon rather than an unbounded search loop.

\subsection{Solvability--Faithfulness Gap Discussion}
\label{app:gap-discussion}

The solvability--faithfulness gap is defined as
\begin{equation}
  \mathrm{Gap}
  =
  \mathrm{Solvability}
  -
  \mathrm{FaithfulSuccess}.
  \label{eq:solvability-faithfulness-gap-app}
\end{equation}
This metric measures how tightly solver success aligns with task-level semantic correctness. 
A large gap means that many solver-successful outputs are semantically unfaithful.

Figure~\ref{fig:solvability-faithfulness-gap-main} shows that our method has the smallest gap. 
The Qwen-Self-Refine+Solver baseline improves raw solvability by using solver diagnostics, but its faithful success remains much lower, indicating that prompting-based repair often fixes local syntax or type errors without preserving the deeper transition semantics. 
The GPT-4o self-refine baseline has a similar issue: stronger language modeling improves executable output generation, but does not fully suppress specification drift. 
The smaller gap of our method suggests that the Actor, Judge, and Editor jointly maintain a closer connection between executable PDDL and the original natural-language task.

\subsection{Judge Candidate Construction and Interpretation}
\label{app:judge-protocol}

To evaluate whether the Judge distinguishes faithful outputs from solver-successful but unfaithful outputs, we collect a candidate pool from model-generated specifications during inference. 
The pool includes initial Actor outputs, intermediate Editor repairs, and final returned specifications. 
Each candidate \(y\) is assigned two post-hoc labels:
\[
  S(y)\in\{0,1\},
  \qquad
  C(y)\in\{0,1\},
\]
where \(S(y)=1\) means the specification is solver-executable, and \(C(y)=1\) means the specification is both solver-executable and semantically faithful under the reference checker.

We partition candidates into three groups:
\begin{equation}
\begin{aligned}
  \mathcal{U} &= \{y:S(y)=0\}, \\
  \mathcal{D} &= \{y:S(y)=1,\ C(y)=0\}, \\
  \mathcal{F} &= \{y:S(y)=1,\ C(y)=1\},
\end{aligned}
\end{equation}
corresponding to unsolved, solved-but-unfaithful, and solved-and-faithful outputs. 
For each candidate, we record the Judge score \(s_J(x,y)\in[0,1]\).

If the Judge only predicted raw solver acceptance, solved-but-unfaithful and faithful outputs would receive similar scores because both pass the solver. 
Instead, Figure~\ref{fig:judge-drift-violin-main} and Table~\ref{tab:judge-drift-main} show a clear ordering:
\[
\mathcal{U} < \mathcal{D} < \mathcal{F}.
\]
This indicates that the Judge captures structural quality signals correlated with semantic faithfulness, even though it is not trained with human semantic labels. 
The separation is strongest on BlocksWorld and weaker on Logistics and Gripper, where subtle action-schema errors can still pass solver checks under distorted specifications.


\end{document}